\newcommand{\sms}[1]{\{\!\vert #1 \vert\!\}}
\newcommand{\multiset}[1]{\{\!\vert #1 \vert\!\}}
\newcommand{\set}[1]{\mathcal{#1}}
\newcommand{\straight}[1]{{\upshape #1}}
\def\tsc#1{\csdef{#1}{\textsc{\lowercase{#1}}\xspace}}
\newtheorem{Thm}{Theorem}[subsection]
\newtheorem{NThm}{Theorem}[section]
\newtheorem{Lem}[Thm]{Lemma}
\newtheorem{Prop}[Thm]{Proposition}
\theoremstyle{definition} 
\newtheorem{Def}[Thm]{Definition}
\newtheorem{NDef}[NThm]{Definition}
\newtheorem{NRem}[NThm]{Remark}
\affiliation[1]{Graphs in Artificial Intelligence and Neural Networks (GAIN), University of Kassel, Germany}
\affiliation[2]{Siena Artificial Intelligence Lab (SAILab), University of Siena, Italy}
\begin{document}
\let\WriteBookmarks\relax
\def\floatpagepagefraction{1}
\def\textpagefraction{.001}

\shorttitle{Weisfeiler--Lehmann goes Dynamic}

\shortauthors{Beddar-Wiesing et al.}

\title [mode = title]{Weisfeiler--Lehman goes Dynamic: An Analysis of the Expressive Power of Graph Neural Networks for Attributed and Dynamic Graphs}                      

\tnotetext[1]{This work was partially supported by the Ministry of Education and Research Germany (BMBF, grant number 01IS20047A) and partially by INdAM GNCS group.}



%

\author[1]{Silvia Beddar-Wiesing}[orcid=0000-0002-2984-2119]

\cormark[1]
\tnotemark[1]
\fnmark[1]

\ead{s.beddarwiesing@uni-kassel.de}

\ead[url]{www.gain-group.de}

\credit{Conceptualization, Methodology, Formal analysis, Investigation, Writing - Original Draft, Writing - Review \& Editing, Visualization}

\author[2]{Giuseppe Alessio D'Inverno}[orcid=0000-0001-7367-4354]
\ead{giuseppealessio.d@student.unisi.it}
\ead[URL]{https://sailab.diism.unisi.it/people/giuseppe-alessio-dinverno/}
\credit{Conceptualization, Methodology, Formal analysis, Investigation, Software Implementation, Writing - Original Draft, Writing - Review \& Editing}
\fnref{fn1}

\author[2]{Caterina Graziani}[orcid=0000-0002-7606-9405]
\fnref{fn1}
\ead{caterina.graziani@student.unisi.it}
\ead[URL]{https://sailab.diism.unisi.it/people/caterina-graziani/}
\credit{Conceptualization, Methodology, Formal analysis, Investigation, Writing - Original Draft, Writing - Review \& Editing}

\author[2]{Veronica Lachi}[orcid=0000-0002-6947-7304]
\ead{veronica.lachi@student.unisi.it}
\ead[URL]{https://sailab.diism.unisi.it/people/veronica-lachi/}
\credit{Conceptualization, Methodology, Formal analysis, Investigation, Writing - Original Draft, Writing - Review \& Editing}
\fnref{fn1}

\author[1]{Alice Moallemy-Oureh}[orcid=0000-0001-7912-0969]
\ead{amoallemy@uni-kassel.de}
\ead[URL]{www.gain-group.de}
\credit{Conceptualization, Methodology, Formal analysis, Investigation, Writing - Original Draft, Writing - Review \& Editing}
\fnref{fn1}

\author[2]{Franco Scarselli}[orcid=0000-0003-1307-0772]
\ead{franco@diism.unisi.it}
\ead[URL]{https://sailab.diism.unisi.it/people/franco-scarselli/}
\credit{Conceptualization, Formal Analysis,  Writing - Review \& Editing, Project Administration, Supervision}

\author[1]{Josephine Maria Thomas}
\ead{jthomas@uni-kassel.de}
\ead[URL]{www.gain-group.de}
\credit{Writing - Review \& Editing, Supervision, Funding Acquisition}

\fntext[fn1]{These authors contributed equally.}
\cortext[cor1]{Corresponding author}

\begin{abstract}
Graph Neural Networks (GNNs) are a large class of relational models for graph processing. Recent theoretical studies on the expressive power of GNNs have focused on two issues.
On the one hand, it has been proven that  GNNs are as powerful as the Weisfeiler-Lehman test (1--WL) in their ability to distinguish graphs. Moreover, it has been shown that the equivalence enforced by 1-WL equals unfolding equivalence.
On the other hand,  GNNs turned out to be universal approximators on graphs modulo the constraints enforced by 1--WL/unfolding equivalence. However, these results only apply to Static Attributed Undirected Homogeneous Graphs (SAUHG) with node attributes. In contrast, real-life applications often involve a much larger variety of graph types. In this paper, we conduct a theoretical analysis of the expressive power of GNNs for two other graph domains that are particularly interesting in practical applications, namely dynamic graphs  and SAUGHs with edge attributes. Dynamic graphs are widely used in modern applications; hence, the study of the expressive capability of GNNs in this domain is essential for practical reasons and, in addition, it requires a new analyzing approach due to the difference in the architecture of dynamic GNNs compared to static ones. On the other hand, the examination of SAUHGs is of particular relevance since they act as a standard form for all graph types: it has been shown that all graph types can be transformed without loss of information
to SAUHGs with both attributes on nodes and edges. This paper considers generic GNN models and appropriate 1--WL tests for those domains. Then, the known results on the expressive power of GNNs are extended to the mentioned domains: it is proven that GNNs have the same capability as the 1--WL test, the 1--WL equivalence equals unfolding equivalence and that GNNs are universal approximators modulo 1--WL/unfolding equivalence.
Moreover, the proof of the approximation capability is mostly constructive and allows us to deduce hints on the architecture of GNNs that can achieve the desired approximation.
\end{abstract}


\begin{keywords}
Graph Neural Network \sep Dynamic Graphs \sep GNN Expressivity \sep Unfolding Trees \sep Weisfeiler-Lehman Test
\end{keywords}


\maketitle

\section{Introduction}
Graph data is becoming pervasive in many application domains, such as biology, physics, and social network analysis \cite{skardinga2021foundations,kazemi_survey_dyn_gnn}. Graphs are handy for complex data since they allow for naturally encoding information about entities, their links, and their attributes.
In modern applications, several different types of graphs are commonly used and possibly combined: graphs can be homogeneous or heterogeneous, directed or undirected, have attributes on nodes and/or edges, and be static or dynamic hyper- or multigraphs \cite{thomas2021graph}.
Considering the diversity of graph types, it has recently been shown that Static Attributed Undirected Homogenous Graphs (SAUHGs) with both attributes on nodes and edges can act as a standard form for graph representation, namely that all the common graph types can be transformed into those SAUHGs without losing their encoded information~\cite{thomas2021graph}. 

In the field of graph learning, Graph Neural Networks (GNNs) have become a prominent class of models used to process graphs and address different learning tasks directly. For this purpose, most GNN models adopt a computational scheme based on a local aggregation mechanism. It recursively updates the local information of a node stored in an attribute vector by aggregating the attributes of neighboring nodes. After several iterations, the attributes of a node capture the local structural information received from its $k$--hop neighborhood. At the end of the iterative process, the obtained node attributes can address different graph-related tasks by applying a suitable readout function.

The Graph Neural Network model of \cite{jour_scarselli_2009}, which is called Original GNN (OGNN) in this paper, was the first model capable of facing both node/edge and graph-focused tasks utilizing suitable aggregation and readout functions, respectively. In the current research, a large number of new applications and models have been proposed, including Neural Networks for graphs \cite{micheli2009neural}, Gated Sequence Graph Neural Networks~\cite{ li2015gated}, Spectral Networks~\cite{bruna2013}, Graph Convolutional Neural Networks~\cite{kipf2016}, GraphSAGE~\cite{hamilton2017inductive}, Graph Attention Networks~\cite{GAT}, and Graph Networks~\cite{Battaglia2018}. 
Another extension of GNNs consisted of the proposal of several new models capable of dealing with dynamic graphs~\cite{kazemi_survey_dyn_gnn}, which can be used, e.g., to classify sequences of graphs, sequences of nodes in a dynamic graph or to predict the appearance of an edge in a dynamic graph. 

\medskip

Recently, a great effort has been dedicated to studying the theoretical properties of GNNs \cite{jegelka2022theory}. Such a trend is motivated by the attempt to derive the foundational knowledge required to design reasonable solutions efficiently for many possible applications of GNNs \cite{zhou2020graph}. A particular interest lies in the study of the expressive power of GNN models since, in many application domains, the performance of a GNN depends on its capability to distinguish different graphs. For example, in Bioinformatics, the properties of a chemical molecule may depend on the presence or absence of small substructures; in social network analysis, the count of triangles allows us to evaluate the presence and the maturity of communities. Similar examples exist in dynamic domains: the distinction of substructures contributes significantly to the successful analysis of molecular conformations \cite{luo2021predicting}; in studies of the evolution of social networks \cite{deng2019learning}, substructures in the form of contextual knowledge can help to improve performance.



Formally, a central result on the expressive power of GNNs has shown that GNNs are at most as powerful as the Weisfeiler–Lehman graph isomorphism test (1–WL) \cite{leman1968,wltest2011,xu2018powerful}.
The 1--WL test iteratively divides graphs into groups of possibly isomorphic\footnote{It is worth noting that the 1--WL test is inconclusive since there exist pairs of graphs that the test recognizes as isomorphic even if they are not.} graphs using a local aggregation schema. Therefore, the 1--WL test exactly defines the classes of graphs that GNNs can recognize as non-isomorphic.

Furthermore, it has been proven that the equivalence classes induced by the 1--WL test are equivalent to the ones obtained from the unfolding trees of nodes on two graphs \cite{krebs2015universal} \cite{d2021unifying}, and hence, 1--WL and the unfolding equivalences can be used interchangeably.
An unfolding tree rooted at a node is constructed by starting at the root node and unrolling the graph along the neighboring nodes until it reaches a certain depth. If the unfolding trees of two nodes are equal in the limit, the nodes are called unfolding tree equivalent\footnote{Currently, the concept underlying unfolding trees is widely used to study the GNN expressiveness, even if they are mostly called {\em computation graphs}~\cite{garg2020generalization}.}. Fig.~\ref{fig:diagram} visualizes the relations between the 1--WL, the unfolding tree equivalences, and the GNN expressiveness.

\begin{figure}
    \centering
    \includegraphics[width=\linewidth]{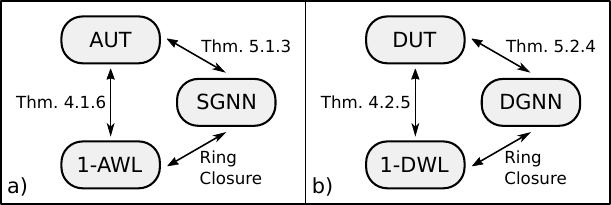}
    \caption{a) In Thm.~\ref{ue=wl}, 
    we prove the equivalence  of the attributed unfolding tree equivalence (AUT)
    and the attributed 1--WL equivalence (1--AWL) for SAUHGs. Afterward in Thm.~\ref{theorem_universal_approx_sauhg}, we show a result on
    the approximation capability of static GNNs for SAUHGs (SGNN) using the AUT equivalence. b) Analogously to the attributed case, we show similar results for Dynamic GNNs (DGNN) 
    which can be used on temporal graphs.}
    \label{fig:diagram}
\end{figure}

\medskip

Another research goal of the expressive power of GNNs is to study the GNN approximation capability. Formally, it has been proven in~\cite{scarselli2008computational} that OGNNs can approximate in probability, up to any degree of precision, any measurable function $\tau(\mathbf{G},v)\rightarrow \mathbb{R}^m$  that respects the unfolding equivalence. Such a result has been recently extended to a large class of GNNs~\cite{d2021unifying} called message-passing GNNs, including most contemporary architectures.

Despite the availability of the mentioned results on the expressive power of GNNs, their application is still limited to undirected static graphs with attributes on nodes. This limitation is particularly restrictive since modern applications usually involve more complex data structures, such as heterogeneous, directed, dynamic graphs and multigraphs.
In particular, the ability to process dynamic graphs is progressively gaining significance in many fields such as social network analysis~\cite{deng2019learning}, recommender systems~\cite{xu2020inductive,rossi2020temporal}, traffic forecasting~\cite{yu2017spatio} and knowledge graph completion~\cite{trivedi2017know,wu2020temp}. Several surveys discuss the usage of dynamic graphs in other application domains~\cite{skardinga2021foundations,thomas2022graph,barros2021survey,longa2023graph,xue2022dynamic}.

Although GNNs are considered universal approximators on the extended domains, it is uncertain which GNN architectures contribute to such universality. Moreover, an open question is how the definition of the 1--WL test has to be modified to cope with novel data structures and whether the universal results fail for particular graph types. 

In this paper,  we propose a study on the expressive power of GNNs for two domains of particular interest, namely dynamic graphs and static attributed undirected homogeneous graphs (SAUHGs) with node and edge attributes. On the one hand, dynamic graphs are interesting from a practical and a theoretical point of view and are used in several application domains \cite{kazemi_survey_dyn_gnn}.
Moreover, dynamic GNN models are structurally different from GNNs for static graphs, and the results and methodology required to analyze their expressive power cannot directly be deduced from existing literature.
On the other hand, SAUHGs with node and edge attributes are interesting because, as mentioned above,  they act as a standard form for several other types of graphs that can all be transformed to SAUGHs~\cite{thomas2021graph}.

\medskip
First, we introduce appropriate versions of the 1--WL test and the unfolding equivalence to construct the fundamental theory for the domains of SAUHGs and dynamic graphs and discuss their relation afterward. 
Then, we consider generic GNN models that can operate on both domains and prove their universal approximation capability modulo the aforementioned 1--WL/unfolding equivalences. More precisely, the main contributions of this paper are as follows.

\begin{itemize}
    \item We present new versions of the 1--WL test and of the unfolding equivalence appropriate for dynamic graphs and SAUHGs with node and edge attributes, and we show that they induce the same equivalences on nodes. Such a result makes it possible to use them interchangeably to study the expressiveness of GNNs.
    \item We show that generic GNN models for dynamic graphs and SAUHGs with node and edge attributes are capable of approximating, in probability and up to any precision, any measurable function on graphs that respects the 1-WL/unfolding equivalence.
    \item The result on approximation capability holds for graphs with unconstrained attributes of reals and target
    functions. Thus, most of the domains used in practical applications are included.
    \item Moreover, the proof is based on space partitioning, which  allows us to deduce information about the GNN architecture that can achieve the desired approximation.
    \item We validate our theoretical results conducting an experimental validation. Our setups show that 1) sufficiently powerful DGNNs can  approximate well dynamic systems that preserve the unfolding equivalence  and 2) using non--universal architectures can lead to poor performances. 

\end{itemize}
The rest of the paper is organized as follows. Section~\ref{related_work}  illustrates the related literature. In section~\ref{section_preliminaries}, the notation used throughout the paper is described, and the main definitions are introduced. In section~\ref{section_equivalences}, we introduce the 1--WL test and unfolding equivalences suitable for dynamic graphs and SAUHGs with node and edge attributes and prove that those equivalences are equal. In section~\ref{section_approximation}, the approximation theorems for GNNs on both graph types are presented. We support our theoretical findings by setting up synthetic experiments in \ref{section_experiments}. Finally, section~\ref{section_conclusion} includes our conclusions and future matter of research. 
All of the proofs are collected in the appendix \ref{section_appendix}.
\section{Related Work}\label{related_work}
In the seminal work \cite{xu2018powerful}, it has been proven that standard GNNs have the same expressive
power as  the 1--WL test.  To overcome such a  limitation, new GNN models and variants of the WL test have been proposed. For example,
in \cite{you2021identity}, a model is introduced where node identities are directly injected into the aggregate functions.
In \cite{morris_2019_WL_go_neural}, the k-WL test has been taken into account to develop a more powerful GNN model, given its greater capability to distinguish non-isomorphic graphs.
In \cite{bodnar2021weisfeiler}, a simplicial-wise Weisfeiler-Lehman test is introduced, and it is proven to be strictly more powerful than the 1-WL test and not less powerful than the 3-WL test; a similar test (called cellular WL test), proposed in \cite{bodnar2021weisfeilerb}, is proven to be more powerful than the simplicial version. 
Nevertheless, all these tests do not deal with edge-attributed graphs and not with dynamic graphs. \\

In \cite{barcelo2022weisfeiler}, the authors consider a GNN model for multi-relational graphs where edges are labeled with types: it is proven that such a model has the same computation capability as a corresponding WL-test. The result is similar to our result on SAUGH, but the way we aggregate the message on the edges is different.  Additionally, our work extends \cite{barcelo2022weisfeiler} from several points of view: the approximation capability of GNNs is studied, a relationship between the 1-WL test and unfolding trees is established, and edge attributes include more general vectors of reals. Moreover, all those studies are extended to dynamic graphs. 


Moreover, the WL test mechanism applied to GNNs has also been studied within the paradigm of \textit{unfolding trees} \cite{sato2020survey}, ~\cite{zhang2021nested}. An equivalence between the two concepts has been established by \cite{krebs2015universal}, but it is limited to static graphs without edge attributes.
 
Some studies have been dedicated  to the approximation and generalization properties of Graph Neural Networks. In \cite{scarselli2008computational}, the authors proved  the universal approximation properties of the original Graph Neural Network model, \textit{modulo} the unfolding equivalence. The universal approximation is shown for GNNs with random node initialization in \cite{abboud2020}, while, in~\cite{xu2018powerful}, GNNs are shown to be able to encode any graph with countable input features. Moreover, the authors of \cite{dasoulas2019coloring} proved that GNNs, provided with a colored local iterative procedure (CLIP), can be universal approximators for functions on graphs with node attributes.
The approximation property has also been extended to Folklore Graph Neural Networks in~\cite{maron2019provably} and Linear Graph Neural Networks, and general GNNs in \cite{azizian2020expressive,maron2018invariant}, both in the invariant and equivariant case. Recently, the universal approximation theorem has been proved for modern message-passing graph neural networks by \cite{d2021unifying} giving a hint on the properties of the network architecture (e.g., the number of layers, the characterization of the aggregation function). 
A relation between the graph diameter and the
computational power of GNNs has been established  in~\cite{loukas2019graph}, where the GNNs are assimilated to the so--called LOCAL models and it is proved that a GNN with a number of layers larger than the diameter of the graph can compute any Turing function of the graph. Nevertheless,  no information on the aggregation function characterization is given.

Despite the availability of universal approximation theorems for static graphs with node attributes, the theory lacks results about the approximation capability of other types of graphs, such as dynamic graphs and graphs with attributes on both nodes and edges. 
Therefore, this paper aims to extend the results of the expressive power of GNNs for dynamic graphs and SAUHGs with node and edge attributes.

\section{Notation and Preliminaries}\label{section_preliminaries}
Before extending the work about the expressive power of GNNs to dynamic and edge-attributed graph domains, 
the mathematical notation and preliminary definitions are given in this chapter. In this paper, only finite graphs are under consideration.
\medskip
\begin{center}
\tablehead{\hline\textbf{Notation} &\\\hline}\label{tab_notation}
\tabletail{\hline}
\begin{tabular}{|l|l|}
	\hline
	$\mathbb{N}$ & natural numbers\\
	$\mathbb{N}_0$ & natural numbers starting at $0$\\
	$\mathbb{R}$ & real numbers\\
	$\mathbb{R}^k$ & $\mathbb{R}$ vector space of dimension $k$\\
	$\mathbb{N}^k$ & $\mathbb{N}$ vector space of dimension $k$\\
	$\mathbb{Z}^k$ & $\mathbb{Z}$ vector space of dimension $k$\\
	\hline
	$\mathbb{0}=(0,\ldots,0)^\top$ & zero vector of corresponding size\\
	$|a|$ & absolute value of a real $a$ \\
	$\|\cdot \|$ & norm on $\mathbb{R}$ \\
	$\|\cdot \|_p$ & $p$-norm on $\mathbb{R}$ \\
	$\|\cdot \|_\infty$ & $\infty$-norm on $\mathbb{R}$ \\
	$|M|$ & number of elements of a set $M$\\ 
	$[n], \,n\in \mathbb{N}$ & sequence $1,2,\ldots, n$ \\
	$[n]_{0}, \,n\in \mathbb{N}_{0}$ & sequence $0,1,\ldots, n$ \\
	\hline
	$\emptyset$ & empty set\\
	$\perp$ & undefined; non-existent element\\
	$\{\cdot\}$ & set \\
	$\{\!\vert\cdot\vert\!\}$ & multiset, i.e. set allowing \\
	& multiple appearances of entries\\
	$(x_i)_{i \in I}$ & vector of elements $x_i$ \\
	& for indices in set $I$\\
	$[v|w]$ & stacking of vectors $v,w$ \\
	\hline
	$\wedge$ & conjunction \\ 
	$\cup$ & union of two (multi)sets\\
	$\subseteq$ & sub(multi)set \\
	$\subsetneq$ & proper sub(multi)set \\
	$\times $ & factor set of two sets\\
	\hline
\end{tabular}
\end{center}

\vspace{0.2cm}

\noindent

The following definition introduces a static, node/edge attributed, undirected and homogeneous graph called SAUHG. The reason for defining and using it comes from \cite{thomas2021graph}. Here, it is shown that every graph type is bijectively transformable into each other.
Therefore, SAUHGs will be used as a standard form for all structurally different graph types as directed or undirected simple graphs, hypergraphs, multigraphs, heterogeneous or attributed graphs, and any composition of those. For a detailed introduction, see \cite{thomas2021graph}.
\begin{NDef}[Static Attributed Undirected Homogeneous Graphs]\label{def_sauhg}
$G'$ is called \textbf{static, node/edge attributed, undirected, homogeneous graph (SAUHG)} if $G'=(\mathcal{V}', \mathcal{E}', \alpha', \omega')$, with $\mathcal{V}' \subset \mathbb{N}$ is a finite set of nodes, ${\mathcal{E}' \subset \{\{u,v\} \mid \forall\, u,v \in \mathcal{V}' \}}$ is a finite set of edges and node and edge attributes are determined by the mappings ${\alpha'\, :\, \mathcal{V}' \rightarrow A,\ \omega'\, :\, \mathcal{E}'\rightarrow B}$ that map into the arbitrary node attribute set $A$, and edge attribute set $B$.
The domain of SAUGHs will be denoted as $\mathcal{G}'$.
\end{NDef}

\begin{NRem}{(Attribute sets)}\label{rem_attribute_sets_equal}
    In the above definition of the SAUHG, the node and edge attribute sets $A$ and $B$ can be arbitrary. However, without loss of generality, we can assume that the attribute sets are equal because they can be arbitrarily extended to $A' = A \cup B$. Additionally, any arbitrary attribute set $A'$ can be embedded into the $k$-dimensional vector space $\mathbb{R}^k$. Since the attribute sets in general do not matter for the theories in this paper and to support a better readability, in what follows we consider ${\alpha'\, :\, \mathcal{V}' \rightarrow \mathbb{R}^k,\ \omega\, :\, \mathcal{E}'\rightarrow \mathbb{R}^k}$ for every SAUHG.
\end{NRem}

\noindent All the aforementioned graph types are static, but temporal changes play an essential role in learning on graphs representing real-world applications; thus, dynamic graphs are defined in the following. 
In particular, the dynamic graph definition here consists of a discrete-time representation. 


\begin{NDef}[Dynamic Graph]\label{def_discrete_dynamic_graph}
Let $I=[0, \ldots, l]\subsetneq  \mathbb{N}_0$ be a set of timesteps. 
Then a \textbf{(discrete) dynamic graph} can be considered as a vector of static graph snapshots, i.e., $G = (G_t)_{t\in I}$, where $G_{t} = (\mathcal{V}_t, \mathcal{E}_t) \; \forall t \in I$. 
Furthermore, 
\begin{align*}
    \alpha_{v}(t) := \alpha(v,t), &\quad v \in \mathcal{V}_t 
    \quad \text{and} \quad 
\end{align*}
\begin{align*}
    \omega_{\{u,v\}}(t) := \omega(\{u,v\},t), &\quad \{u,v\} \in \mathcal{E}_t
\end{align*}

where $\alpha : \mathcal{V} \times I  \rightarrow A\;$  and $\omega : \mathcal{E} \times I \rightarrow B \;$ define the vector of \textbf{dynamic node/edge attributes}. Further, let $\mathcal{V}:= \bigcup_{t\in I}\mathcal{V}_t$ and $\mathcal{E}:= \bigcup_{t\in I}\mathcal{E}_t$ the total node and edge set of the dynamic graph.


In particular, when a node $v$ does not exist, its attributes and neighborhood are empty, respectively. 
Moreover, let 
\begin{align*}
    \Omega_{ne_v}(t) &= \left({\;{\omega_{\{v, x_1\}}(t), \ldots,  \omega_{\{v, x_{|ne_v(t)|}\}}(t)\; }  \;}\right)_{t\in I} 
\end{align*}
be the sequence of \textbf{dynamic edge attributes of the neighborhood}  corresponding node at each timestep. Note that, as in Rem.~\ref{rem_attribute_sets_equal}, in what follows, we assume the attribute sets to be equal and corresponding to $\mathbb{R}^k$.

\end{NDef}



\begin{figure*}[!ht]
    \centering
    \includegraphics [width=.8\linewidth]{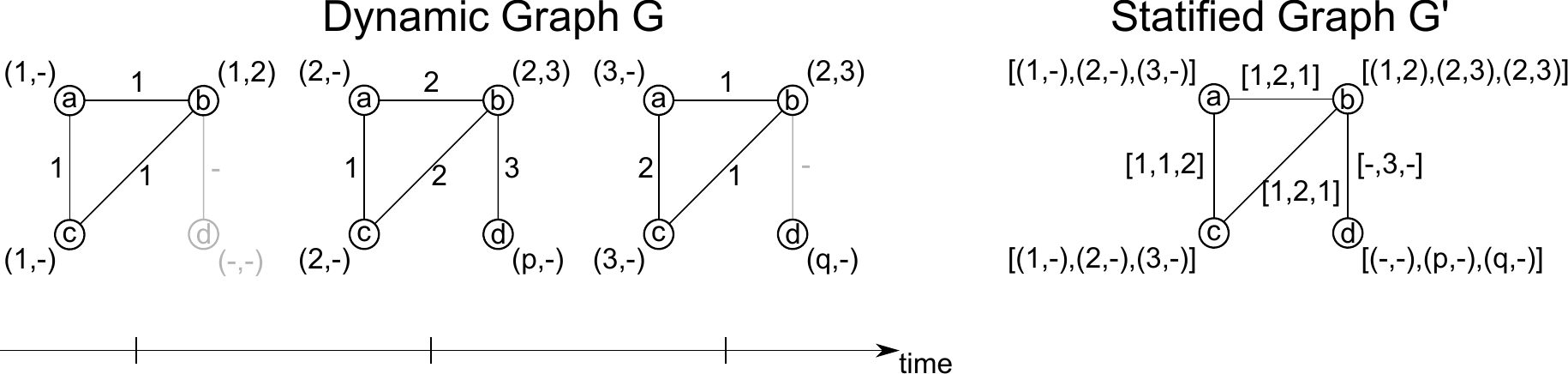}
    \caption{Illustration of the statification of a dynamic graph. On the left, the temporal evolution of a graph, including non-existent nodes and edges (gray), is given, and on the right, the corresponding statified graph with the total amount of nodes and edges together with the concatenated attributes is shown.}
    \label{fig:makestatic}
\end{figure*}
To prove the approximation theorems for SAUHGs and dynamic graphs, we need to specify the GNN architectures capable of handling those graph types.
The standard Message-Passing GNN for static node-attributed undirected homogeneous graphs is given in \cite{xu2018powerful}. Here, the node attributes are used as the initial representation and input to the GNN. The update is executed by aggregation over the representations of the neighboring nodes. 

Given that a SAUHG acts as a standard form for all graph types, the ordinary GNN architecture  will be extended to take also edge attributes into account. This can be done by analogously including the edge attributes in the first iteration to the processing of the node information in the general GNN framework as follows.
\begin{NDef}[SGNN]\label{def:SGNN}
For a SAUHG $G'=(\mathcal{V}', \mathcal{E}', \alpha', \omega')$ let $u,v \in \mathcal{V}'$ and $e = \{u, v\}$.
The SGNN propagation scheme for iteration $i\in [l],\ l > 0$ is defined as
{\small\begin{equation*}
    \mathbf{h}_v^{(i)} = \text{COMBINE}^{(i)}\left( \mathbf{h}_v^{(i-1)}, \text{AGGREGATE}^{(i)}\left(\{ \mathbf{h}_u^{(i-1)}\}_{ u \in ne_v}, \{\omega_{\{u,v\}}\}_{ u \in ne_v}  \right)\right)
\end{equation*}}
The output for a node-specific learning problem after the last iteration $l$ respectively is given by
\begin{equation*}
    \mathbf{z}_v = \text{READOUT}\left( \mathbf{h}_v^l\right),
\end{equation*}
using a selected aggregation scheme and a suitable READOUT function, and the output for a graph-specific learning problem is determined by
\begin{align*}
    \mathbf{z} = \text{READOUT}\left( \{\mathbf{h}_v^l \mid v \in \set{V}'\}\right).
\end{align*}
\end{NDef}

For the dynamic case, we chose a widely used GNN model that is consistent with the theory we built.
Based on \cite{skardinga2021foundations}, the discrete dynamic graph neural network (DGNN) uses a GNN to encode each graph snapshot. Here, the model is modified by using the previously defined SGNN in place of the standard one.
\begin{NDef}[Discrete DGNN]\label{DGNN}
Given a discrete dynamic graph $G = (G_t)_{t\in I}$, a \textbf{discrete DGNN} using a continuously differentiable recursive function $f$ for temporal modelling can be expressed as:
\begin{equation}\label{eq_dgnn}
    \begin{split}
        \mathbf{h}_1(t), \dots, \mathbf{h}_n(t) & := \text{SGNN}(G_{t}) \quad \forall\, t \geq 0\\
        \mathbf{q}_1(0), \dots , \mathbf{q}_n(0) & =  \mathbf{h}_1(0), \dots, \mathbf{h}_n(0) := \text{SGNN}(G_{0})\\        
        \mathbf{q}_v(t) & :=  f(\mathbf{q}_v(t-1), \mathbf{h}_v(t)) \quad \forall\, v \in \mathcal{V}
    \end{split}
\end{equation}
where $\mathbf{h}_v(t) \in \mathbb{R}^{r}$ is the hidden representation of node $v$ at time $t$ of dimension $r$ and $\mathbf{q}_v(t)\in\mathbb{R}^{s}$ is an $s$-dimensional hidden representation of node $v$ produced by $f$, 
and ${f:\mathbb{R}^{s} \times \mathbb{R}^{r}\rightarrow \mathbb{R}^{s}}$ is a neural architecture for temporal modeling (in the methods surveyed in \cite{skardinga2021foundations}, $f$ is almost always an RNN or an LSTM). 

The stacked version of the discrete DGNN is then:
\begin{equation}\label{stack_eq}
    \begin{split}  
        H(t) & = \text{SGNN}(G_{t}) \\
        Q(0) & = H(0) = \text{SGNN}(G_{0}) \\
        Q(t) & = F(Q(t-1),H(t))
    \end{split}
\end{equation}
where ${H(t)\in \mathbb{R}^{n\times r}}$, ${Q(t)\in \mathbb{R}^{n\times s}}$, ${F:\mathbb{R}^{n\times s} \times \mathbb{R}^{n\times r}\rightarrow \mathbb{R}^{n \times s}}$, being $n$ the number of nodes, $r$ and $s$ the dimensions of the hidden representation of a node produced respectively by the $\text{SGNN}$ and by the $f$. Applying $F$ corresponds to component-wise applying $f$ for each node \cite{skardinga2021foundations}.

To conclude, a function $\text{READOUT}_{\text{dyn}}$ will take as input $Q(t)$ and gives a suitable output for the considered task, so that altogether the DGNN will be described as
\begin{equation*}
    \varphi(t,G,v) = \text{READOUT}_{\text{dyn}}(Q(t)).
\end{equation*}

\end{NDef}


\begin{NRem}
    The DGNN is a Message-Passing model because the SGNN is one, by definition. 
\end{NRem}
The GNNs expressivity is studied in terms of their capability to distinguish two non-isomorphic graphs.


\begin{NDef}[Graph Isomorphism]\label{def_graph_iso}
Let $G_1$ and $G_2$ be two static graphs, then $G_1=(\set{V}_1, \set{E}_1)$ and $G_2=(\set{V}_2, \set{E}_2)$ are \textbf{isomorphic} to each other $G_1 \approx G_2$, if and only if there exists a bijective function $\phi: \mathcal{V}_1 \rightarrow \mathcal{V}_2$, with 
 \begin{compactenum}
     \item $v_1 \in \mathcal{V}_1 \, \Leftrightarrow \, \phi(v_1) \in \mathcal{V}_2\ \; \forall \; v_1 \in \mathcal{V}_1$,
     \item ${\{v_1, v_2\} \in \mathcal{E}_1 \, \Leftrightarrow \, \{\phi(v_1), \phi(v_2)\} \in \mathcal{E}_2 \; \forall \, \{v_1, v_2\} \in \mathcal{E}_1}$
 \end{compactenum}

\noindent
In case the two graphs are \textbf{attributed}, i.e., $G_1=(\set{V}_1, \set{E}_1, \alpha_1, \omega_1)$ and $G_2=(\set{V}_2, \set{E}_2, \alpha_2, \omega_2)$, then $G_1 \approx G_2$ if and only if additionally there exist bijective functions $\varphi_{\alpha} : A_1 \rightarrow A_2$ and $\varphi_{\omega} : B_1 \rightarrow B_2$ with images $A_i := \text{im}(\alpha_i)$ and $B_i := \text{im}(\omega_i)$, $i = 1, 2$. 
\hspace*{-0.4cm}
\begin{compactenum}
\item $\varphi_{\alpha}(\alpha_1(v_1)) = \alpha_2(\phi(v_1)) \quad \forall \; v_1 \in \mathcal{V}_1$,
\item ${\varphi_{\omega}(\omega_1(\{u_1, v_1\})) = \omega_2(\{\phi(u_1), \phi(v_1)\}) \; \forall \; \{u_1, v_1\} \in \mathcal{E}_1}$.
\end{compactenum}

\noindent
    If the two graphs are \textbf{dynamic}, they are called to be isomorphic if and only if the static graph snapshots of each timestep are isomorphic. 
\end{NDef}

Graph isomorphism (GI) gained prominence in the theory community when it emerged as
one of the few natural problems in the complexity class NP that could neither be classified as being hard (NP-complete) nor shown to be solvable with an efficient algorithm (that is, a polynomial-time algorithm) \cite{grohe2020graph}. Indeed it lies in the class NP-Intermidiate. 
However, in practice, the so-called Weisfeiler-Lehman (WL) test is used to at least recognize non-isomorphic graphs \cite{sato2020survey}. If the WL test outputs two graphs as isomorphic, the isomorphism is likely but not given for sure. 

\medskip
The expressive power of GNNs can also be approached from the point of view of their approximation capability.\\
It generally analyzes the capacity of different GNN models to approximate arbitrary functions \cite{bronstein2021geometric}. 
Different universal approximation theorems can be defined depending on the model, the considered input data, and the sets of functions. This paper will focus on the set of functions that preserve the unfolding tree equivalence, defined as follows.



Since the results in this section hold for undirected and unattributed graphs, we aim to extend the universal approximation theorem to GNNs working on SAUHGs (cf.~Def.~\ref{def_sauhg}) and dynamic graphs (cf.~Def.~\ref{def_discrete_dynamic_graph}). For this purpose, in the next sections, we introduce a static attributed and a dynamic version of both the WL test and the unfolding trees to show that the graph equivalences regarding the attributed/dynamic WL test and attributed/dynamic unfolding trees are equivalent. With these notions, we define the set of functions that are attributed/dynamic unfolding tree preserving and reformulate the universal approximation theorem to the attributed and dynamic cases (cf.~Thm.~\ref{theorem_universal_approx_sauhg} and Thm.~\ref{dyn_thm_approximation}). 
\section{Weisfeiler-Lehman and Unfolding Trees}\label{section_equivalences}

There are many different extensions of the WL test, e.g., n-dim.~WL test, n-dim folklore WL test, or set n-dim.~WL test \cite{sato2020survey}. 
In general, the 1-WL test is based on a graph coloring algorithm. The coloring is applied in parallel on the nodes of the two input graphs, and at the end, the number of colors used per each graph is counted. Then, two graphs are detected as non-isomorphic if these numbers do not coincide, whereas when the numbers are equal, the graphs are possibly isomorphic (WL equivalent).
Another way to check the isomorphism of two graphs and therefore compare the GNN expressivity is to consider the so-called unfolding trees of all their nodes. An unfolding tree consists of a tree constructed by a breadth-first visit of the graph, starting from a given node. 
Two graphs are possibly isomorphic if all their unfolding trees are equal. From \cite{krebs2015universal}, it is known that for static undirected and unattributed graphs, both the unfolding tree and the Weisfeiler-Lehman approach for testing the isomorphism of two graphs are equivalent. 



In this section, we extend the unfolding tree (UT) and the Weisfeiler-Lehman (WL) tests to the domains of SAUHGs and dynamic graphs respectively. Using these  we show that also the extended versions of UT-equivalence and  WL-equivalence are equivalent.

\subsection{Equivalence for Attributed Static Graphs}\label{section_equivalence_SAUHG}

The extended result on SAUHGs is formalized and proven in Thm.~\ref{ue=wl}. 
The original WL test and unfolding tree notions cover all graph properties except edge attributes. Thus, the notion of unfolding trees and the Weisfeiler-Lehman test have to be extended to an attributed version.  

\begin{Def}[Attributed Unfolding Tree]\label{def_unfolding_tree_attributed}
The \textbf{attributed unfolding tree} $\mathbf{T}_{v}^d$ in graph $G' = (\mathcal{V}', \mathcal{E}', \alpha', \omega')$ of node $v \in \mathcal{V}'$ \textbf{up to depth} $d \in \mathbb{N}_0$ is defined as
\begin{align*}
    \mathbf{T}_{v}^d = 
    \begin{cases}
        Tree(\alpha'_{v}), \quad \text{if } d = 0 \\
        Tree\bigl(\alpha'_{v}, \Omega'_{ ne_v} , \mathbf{T}^{d-1}_{ne_v}\bigr) \quad \text{if } d > 0\,,
    \end{cases}
\end{align*}
where ${Tree(\alpha'_{v})}$ is a tree constituted of node $v$ with attribute $\alpha'_v$. $Tree\bigl(\alpha'_{v}, \Omega'_{ ne_v} , \mathbf{T}^{d-1}_{ne_v}\bigr)$ is the tree consisting of the root node $v$ and subtrees $\mathbf{T}^{d-1}_{ne_v} = \sms{\mathbf{T}_{u_1}^{d-1}, \ldots, \mathbf{T}_{u_{|ne_v|}}^{d-1}}$ of depth $d-1$, that are connected by the corresponding edge attributes $\Omega'_{ne_v} = \sms{\omega'_{\{v, u_1\}}, \ldots,  \omega'_{\{v, u_{|ne_v|}\}}}$ of the neighbors of $v$.

\begin{figure}[ht!]
\begin{center}
  \def\svgwidth{.5\linewidth}
  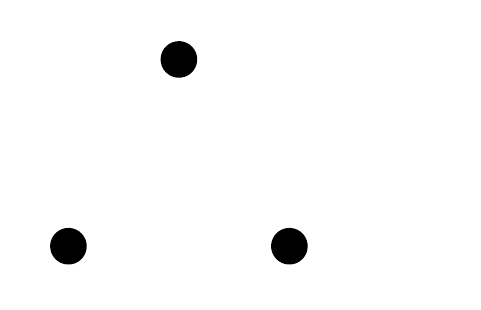
  \caption{Unfolding tree recursive construction}
    \label{fig:my_label}
    \end{center}
\end{figure}
Moreover, the \textbf{attributed unfolding tree} of $v$ determined by $\mathbf{T}_v = \lim_{d \rightarrow \infty} \mathbf{T}_v^d$ is obtained by merging all unfolding trees $\mathbf{T}^d_v$ of any depth $d$.
\end{Def}

\begin{Def}[Attributed Unfolding Equivalence]\label{def_attributed_unfolding_tree_equivalence}

Let $G_1 = (\mathcal{V}_1, \mathcal{E}_1, \alpha_1, \omega_1)$ and $G_2 = (\mathcal{V}_2, \mathcal{E}_2, \alpha_2, \omega_2)$ be two SAUHGs. Then $G_1$ and $G_2$ are \textbf{attributed unfolding tree equivalent}, noted by $G_1 \sim_{AUT} G_2$, if and only if ${\multiset{\mathbf{T}_u \mid u \in \mathcal{V}_1} = \multiset{\mathbf{T}_v \mid v \in \mathcal{V}_2}}$.
Analogously, two nodes $u\in {V}_1, v\in \set{V}_2$ are unfolding tree equivalent, noted by $u \sim_{AUT} v$ if and only if $\mathbf{T}_u = \mathbf{T}_v$.
\end{Def}

Using the definition of the attributed unfolding equivalence on graphs, the 1-Weisfeiler Lehman (1-WL) test provided in \cite{d2021unifying} is extended to attributed graphs. This allows for the definition of the attributed 1-WL equivalence on graphs and the subsequent Lem.~\ref{equal_trees=WL_coloring} and the resulting Thm.~\ref{ue=wl} that pose the relation between the attributed unfolding equivalence and the attributed 1-WL test.

\begin{Def}[Attributed 1-WL test]\label{def_WL_test_attributed}
Let HASH be a bijective function that codes every possible node attribute with a color from a color set $\mathcal{C}$ and $G' = (\mathcal{V}', \mathcal{E}', \alpha', \omega')$.
The \textbf{attributed 1-WL (1-AWL) test} is defined recursively through the following.
\begin{itemize}
    \item At iteration $i=0$, the color is set to the hashed node attribute:
    \begin{align*}
        c_v^{(0)} = \text{HASH}(\alpha'_v) 
    \end{align*}
    \item At iteration $i>0$, the HASH function is extended to the edge weights:
    \begin{align*}
        c_v^{(i)} = \text{HASH}\Bigl( \bigl( c_v^{(i-1)}, \Omega'_{ ne_v}, c_{ne_v}^{(i-1)}\bigr)\Bigr)
    \end{align*}
\end{itemize}
\end{Def}

In the following, the 1-WL equivalence of graphs and nodes is extended by using the attributed version of the 1-WL test (cf.~Def.~\ref{def_WL_test_attributed}).

\begin{Def}[Attributed 1-WL equivalence]\label{def_attributed_wl_equivalence}
Two nodes $u, v$ are attributed WL-equivalent, noted by $u \sim_{AWL} v$, if and only if $c_u = c_v$. \\Analogously,
let $G_1 = (\mathcal{V}_1, \mathcal{E}_1, \alpha_1, \omega_1)$ and $G_2 = (\mathcal{V}_2, \mathcal{E}_2, \alpha_2, \omega_2)$ be two SAUHGs. Then, $G_1 \sim_{AWL} G_2$, if and only if for all nodes $v_1 \in \mathcal{V}_1$ there exists a corresponding node $v_2 \in \mathcal{V}_2$ such that $v_1 \sim_{AWL} v_2$.
\end{Def}

Finally, to complete the derivation of the equivalence between the attributed unfolding equivalence (cf.~Def.~\ref{def_attributed_unfolding_tree_equivalence}) and the attributed WL-equivalence (cf.~Def.~\ref{def_attributed_wl_equivalence}), the following helping lemma Lem.~\ref{equal_trees=WL_coloring} is given, which directly leads to Thm.~\ref{ue=wl}. 
The lemma states the equivalence between the attributed unfolding tree equivalence of nodes and the equality of their attributed unfolding trees up to a specific depth. 
In \cite{scarselli2008computational}, it has been shown that the unfolding trees of infinite depth are not necessary to consider for this equivalence. Instead, the larger number of nodes  
of both graphs under consideration is sufficient for the depth of the unfolding trees, which is finite since the graphs are bounded. 
The following lemma determines the equivalence between the attributed unfolding trees of two nodes and their colors resulting from the attributed 1-WL test. 

\begin{Lem}\label{equal_trees=WL_coloring}
Consider $G' = (\mathcal{V}', \mathcal{E}' , \alpha', \omega')$ as the SAUHG resulting from a transformation of an arbitrary static graph $G = (\mathcal{V}, \mathcal{E} , \alpha, \omega)$ with nodes $u, v \in \mathcal{V}$ and corresponding attributes $\alpha_u, \alpha_v$. Then it holds 
\begin{align*}
    \forall \; d \in \mathbb{N}_0 : \mathbf{T}^{d}_u = \mathbf{T}^{d}_v \Longleftrightarrow c_u^{(d)} = c_v^{(d)}.
\end{align*}
\end{Lem}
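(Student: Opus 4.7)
The plan is to proceed by induction on the depth $d$, since both the attributed unfolding tree and the 1-AWL coloring are defined recursively with matching base cases and matching recursion rules. The injectivity of HASH will transport equality back and forth between colors and their preimages, and the induction hypothesis will handle the subtrees.

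\textbf{Base case} ($d=0$). By Def.~\ref{def_unfolding_tree_attributed}, $\mathbf{T}_u^0 = \text{Tree}(\alpha_u)$ and $\mathbf{T}_v^0 = \text{Tree}(\alpha_v)$, so $\mathbf{T}_u^0 = \mathbf{T}_v^0$ iff $\alpha_u = \alpha_v$. By Def.~\ref{def_WL_test_attributed}, $c_u^{(0)} = \text{HASH}(\alpha_u)$ and $c_v^{(0)} = \text{HASH}(\alpha_v)$; since HASH is bijective, $c_u^{(0)} = c_v^{(0)}$ iff $\alpha_u = \alpha_v$. Both sides are therefore equivalent to equality of attributes.

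\textbf{Inductive step.} Assume the statement holds at depth $d-1$ for every pair of nodes. For the implication $\mathbf{T}_u^d = \mathbf{T}_v^d \Rightarrow c_u^{(d)} = c_v^{(d)}$, equal trees at depth $d$ force (i) equal root attributes, hence $c_u^{(d-1)} = c_v^{(d-1)}$ by the base-case argument applied inductively, (ii) the same sorted tuple of edge attributes $\Omega_{ne_u} = \Omega_{ne_v}$, and (iii) componentwise equality of the subtree tuples $\mathbf{T}_{ne_u}^{d-1} = \mathbf{T}_{ne_v}^{d-1}$. The induction hypothesis applied componentwise converts (iii) into equality of the neighbor colors $c_{ne_u}^{(d-1)} = c_{ne_v}^{(d-1)}$ in matching order. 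Since the three inputs to HASH coincide, $c_u^{(d)} = c_v^{(d)}$. Conversely, $c_u^{(d)} = c_v^{(d)}$ together with the bijectivity of HASH yields equality of the tuples $(c_u^{(d-1)}, \Omega_{ne_u}, c_{ne_u}^{(d-1)}) = (c_v^{(d-1)}, \Omega_{ne_v}, c_{ne_v}^{(d-1)})$. The induction hypothesis (now in the other direction, applied componentwise to neighbors) turns the equality of neighbor colors into equality of their depth-$(d-1)$ subtrees, while the equality of root colors forces equality of the root attributes. Reassembling the root with the common edge-attribute tuple and the common subtree tuple reconstructs $\mathbf{T}_u^d = \mathbf{T}_v^d$.

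\textbf{Main obstacle.} The delicate point is the bookkeeping of the pairing between edge attributes and subtrees. Because Def.~\ref{def_unfolding_tree_attributed} uses \emph{ordered} tuples $\Omega_{ne_v}$ and $(\mathbf{T}_{u_1}^{d-1}, \ldots, \mathbf{T}_{u_{|ne_v|}}^{d-1})$ indexed by the same enumeration of $ne_v$, equality of the attributed trees already records the correspondence between each edge attribute and the subtree reached through that edge. The HASH inputs in Def.~\ref{def_WL_test_attributed} must be read with the same index convention, so that $\Omega_{ne_v}$ and $c_{ne_v}^{(d-1)}$ are aligned. Under this common convention the componentwise application of the induction hypothesis is unambiguous, and the argument goes through; without it one would need an additional matching argument to line up the edges with the subtrees, which would complicate the proof.
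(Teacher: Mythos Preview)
Your approach matches the paper's: induction on $d$, base case via bijectivity of HASH, and the inductive step by decomposing both the depth-$d$ tree and the depth-$d$ color into their three matching components (root information, edge-attribute tuple, neighbor information), applying the induction hypothesis componentwise on the neighbors, and using bijectivity of HASH to pass between equal inputs and equal outputs. Your closing paragraph on the ordering/pairing convention is exactly the point the paper isolates and relies on.

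There is one slip to fix. In item (i) of your forward direction you write ``equal root attributes, hence $c_u^{(d-1)} = c_v^{(d-1)}$ by the base-case argument applied inductively''. Equal root attributes alone yield only $c_u^{(0)} = c_v^{(0)}$; they do not force equality of the depth-$(d-1)$ colors (two nodes can share an attribute yet have different neighborhoods). What you actually need, and what the paper does, is to first observe that $\bm{T}_u^d = \bm{T}_v^d$ implies $\bm{T}_u^{d-1} = \bm{T}_v^{d-1}$ (the depth-$d$ tree truncates to the depth-$(d-1)$ tree), and \emph{then} invoke the induction hypothesis on $u,v$ themselves to obtain $c_u^{(d-1)} = c_v^{(d-1)}$. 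With that premise corrected, your argument is complete and coincides with the paper's proof.
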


The proof can be found in Apx.~\ref{proof_of_equal_trees=WL_coloring}

Directly from Lem.~\ref{equal_trees=WL_coloring},  the equivalence of the attributed unfolding tree equivalence and the attributed 1-WL equivalence of two nodes belonging to the same graph can be formalized.


\begin{Thm}\label{ue=wl}
Consider $G'$ as in Lem.~\ref{equal_trees=WL_coloring}. 
Then, it holds
\begin{align*}
    \forall \; u, v \in \mathcal{V}' : u \sim_{AUT} v \Longleftrightarrow u \sim_{AWL} v.
\end{align*} 
\end{Thm}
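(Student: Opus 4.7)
The plan is to combine Lemma \ref{ue=equal_trees} and Lemma \ref{equal_trees=WL_coloring} and then reconcile the ``depth-$(r+1)$'' flavor of those statements with the stable-color flavor of Definition \ref{def_attributed_wl_equivalence}. First, take $r = \text{diam}(G')$ (so that the graph $G_1 = G_2 = G'$ in the setting of Lemma \ref{ue=equal_trees}). The chain is immediate: $u \sim_{AUT} v \Leftrightarrow \bm{T}^{r+1}_u = \bm{T}^{r+1}_v$ by Lemma \ref{ue=equal_trees}, and $\bm{T}^{r+1}_u = \bm{T}^{r+1}_v \Leftrightarrow c_u^{(r+1)} = c_v^{(r+1)}$ by Lemma \ref{equal_trees=WL_coloring} applied at depth $d = r+1$. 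Thus the theorem reduces to showing $c_u^{(r+1)} = c_v^{(r+1)} \Leftrightarrow u \sim_{AWL} v$, where the right-hand side refers to the stable colors produced by the attributed 1-WL algorithm at termination.

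For the ``$\Leftarrow$'' direction I would invoke the standard monotonicity of color refinement: from the hashing rule in Def.~\ref{def_WL_test_attributed}, whenever $c_u^{(i)} \ne c_v^{(i)}$ the first argument of the HASH at step $i+1$ already differs, so $c_u^{(i+1)} \ne c_v^{(i+1)}$ as well. Hence the partition can only refine from iteration to iteration, and equality of the stable colors forces equality at every prior iteration, including $i = r+1$. The ``$\Rightarrow$'' direction is subtler, since in general equal colors at some iteration can later split. To circumvent this I would run the lemmas in sequence a second time: $c_u^{(r+1)} = c_v^{(r+1)}$ gives $\bm{T}^{r+1}_u = \bm{T}^{r+1}_v$ via Lemma \ref{equal_trees=WL_coloring}, which via Lemma \ref{ue=equal_trees} promotes to $\bm{T}^d_u = \bm{T}^d_v$ for every $d \in \mathbb{N}$, and then Lemma \ref{equal_trees=WL_coloring} applied at each $d$ yields $c_u^{(d)} = c_v^{(d)}$ for all $d \ge 0$. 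In particular this persists after the algorithm has stabilized, giving $c_u = c_v$ and hence $u \sim_{AWL} v$.

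The main obstacle I anticipate is not any single manipulation but being careful about what ``stability'' of the 1-AWL algorithm means in Def.~\ref{def_attributed_wl_equivalence}: the termination criterion in Def.~\ref{def_1_WL_test} is phrased in terms of the total number of colors in the graph, not per node, and one must verify that when this count is constant no pair $(u,v)$ can still be split in subsequent rounds. This is the standard fact that a monotone refinement of a finite partition on $|\mathcal{V}'|$ elements stabilizes in at most $|\mathcal{V}'|-1$ steps and that the color count becomes constant precisely when the partition itself does. Crucially, the bound $r+1$ inherited from Lemma \ref{ue=equal_trees}, applied on a graph of diameter $r$, is sufficient: beyond depth $r+1$ the breadth-first exploration underlying both the unfolding tree construction and the WL update has already covered every reachable node, so no further distinguishing information can emerge. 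Once this stabilization remark is in place, the theorem follows from pasting the two lemmas together exactly as sketched.
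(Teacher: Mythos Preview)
Your proposal is correct and follows essentially the same approach as the paper, which simply states that the result follows by chaining Lemma~\ref{ue=equal_trees} and Lemma~\ref{equal_trees=WL_coloring}. You are in fact more careful than the paper's one-line proof: the paper does not explicitly address the gap between the depth-$(r+1)$ colors delivered by the lemmas and the stable colors $c_u$, $c_v$ appearing in Definition~\ref{def_attributed_wl_equivalence}, whereas your monotonicity argument and the second pass through the lemmas make this reconciliation explicit.
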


\begin{proof}
The proof follows from the proof of  Lem.~\ref{equal_trees=WL_coloring}. 
\end{proof}


\subsection{Equivalence for Dynamic Graphs} \label{section_equivalence_dyn_graphs}
In this section, the previously introduced concepts of unfolding tree and WL equivalences are extended to the dynamic case. Note that Lem.~\ref{equal_trees=WL_coloring} and, therefore, Thm.~\ref{ue=wl} also hold in case $G'$ is the SAUHG (cf.~Def.~\ref{def_sauhg}) resulting from a transformation of a dynamic graph $G = (G_t)_{t\in I}$ to its static attributed version. 
However, the GNNs working on dynamic graphs usually use a significantly different architecture than those that work on static attributed graphs. Therefore, the following includes the derivation of the various equivalences on dynamic graphs separately.

First, dynamic unfolding trees are introduced as a sequence of unfolding trees for each graph snapshot respectively. Afterward, the equivalence of two dynamic graphs regarding their dynamic unfolding trees is presented. 



\begin{Def}[Dynamic Unfolding Tree] \label{def_unfolding_tree_dynamic} Let $G = (G_t)_{t\in I}$ with $G_t = (\mathcal{V}_t, \mathcal{E}_t, \alpha_t, \omega_t)$ be a dynamic graph.
The \textbf{dynamic unfolding tree} $\mathbf{T}_{v}^d(t)$ at time $t\in I$ of node $v\in\mathcal{V}$ up to depth $d \in \mathbb{N}_0$ is defined as
\begin{align*}
    \mathbf{T}_{v}^d (t)= 
    \begin{cases}
         Tree(\alpha_{v}(t)), \quad \text{if } d = 0
        \\
        Tree\bigl(\alpha_{v}(t), \Omega_{ ne_{v}(t)} , \mathbf{T}^{d-1}_{ne_{v}(t)}(t)\bigr) \quad \text{if } d > 0,
    \end{cases}
    \end{align*}

where ${Tree(\alpha_{v}(t))}$ is a tree constituted of node $v$ with attribute $\alpha_{v}(t)$. Furthermore, $Tree\bigl(\alpha_{v}(t), \Omega_{ ne_{v}(t)} , \mathbf{T}^{d-1}_{ne_{v}(t)}(t)\bigr)$ is the tree with root node $v$ with attribute $\alpha_{v}(t)$. Additionally, $\mathbf{T}^{d-1}_{ne_{v}(t)}(t) = \sms{\mathbf{T}_{u_1}^{d-1}(t), \ldots, \mathbf{T}_{u_{|ne_{v}(t)|}}^{d-1}(t)}$ are corresponding subtrees with edge attributes $\Omega_{ ne_{v}(t)}$ . If the node $v$ does not exist at time $t$, the corresponding tree is empty, 
there is no tree of depth $d>0$ for this timestep and $v$ does not occur in any neighborhood of other nodes.

In total, the \textbf{dynamic unfolding tree of $v$} at time $t$,  ${\mathbf{T}_{v}(t) = \lim \limits_{d \rightarrow \infty} \mathbf{T}_{v}^d(t)}$ is obtained by merging, i.a., all the unfolding trees $\mathbf{T}_{{v}}^d(t)$ for any $d$.
\end{Def}


\begin{Def}\label{def:unf_eq}
Two nodes $u,\ v\in\mathcal{V}$ are said to be \textbf{dynamic unfolding equivalent} $u\sim_{DUT} v$ if $\mathbf{T}_{u}(t)= \mathbf{T}_{v}(t)$ for every timestep t.  
Analogously,
two dynamic graphs $G_1, \; G_2$ are said to be \textbf{dynamic unfolding equivalent} $G_1 \sim_{DUT} G_2$, if there exists a bijection between the nodes of the graphs that respects the partition induced by the unfolding equivalence on the nodes.
\end{Def}

\begin{Def}[Dynamic 1-WL test]\label{def_WL_test_dynamic}Let $G = (G_t)_{t\in I}$ with $G_t = (\mathcal{V}, \mathcal{E}, \alpha_t, \omega_t)$ be a dynamic graph.
Let $\text{HASH}^0_t$  be a bijective function encoding every node attribute of $G_t$ with a color from a color set $\mathcal{C}$. 
The \textbf{dynamic 1-WL test (1-DWL)} generates a vector of color sets one for each timestep $t\in I$ by:
\begin{itemize}
    \item At iteration $i=0$ the color is set to the hashed node attribute or a fixed color for non-existent nodes:
    \begin{align*}
        c_{v}^{(0)}(t) =\begin{cases} \text{HASH}^0_t\left(\alpha_{v}(t)\right) & \text{if }v\in\mathcal{V}_t,\\
             c^\perp & \text{otherwise.}
        \end{cases}
    \end{align*}
    
    \item Then, the aggregation mechanism is defined by the bijective function $\text{HASH}_t$ for $i>0$:
    \begin{align*}
        c_{v}^{(i)}(t) = \text{HASH}_t\Bigl( \bigl( c_{v}^{(i-1)}(t), \Omega_{ne_v(t)} , c_{ne_v(t)}^{(i-1)}(t)\bigr)\Bigr) 
    \end{align*}
    Note that for $i>0$, $c_v^{(i-1)(t)}=c^\perp$ holds for a non-existent node at time $t$. Further, the neighborset is empty so the other inputs of HASH$_t$ are empty, and together with $c^\perp$ it will always give the same color for non-existent nodes.
\end{itemize}

\end{Def}


\begin{Def}[Dynamic 1-WL equivalence]\label{def_dynamic_wl_equivalence}
Two nodes $u,v\in \mathcal{V}$ in a dynamic graph $G$ are said to be \textbf{dynamic WL equivalent}, noted by $u \sim_{DWL} u$, if their colors resulting from the WL test are pairwise equal per timestep. Analogously, let $G_1$ and $G_2$ be dynamic graphs. Then $G_1 \sim_{DWL} G_2$, if and only if for all nodes $v_1 \in \mathcal{V}^{(1)}_t$ there exists a corresponding node $v_2 \in \mathcal{V}^{(2)}_t$ with $c_{v_1}(t) = c_{v_2}(t)$ for all $t\in I$.

\end{Def}

\begin{Thm}[Equivalence of Dynamic WL Equivalence and Dynamic UT Equivalence for nodes]\label{1-DWL=DUT}
   Let $G= (G_t)_{t\in I}$ be a dynamic graph and $u,v \in \mathcal{V}$.
   Then, it holds
\begin{align*}
     u \sim_{DUT} v  \Longleftrightarrow u \sim_{DWL} v.
\end{align*} 
\end{Thm}
\begin{proof}
    Two nodes are dynamic unfolding tree equivalent iff they are attributed unfolding tree equivalent at each timestep $u \sim_{AUT} v \quad \forall t \in I$ \ref{def:unf_eq}. Further, as consequence of Thm.~\ref{ue=wl}, it holds that for all $t \in I$ the two nodes are attributed WL equivalent $u \sim_{AWL} v$ and thus, the two nodes are dynamic WL equivalent by Def.~\ref{def_dynamic_wl_equivalence}. 
    In case of the non-existence of $u$ at a certain timestep $t$, the Theorem still holds.

    
\end{proof}

    

\section{Approximation Capability of GNN's}\label{section_approximation}

In this section, the results from Sec.~\ref{section_equivalence_SAUHG}
and Sec.~\ref{section_equivalence_dyn_graphs} are brought together in the formulation of a universal approximation theorem for GNNs working on SAUHGs and dynamic graphs and the set of functions that preserve the attributed or dynamic unfolding equivalence, respectively. 

\subsection{GNNs for Attributed Static Graphs}

Since the goal is to show the attributed extension of the universal approximation theorem, it is necessary to define the corresponding family of attributed unfolding equivalence-preserving functions. A function preserves the attributed unfolding equivalence if the output of the function is equal when two nodes are attributed unfolding equivalent. 

\begin{Def}\label{def_preserve_attr_unf_eq}Let $\mathcal{G}'$ be the domain of bounded SAUHGs, $G'=(\mathcal{V}', \mathcal{E}', \alpha', \omega')\in\mathcal{G}'$ a SAUHG and $u,v\in \mathcal{V}'$ two nodes. Then a function $f: \mathcal{G}' \rightarrow \mathbb{R}^m$ is said to \textbf{preserve the attributed unfolding equivalence} on $\mathcal{G}'$ if
\begin{equation*}
v \sim_{AUT} u \Rightarrow f(G',v)=f(G', u).
\end{equation*}
All functions that preserve the attributed unfolding equivalence are  collected in the set $\mathcal{F}(\mathcal{G}')$.
\end{Def}

Analogous to the argumentation in \cite{scarselli2008computational}, there exists a relation between the unfolding equivalence preserving functions and the unfolding trees for attributed graphs, as follows.

\begin{Prop}[Functions of attributed unfolding trees] \label{f_unfold}
A function $f$ belongs to $\mathcal{F}(\mathcal{G}')$ if and only if there exists a function $\kappa$ defined on trees such that for any graph $G'\in\mathcal{G}'$ it holds $f(G',v)= \kappa(\mathbf{T}_v)$, for any node $v \in G'$.
\end{Prop}

The proof works analogously to the proof of the unattributed version presented in \cite{scarselli2008computational} and can be found in Apx.~\ref{apx:prop_functions_of_attr_unf_trees}.

Considering the previously defined concepts and statements for SAUHGs in Sec.~\ref{section_equivalence_SAUHG}, finally, the following theorem states the universal approximation capability of the SGNNs on bounded SAUHGs.

\begin{Thm}[Universal Approximation Theorem by SGNN]\label{theorem_universal_approx_sauhg}
Let $\mathcal{G}'$ be the domain of bounded SAUHGs  with the maximal number of nodes ${N= \max\limits_{G' \in \mathcal{G}'} |G'|}$.
For any measurable function $f \in \mathcal{F}(\mathcal{G}')$ preserving 
the attributed unfolding equivalence (cf.~Def.~\ref{def_preserve_attr_unf_eq}), any norm $\| \cdot \|$ on $\mathbb{R}$, any probability measure $P$ 
on $\mathcal{G}'$, for any reals $\epsilon, \lambda$ where $\epsilon, \lambda >0$, there exists a SGNN defined by the continuously differentiable functions $\text{COMBINE}^{(i)}$, $\text{AGGREGATE}^{(i)}$, at iteration $ i \leq 2N-1$, and by the function
$\text{READOUT}$, with hidden dimension $r=1$, i.e,  $h_v^i\in\mathbb{R}\ \forall i$, such that the function
$ \varphi$  (realized by the GNN) computed after $2N-1$ steps for all $G'\in\mathcal{G}'$
satisfies the condition
\begin{equation*}
P( \| f(G',v)- \varphi( G',v) \| \leq \varepsilon) \geq 1- \lambda.
\end{equation*}
\end{Thm}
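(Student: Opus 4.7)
The plan is to adapt the space-partitioning argument used to prove Thm.~\ref{theorem_universal_approx_thm} in \cite{d2021unifying}, lifting it from node-only attributes to the SAUHG setting where edge attributes must also be consumed by AGGREGATE. The overarching strategy is to build an SGNN whose scalar hidden state $h_v^{r+1}\in\mathbb{R}$ is (essentially) an injective encoding of the depth-$(r+1)$ attributed unfolding tree $\bm{T}_v^{r+1}$, and then let READOUT approximate the factor map $\kappa$ guaranteed by Prop.~\ref{f_unfold}.

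First, I would reduce the target. By Prop.~\ref{f_unfold} there exists $\kappa$ with $f(\mathbf{G},v)=\kappa(\bm{T}_v)$; by Lem.~\ref{ue=equal_trees}, since the domain consists of bounded SAUHGs with diameter at most $r$, this can be replaced by $f(\mathbf{G},v)=\tilde\kappa(\bm{T}_v^{r+1})$ for a suitable $\tilde\kappa$. The problem thus becomes: construct an SGNN with $r+1$ message-passing layers whose node embedding $h_v^{r+1}$ determines $\bm{T}_v^{r+1}$ up to bijection, and a READOUT approximating $\tilde\kappa$ composed with the inverse of this encoding. Because $f$ is only measurable (not continuous) and $P$ is arbitrary, a standard Lusin-type step restricts the analysis to a compact subset $K\subset\mathcal{G}$ with $P(K)\geq 1-\lambda/2$; on $K$ all node/edge attributes live in a bounded box, so the number of distinct depth-$(r+1)$ trees is controlled in a measure-theoretic sense, and $\tilde\kappa$ can be uniformly approximated on the encoded image by a continuously differentiable function (this supplies the READOUT).

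The main engineering step is the inductive construction of COMBINE$^{(k)}$ and AGGREGATE$^{(k)}$, with $h_v^k\in\mathbb{R}$, so that $h_v^k$ is injective as a function of $\bm{T}_v^k$ on the compact attribute region. At $k=0$ set $h_v^0$ to any injective continuously differentiable encoding of $\alpha_v$ into $\mathbb{R}$. Inductively, AGGREGATE$^{(k)}$ must be injective on multisets of pairs $(h_u^{k-1},\omega_{\{u,v\}})$ with $u\in ne_v$, and COMBINE$^{(k)}$ injective in the pair $(h_v^{k-1},\mathrm{AGGREGATE}^{(k)}(\cdot))$. For the former I would combine a continuous injection $\mathbb{R}^2\to\mathbb{R}$ (pairing the hidden state with the edge attribute, componentwise if $k$-dimensional) with the Deep-Sets-style injective sum encoding of bounded multisets over a compact set; for the latter a further continuous injection $\mathbb{R}^2\to\mathbb{R}$. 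Unrolling the recursion, $h_v^{r+1}$ becomes a continuous injection of $\bm{T}_v^{r+1}$ (on $K$), since the definition of $\bm{T}_v^k$ in Def.~\ref{def_unfolding_tree_attributed} precisely matches the pair (node attribute, multiset of (edge attribute, child tree)) that this recursion encodes. The key novelty over \cite{d2021unifying} is the presence of the edge-attribute argument inside AGGREGATE$^{(k)}$, which is exactly the extra slot provided by Def.~\ref{def:SGNN}.

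The hard part will be the construction of the injective multiset encoding of pairs $(h_u^{k-1},\omega_{\{u,v\}})$ using only continuously differentiable functions with scalar output, while keeping the image of all intermediate $h_v^k$ within a compact set on which the next layer's injection is well-defined. I would handle this by first pairing hidden states and edge attributes through a fixed smooth bijection $\mathbb{R}\times\mathbb{R}^k\to\mathbb{R}$ restricted to the compact image, then applying the standard sum-of-powers trick on the rescaled image; the boundedness of the domain (graphs have bounded size and attributes lie in $K$) guarantees that only finitely many multisets matter up to the approximation tolerance, so minor quantization or smoothing losses can be absorbed into the remaining $\lambda/2$ probability budget via the same partitioning argument as in \cite{d2021unifying}. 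Once $h_v^{r+1}$ is an injective encoding on a set of measure $\geq 1-\lambda/2$, choosing READOUT as a continuously differentiable approximator of $\tilde\kappa$ composed with the inverse encoding (available by the classical feed-forward universal approximation theorem on the compact image) yields $\|f(\mathbf{G},v)-\varphi(\mathbf{G},v)\|\leq\varepsilon$ on a set of probability $\geq 1-\lambda$, as required.
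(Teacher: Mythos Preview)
Your proposal is correct and follows essentially the same route as the paper: reduce $f$ to a function $\kappa$ of depth-$(r+1)$ attributed unfolding trees via Prop.~\ref{f_unfold} and Lem.~\ref{ue=equal_trees}, use a partitioning/compactness step to pass from measurable targets on real-valued attributes to a finite discrete setting, then build AGGREGATE$^{(k)}$/COMBINE$^{(k)}$ so that $h_v^{r+1}$ injectively encodes $\bm{T}_v^{r+1}$ (the only novelty over \cite{d2021unifying} being that the edge attributes $\omega_{\{u,v\}}$ enter the AGGREGATE slot), and let READOUT approximate $\kappa$ on the encoded image. The paper's version differs only in cosmetic technical choices---it states the hypercube partitioning explicitly as Lem.~\ref{hypercubes_att} (stacking node and edge features together), reduces to integer-featured graphs (Thm.~\ref{attributed reduct}), and builds the tree encoding $\triangledown$ via an explicit cardinality argument rather than a Deep-Sets sum-of-powers construction.
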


The corresponding proof can be found in \ref{apx:proof_apx_thm_attributed}. \\

As in \cite{scarselli2008computational} we want now to study the case when the employed 
components (COMBINE, AGGREGATE, READOUT) are sufficiently general to be able to approximate any function preserving the unfolding equivalence.
We call this class of networks, $\cal{Q}_S$, \textit{SGNN models with universal components}. To simplify our discussion, we introduce the transition function $f^{(i)}$  to indicate the concatenation of the $\text{AGGREGATE}^{(i)}$ and $\text{COMBINE}^{(i)}$, i.e.,
\begin{equation*} 
\hspace*{-0.8cm}
\begin{split}
    & \quad \quad f^{(i)}(\mathbf{h}_v^i,\{\mathbf{h}^{i-1}_u, \; u \in ne[v]\} \{\omega_{\{u,v\}}\}_{ u \in ne_v} ) = \\  
    & \quad \text{\footnotesize COMBINE}^{(i)}\left( \mathbf{h}_v^{(i-1)}, \text{\footnotesize AGGREGATE}^{(i)}\left(\{ \mathbf{h}_u^{(i-1)}\}_{ u \in ne_v}, \{\omega_{\{u,v\}}\}_{ u \in ne_v}  \right)\right)
\end{split}
\end{equation*}
Then, we can formally define the class $\cal{Q}_S$.

\begin{Def} \label{def:universal_static}
A class $\cal{Q}_S$  of SGNN models is said to have \textit{universal components} if, for any $\epsilon>0$
and any continuous target functions $\overline{\text{COMBINE}}^{(i)}$, $\overline{\text{AGGREGATE}}^{(i)}$,  $\overline{\text{READOUT}}$, 
 there exists an SGNN belonging to $\cal{Q}_S$, with functions $\text{COMBINE}_\theta^{(i)}$, $\text{AGGREGATE}_\theta^{(i)}$, $\text{READOUT}_\theta$ and
parameters $\theta$  such that
\begin{equation*}
\left\|{\bar f}^{(i)}(\mathbf{h},\{ \mathbf{h}_1,\ldots,\mathbf{h}_n\}) - f_\theta^{(i)}(\mathbf{h},\{ \mathbf{h}_1,\ldots,\mathbf{h}_n\}) 
\right\|_\infty\leq \epsilon
\end{equation*}
\begin{equation*}
\left\| \overline{\text{READOUT}}( \mathbf{q})-\text{READOUT}_\theta( \mathbf{q})\right\|_\infty\leq \epsilon\,,
\end{equation*}
holds, for any vectors  $\mathbf{h}$, $\mathbf{h}_1,\ldots,\mathbf{h}_n\in \mathbb{R}^r$, $\mathbf{q}\in\mathbb{R}^s$.
The transition functions ${\bar f}^{(i)}$ and $f_\theta^{(i)}$ correspond to the target function and the SGNN, respectively.
\end{Def}

\vspace{.3cm}

The following result shows that Theorem~\ref{theorem_universal_approx_sauhg} still holds even for SGNNs with universal components.

\begin{Thm}[Approximation by Neural Networks]\label{mainNN}
	Assuming that the hypotheses of Theorem~\ref{theorem_universal_approx_sauhg} are fulfilled and 
	$\cal{Q}_S$ is a class of  SGNNs with universal components.
	Then, there exists a parameter set $\theta$ and some functions 
	$\text{COMBINE}^{(i)}_\theta$, $\text{AGGREGATE}^{(i)}_\theta$, $\text{READOUT}_\theta$,  implemented 	by Neural Networks in $\cal{Q}_S$, such that the thesis of	Theorem~\ref{theorem_universal_approx_sauhg} holds. 
\label{nntheo_static}
\hfill \qed
\end{Thm}

\begin{proof}
The proof is identical to the one contained in \cite{scarselli2008computational}; to give a hint on the methodology, we refer to the more complex proof of the analogous Theorem \ref{dyn_mainNN} for DGNN.
\end{proof}

\subsection{GNNs for Dynamic Graphs}
Suitable functions that preserve the unfolding equivalence on dynamic graphs are dynamic systems. Before this statement is formalized and proven in Prop.~\ref{f_unfold_dyn}, dynamic systems and their property to preserve the dynamic unfolding equivalence are defined in the following.

\begin{Def}[Dynamic System]\label{def:dyn_sys}
Let $\mathcal{G}$ be a domain of dynamic graphs and let $\mathcal{V}=\bigcup_t \mathcal{V}_t$.
A \textbf{dynamic system} is defined as a function ${\text{\straight{dyn}}: \mathcal{D}:= I \times \mathcal{G} \times  \mathcal{V} \rightarrow \mathbb{R}^m}$ formalized for 
$G = (G_t)_{t \in I} \in \mathcal{G},$  and $v \in \mathcal{V}_t$ by
\begin{equation}\label{dynsis}
    \text{\straight{dyn}}(t,G,v): = g(x_v(t)).
\end{equation}
Here, $g:\mathbb{R}^{r} \rightarrow \mathbb{R}^m$ is an output function, and the \textit{state function} $x_v(t)$ is determined by 
\begin{equation*}
    x_v(t)\; = \;
    \begin{cases}
      a(t,G, v)  &  \text{if}\; t=0 \\
      f(x_v(t-1), a(t-1,G,v))  & \text{if} \; t>0,\\
    \end{cases}
\end{equation*}

\noindent for $v\in\mathcal{V}_t$, where $a:I \times \mathcal{G} \times  \mathcal{V} \rightarrow \mathbb{R}^r$ is a function that processes the graph snapshot at time $t$ and provides an $r$-dimensional internal state representation for each node $v$. Finally, $f:\mathbb{R}^{r}\times \mathbb{R}^r \rightarrow \mathbb{R}^{r}$ is a recursive function, that is called \textit{state update function}.
\end{Def}

\begin{Def}\label{def:sys_unfold}
A dynamic system $\text{\straight{dyn}}(\cdot,\cdot,\cdot)$ \textbf{preserves the dynamic unfolding tree equivalence} on $\mathcal{G}$ if and only if for any input graph sequences $G_1,G_2\in \mathcal{G}$, 
and two nodes $u, v\in\mathcal{V}$
it holds 
\begin{align*}
    v \sim_{DUT} u  \Longrightarrow \text{\straight{dyn}}(t,G_1,v)=\text{\straight{dyn}}(t,G_2,u) \quad\forall t.
\end{align*}
\end{Def}

The class of dynamic systems that preserve the unfolding equivalence on $\mathcal{D}$ will be denoted with $\mathcal{F}(\mathcal{D})$. A characterization of $\mathcal{F}(\mathcal{D})$ is given by the following result (following the work in \cite{scarselli2008computational}).

\begin{Prop}[Functions of dynamic unfolding trees] \label{f_unfold_dyn}
A dynamic system \straight{dyn} belongs to $\mathcal{F}(\mathcal{D})$ if and only if there exists a function $\kappa$ defined on attributed trees such that for all ${(t,G,v)\in \mathcal{D}}$ it holds 
\begin{align*}
\text{\straight{dyn}}(t,G,v)= \kappa\Bigl(\bigl(\mathbf{T}_{v}(i)\bigr)_{i \in [t]}\Bigr).
\end{align*}
\end{Prop}
The proof can be found in Apx.~\ref{proof_of_f_unfold}.

Finally, the universal approximation of the Message-Passing GNN for dynamic graphs is determined as follows.

\begin{Thm}[Universal Approximation Theorem by DGNN]\label{dyn_thm_approximation}
Let  $G = (G_t)_{t\in I}$ be a discrete dynamic graph in the graph domain $\mathcal{G}$ and ${N= \max_{G \in \mathcal{G}} |G|} $ be the maximal number of nodes in the domain.
Let $\text{\straight{dyn}}(t,G,v) \in \mathcal{F}(\mathcal{D})$ be any measurable dynamical system preserving 
the unfolding equivalence,  $\| \cdot \|$ be a norm  on $\mathbb{R}$, $P$ be any probability measure 
on $\mathcal{D}$ and $\epsilon, \lambda$ be any real numbers where $\epsilon,\lambda >0$. Then, there exists a DGNN composed by SGNNs with $2N-1$ layers and hidden dimension $r=1$, and Recurrent Neural Network state dimension
$s=1$ such that the function $ \varphi$ realized by this model satisfies 
\begin{equation*}
P( \| \text{\straight{dyn}}(t,G,v)- \varphi(t, G,v) \| \leq \varepsilon) \geq 1- \lambda \quad \quad \quad \forall t \in I.
\end{equation*} 
\end{Thm}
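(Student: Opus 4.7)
The plan is to reduce the dynamic approximation problem to two ingredients already established: the per-snapshot static result of Theorem~\ref{theorem_universal_approx_sauhg} and the characterization given by Proposition~\ref{f_unfold_dyn}. By Proposition~\ref{f_unfold_dyn} we may factor the target as $\text{dyn}(t,G,v) = \kappa\bigl((\bm{T}_v(i))_{i\in[t]}\bigr)$ for some $\kappa$ acting on sequences of dynamic unfolding trees. Hence it suffices to build a Message-passing-DGNN whose state $q_v(t)$ encodes, up to a set of small $P$-measure, the sequence $(\bm{T}_v(i))_{i\in[t]}$ in an essentially injective way, and then approximate the composition with $\kappa$ via a continuously differentiable readout.

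For the spatial component, at each timestamp $t\in I$ I would apply Theorem~\ref{theorem_universal_approx_sauhg} to the snapshot $G_t$, choosing continuously differentiable $\text{COMBINE}_t^{(k)}$ and $\text{AGGREGATE}_t^{(k)}$ so that the scalar output $h_v(t)=h_v^{r_t+1}(t)$ serves as an injective encoding of $\bm{T}_v(t)$ outside a set of probability at most $\lambda/(2|I|)$. Since the static proof is space-partitioning based, the construction yields an explicit encoding whose image lies in a bounded subset of $\mathbb{R}$ on that high-probability set. The temporal component is then handled by selecting the recursive function $f$ of Definition~\ref{def:MPDGNN} as a continuously differentiable injective packing $\mathbb{R}^k\times\mathbb{R}\to\mathbb{R}^k$; this is permissible because $|I|<\infty$, so only finitely many packings are composed and they can be realized by analytic bijections on the compact region of interest. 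Iterating, $q_v(t)$ becomes an injective encoding of $(h_v(i))_{i\in[t]}$, and hence of $(\bm{T}_v(i))_{i\in[t]}$, on the intersection of the high-probability sets. A final continuous readout then approximates $\kappa$ composed with the corresponding decoder; Lusin's theorem, used as in the proof of Theorem~\ref{theorem_universal_approx_sauhg}, allows the measurable $\kappa$ to be approximated in probability by a continuously differentiable map within $\epsilon$.

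The probability bound is obtained by a union bound over $I$: requesting per-snapshot failure probability at most $\lambda/(2|I|)$ and allotting $\lambda/2$ to the continuous-readout approximation of $\kappa$ yields the overall bound $1-\lambda$. The main obstacle, in my view, is constructing $f$ so that the composition of per-step embeddings remains jointly injective and continuously differentiable on a high-probability region, since Theorem~\ref{theorem_universal_approx_sauhg} only supplies a probabilistic, pointwise-in-$t$ injectivity guarantee. I would address this by first restricting all reasoning to the compact set on which the snapshot encodings are bounded, then applying a standard continuous injective-packing argument on that compact set (for instance a smoothly rescaled decimal-interleaving of analytic bijections $\mathbb{R}^2\to\mathbb{R}$). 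With these ingredients assembled, the remainder of the argument mirrors the static proof snapshot by snapshot, and the required inequality for every $t\in I$ follows.
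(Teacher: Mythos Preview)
Your overall architecture---factor $\text{dyn}$ through $\kappa$ via Proposition~\ref{f_unfold_dyn}, encode per-snapshot unfolding trees into $h_v(t)$, then pack the sequence recursively into $q_v(t)$---is the same as the paper's. The paper, however, organises the reduction differently: rather than invoking Theorem~\ref{theorem_universal_approx_sauhg} once per snapshot and aggregating via a union bound over $I$, it applies the hypercube partitioning lemma (Lemma~\ref{hypercubes}) once to the \emph{dynamic} domain (via the bijection to SAUHGs), reducing immediately to a finite set of dynamic graphs with integer features (Theorem~\ref{reduct}). On that finite set the encodings $\nabla$ (for single trees) and $\#_t$ (for sequences of trees) are constructed exactly and injectively, and $f$ is defined by $q_v(t)=\#_t\bigl(\text{APPEND}_t(\#_{t-1}^{-1}(q_v(t-1)),\nabla^{-1}(h_v(t)))\bigr)$; continuous differentiability then comes for free because only finitely many interpolation constraints are imposed.

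There is a genuine gap in your construction of $f$. You propose to realise the recursive packing by ``a smoothly rescaled decimal-interleaving of analytic bijections $\mathbb{R}^2\to\mathbb{R}$'', but no such map exists: by invariance of domain there is no continuous (let alone analytic) injection from $\mathbb{R}^2$ into $\mathbb{R}$, and decimal interleaving is everywhere discontinuous. Restricting to a compact set does not help unless that set is totally disconnected. This is precisely why the paper reduces to a finite domain \emph{before} building the encodings: on a finite set of inputs an injective map into $\mathbb{R}$ is trivial, and any continuously differentiable extension that interpolates those finitely many values suffices. Your per-snapshot route can be salvaged along the same lines---the space-partitioning construction underlying Theorem~\ref{theorem_universal_approx_sauhg} in fact produces encodings taking only finitely many values on the high-probability set, so the image on which $f$ must be injective is finite, not merely bounded---but as written, the step where you build $f$ does not go through.
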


The proof can be found in Apx.~\ref{proof_approx}.\\
Theorem~\ref{dyn_thm_approximation} intuitively states that, given a dynamical system $\text{dyn}$, there is a DGNN that approximates it. 
The functions which the DGNN is a composition of (such as the dynamical function $f$, $\text{COMBINE}^{(i)}$, $\text{AGGREGATE}^{(i)}$, etc.) are supposed to be continuously differentiable but still generic, while  can be generic and completely unconstrained. This situation does not correspond to practical cases where the DGNN adopts particular architectures, and those functions are Neural Networks, or more generally, parametric models -- for example, made of layers of sum, max, average, etc.
Thus, it is of fundamental interest to clarify whether the theorem still holds when the components of the DGNN are parametric models. 

\begin{Def} \label{def:universal_dyn}
A class $\cal{Q}_D$  of discrete DGNN models is said to have \textit{universal components} if the employed  SGNNs have universal components as defined in  Def.~\ref{def:universal_static} 
and the employed recurrent model is designed such that for any $\epsilon_1, \epsilon_2 >0$
and any continuously differentiable target functions
$\overline{\text{f}}$, $\overline{\text{READOUT}}_{\text{dyn}}$ 
 there is a discrete DGNN in  the class $\cal{Q}_D$, with functions 
 $f_\theta$, $\text{READOUT}_{\text{dyn},\theta}$  and parameters $\theta$  such that, for any input vectors 
 $\mathbf{h}\in\mathbb{R}^r$, $\mathbf{q},\mathbf{q}^\star\in\mathbb{R}^s$, it holds
\begin{equation*}
\begin{split}
\left\| \overline{f}( \mathbf{q}, \mathbf{h})-f_\theta( \mathbf{q},\mathbf{h})\right\|_\infty &\leq \epsilon_1, \; \\
\left\| \overline{\text{READOUT}}_{\text{dyn}}( \mathbf{q}^\star)-\text{READOUT}_{\text{dyn},\theta}( \mathbf{q}^\star)\right\|_\infty &\leq \epsilon_2.
\end{split}
\end{equation*}
\end{Def}


The following result shows that Theorem~\ref{dyn_thm_approximation} still holds even for discrete DGNNs with universal components.

\begin{Thm}[Approximation by Neural Networks]\label{dyn_mainNN}
	Assume that the hypotheses of Thm.~\ref{dyn_thm_approximation} are fulfilled and 
	$\cal{Q}_D$ is a class of discrete DGNNs with universal components.
	Then, there exists a parameter set $\theta$, and the functions 
	$\overline{\text{f}}$, $\overline{\text{READOUT}}_{\text{dyn}}$ ,  implemented by Neural Networks in $\cal{Q}_D$, such that Thm.~\ref{dyn_thm_approximation} holds. 
\end{Thm}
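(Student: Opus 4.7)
The plan is to obtain Thm.~\ref{dyn_mainNN} as a quantitative perturbation of Thm.~\ref{dyn_thm_approximation}. I would first invoke Thm.~\ref{dyn_thm_approximation} with tolerance $\varepsilon/2$ and confidence $1-\lambda/2$ to produce a Message-passing-DGNN built from continuously differentiable target blocks $\overline{\text{SGNN}}$, $\overline{f_0}$, $\overline{f}$ whose overall realization $\bar{\varphi}$ satisfies
\begin{equation*}
P(\|\text{\straight{dyn}}(t,G,v) - \bar{\varphi}(t,G,v)\| \leq \varepsilon/2) \geq 1 - \lambda/2, \quad \forall t \in I.
\end{equation*}
Next, by the universal components hypothesis in Def.~\ref{def:universal}, for any triple of tolerances $(\epsilon_1,\epsilon_2,\epsilon_3)$ there exist parametric implementations $\text{SGNN}_w$, $f_{0,w}$, $f_w$ in $\cal{Q}$ that uniformly approximate the respective targets within those tolerances. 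Let $\varphi$ denote the realization of the resulting parametric Message-passing-DGNN. The task reduces to choosing $(\epsilon_1,\epsilon_2,\epsilon_3)$ small enough that $\|\varphi - \bar{\varphi}\| \leq \varepsilon/2$ on a set of probability at least $1-\lambda/2$; a triangle inequality and a union bound then deliver the thesis.

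The core argument is an error-propagation analysis through the recursion $q(t) = f(q(t-1), h(t))$, where $h(t)$ is the SGNN output on $G_t$. Since all graphs in $\mathcal{D'}$ are bounded, $P$ is a probability measure, and the $\overline{\text{SGNN}}$ outputs are obtained from a finite composition of continuously differentiable maps on bounded features, I can restrict attention to a set $\mathcal{D}^\star$ of probability at least $1-\lambda/2$ on which all nominal intermediate hidden states $\bar{q}(t)$ and all SGNN outputs $\bar{h}(t)$ lie in an explicit compact set $K$. On a neighbourhood of $K$, the continuously differentiable $\overline{f}$ admits Lipschitz constants $L_q$ and $L_h$ in its two arguments. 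A single-step estimate then yields
\begin{equation*}
\|q_w(t) - \bar{q}(t)\| \leq L_q \|q_w(t-1) - \bar{q}(t-1)\| + L_h\,\epsilon_1 + \epsilon_3,
\end{equation*}
with the initial error bounded by $\epsilon_2 + L_h\,\epsilon_1$ from $f_0$ and the SGNN. Unrolling over the at-most-$T := |I|$ timestamps gives an accumulated bound of the form $C_T \cdot \max(\epsilon_1,\epsilon_2,\epsilon_3)$ with $C_T$ depending only on $L_q, L_h, T$, so picking the three tolerances below $\varepsilon/(2C_T)$ forces $\|\varphi - \bar{\varphi}\| \leq \varepsilon/2$ on $\mathcal{D}^\star$.

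The main obstacle I anticipate is ensuring that the perturbed iterates $q_w(t)$ actually remain inside the compact region on which the Lipschitz constants and the universal approximation tolerances were fixed: a single excursion outside $K$ would invalidate the Lipschitz estimate and create a circularity (the $\epsilon_i$ were defined only on a neighbourhood of $K$). I would resolve this with a Gronwall-style induction, showing that as long as the accumulated perturbation stays below a fixed margin depending only on $K$ and the Lipschitz constants, $q_w(t)$ remains in a slightly enlarged compact $K' \supset K$; this margin can be absorbed into the choice of tolerances, so the induction closes. Combining the $\varepsilon/2$-closeness of $\bar{\varphi}$ to $\text{\straight{dyn}}$ on a set of probability $1-\lambda/2$ with the uniform $\varepsilon/2$-closeness of $\varphi$ to $\bar{\varphi}$ on $\mathcal{D}^\star$, a union bound gives $P(\|\text{\straight{dyn}} - \varphi\| \leq \varepsilon) \geq 1-\lambda$ for all $t \in I$, which is the thesis.
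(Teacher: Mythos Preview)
Your proposal is correct and follows essentially the same approach as the paper: invoke Thm.~\ref{dyn_thm_approximation} with tolerance $\varepsilon/2$, then control the additional error introduced by replacing the ideal blocks with their neural-network approximants via a Lipschitz/Jacobian-based error propagation through the recursion $q(t)=f(q(t-1),h(t))$, choosing $\epsilon_1,\epsilon_2,\epsilon_3$ small enough that the accumulated perturbation stays below $\varepsilon/2$. Your treatment is in fact more careful than the paper's on two points it glosses over: you split the probability budget via a union bound (the paper uses $1-\lambda$ for both halves and combines them somewhat loosely), and you explicitly address the circularity of keeping the perturbed iterates inside the compact region on which the Lipschitz constants were fixed, which the paper does not discuss.
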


The proof can be found in Apx.~\ref{proof_dyn-mainNN}

\subsection{Discussion}\label{subsec:discussion}
The following remarks may further help to understand the results proven in the previous paragraphs:
\begin{itemize}
     \item  Thm.~\ref{theorem_universal_approx_sauhg} suggests an alternative approach to process 
    several graph domains  with a universal SGNN model. Actually, almost all the graphs, including,
    e.g., hypergraphs, multigraphs, directed graphs, etc., can be transformed to SAUGHs with
    node and edge attributes \cite{thomas2021graph}. Then, we can use a universal GNN model on such a domain using 
    sufficiently expressive AGGREGATE and COMBINE functions.

    
    \item The proofs of Thms.~\ref{theorem_universal_approx_sauhg} and ~\ref{dyn_thm_approximation} are based
    on space partitioning reasoning. Differently from technique based on Stone--Weierstrass theorem \cite{azizian2020expressive}, which
    are existential in nature, such an approach allows us to deduce information about the characteristics
    of networks that reach the desired approximation. Actually, the theorems point out that the approximation
    can be obtained with a minimal hidden dimension $r=1$  both in SGNNs  and DGNNs and with a state dimension $s=1$ in the Recurrent Neural Network of DGNNs. Such a result may appear surprising, 
    but the proofs show that  GNNs can encode unfolding trees with a single real number. 
   
   \item Moreover, Thms.~\ref{theorem_universal_approx_sauhg} and ~\ref{dyn_thm_approximation} specify that  GNNs can obtain
    the approximation with $2N-1$ layers. We could incorrectly presume that the maximum number of layers required to reach a desired approximation depends on the diameter $diam(G)$ of the graph, which can be smaller than the number of nodes $N$ since a GNN can move the information from one node to another in $diam(G)$ iterations. However, $diam(G)$ layers are not always sufficient to distinguish all the nodes of a graph.
    In fact, it has been proven that $N-1$ is a lower bound 
    on the number of iterations that the 1--WL algorithm has to carry out to be able  to distinguish any pairs of 1--WL distinguishable graphs \cite{kiefer2020iteration}, and $2N-1$ is a lower bound for 1--WL algorithm to distinguish pairs of nodes in two different graphs \cite{krebs2015universal}. So overall, $2N-1$ is also the lower bound for the GNN computation time to approximate any function for either graph- or node-focused tasks (see \cite{d2021unifying} for a detailed discussion).
    
   \item   Thms.~\ref{theorem_universal_approx_sauhg} and \ref{dyn_thm_approximation} specify that the approximation is modulo  unfolding equivalence, or, correspondingly, modulo  WL equivalence. 
   It can be observed that in the dynamic case, only a part of the architecture affects the equivalence. Actually, a dynamic GNN contains two modules: the first one, an SGNN, produces an embedding of the input graph at each time instance; the second component contains a Recurrent Neural Network that processes the sequence of the embeddings. The dynamic  unfolding equivalence is defined by sequences of unfolding trees,  which are built independently for each node and time instance by the SGNN. Similarly, the dynamic  WL equivalence is defined by  sequences of colors defined independently at each time step. Intuitively, the Recurrent Neural Network does not affect 
   the equivalence, since Recurrent Neural Networks can be universal approximators and  implement
   any function of the sequence without introducing other constraints beyond those already
   introduced by the SGNN.
   
    \item   Thm.~\ref{dyn_thm_approximation} does not hold for any Dynamic GNN, as we take into account a discrete recurrent model working on graph snapshots (also known as Stacked DGNN). Nevertheless, several DGNNs of this kind are listed in \cite{kazemi_survey_dyn_gnn}, such as  GCRN-M1 \cite{seo2018structured}, RgCNN \cite{narayan2018learning}, PATCHY-SAN \cite{niepert2016learning}, DyGGNN \cite{taheri2019learning}, and others. 
     Still, the approximation capability depends on the functions AGGREGATE and COMBINE designed for each GNN working on the single snapshot and the implemented Recurrent Neural Network. For example, the most general model, the original RNN, has been proven to be a universal approximator \cite{hammer2000approximation}.
\end{itemize}

\section{Experimental Validation}\label{section_experiments}
In this Section, we support our theoretical findings with an experimental study. For this purpose, we carry out two sets of experiments described as follows:
\begin{itemize}
    \item[\textbf{E1.}] We show that a DGNN with universal components can approximate a function $F_{DWL}:\mathcal{G}\rightarrow{\mathbb{N}}$ that models the 1-DWL test. The function $F_{DWL}$ assigns a target label to the input graph that represents the class of 1-DWL equivalence;
    \item[\textbf{E2.}] In the same approximation task, we compare DGNNs with different GNN modules from the literature to show how the universality of the components affects the approximation capability.
\end{itemize}
We focus on the ability of the DGNN to approximate $F_{DWL}$, so only training performances are considered, i.e., we do not investigate the generalization capabilities over a test set. 
Since the 1-DWL test provides the finest partition of graphs reachable by a DGNN, the mentioned tasks experimentally evaluate the expressive power of DGNNs.

\paragraph{Dataset.}
The dataset consists of dynamic graphs, i.e., vectors of static graph snapshots of fixed length $T$. Each static snapshot is one of the graphs in Fig.~\ref{fig:dynamic_graph_generation}. Since the dataset is composed of all the possible combinations of the four graphs, it contains  $4^T$ dynamic graphs. Given that the graphs in Fig.~\ref{fig:dynamic_graph_generation} are pairwise 1-WL equivalent ( a) is 1-WL equivalent to b) and c) is 1-WL equivalent to d) ), the number of classes is $2^T$, with $\frac{4^T}{2^T} = 2^T$ graphs in each class. 
For each dynamic graph, the target is the corresponding 1-DWL output, represented as a natural number. For training purposes, the targets are normalized between 0 and 1 and uniformly spaced in the interval $[0,1]$.
Therefore, the distance between each class label is $d=1/2^T$. A dynamic graph $G$ with target $y_G$ will be said to be correctly classified if, given $\mathsf{out} = \text{DGNN}(G)$, we have $|\mathsf{out}-y_G|< d/2 $.

\begin{figure}
\centering
    \includegraphics[width=.8\linewidth]{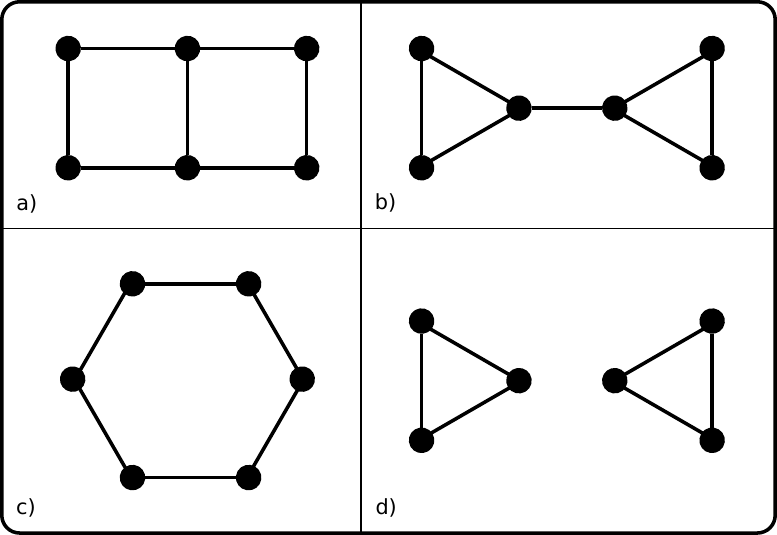}
    \caption{The four static graphs are used as components to generate the synthetic dataset. Graphs a) and b) are equivalent under the static 1--WL test; the same holds for c) and d). } 
    \label{fig:dynamic_graph_generation}
\end{figure}

\paragraph{Experimental setup.}
\begin{itemize}
 \item[E1] For the first set of experiments, the Dynamic Graph Neural Network  is composed of two modules: A Graph Isomorphism Network (GIN) \cite{xu2018powerful} and a Recurrent Neural Network (RNN), which implement the static GNN and the temporal Network $f$ of Eq.~(\ref{eq_dgnn}), respectively. 
Since it has been proven that the GIN is a universal architecture \cite{xu2018powerful} and the RNNs are universal approximators for dynamical systems on vector sequences \cite{hammer2000approximation}, the architecture used in the experiments 
fits the hypothesis of Thm.~\ref{dyn_thm_approximation}. Thus, it can approximate any dynamical system on the temporal graph domain. 

The model hyperparameters for the experiments are set as follows.
The GIN includes $n_{\max}=6$ layers\footnote{As investigated in Subsection \ref{subsec:discussion}, for graph-focused tasks, it is sufficient to perform the message-passing convolution for several times equal to the maximum number of nodes over the graphs in the dataset domain.}. The MLP in the GIN network
contains one hidden layer with a hyperbolic tangent activation function and batch normalization. Hidden layers of different sizes, i.e., $h_{\mathsf{gin}} \in \{1,4,8\}$,
have been tested. For sake of simplicity, the output network has one hidden layer with the same number of neurons as the MLP in the GIN. 
Furthermore, $h_{\mathsf{rnn}}=8$ is the size of the hidden state of the RNN.
\item[E2] 
In the other set of experiments, we test DGNNs composed by different GNN static modules and an RNN module (analogously to \textbf{E1.}). In particular, we compare DGNNs with the GNN module taken from the following list:
\begin{itemize}
    \item GIN  as mentioned before;
    \item Graph Convolutional Network (GCN)  \cite{kipf2016};
    \item GNN presented in \cite{hamilton2017representation} (see also \cite{morris_2019_WL_go_neural}) where the aggregation function is the \textit{sum} of the hidden features of the neighbours; it will be called $\mathsf{gconv\_add}$;
    \item GNN presented in \cite{hamilton2017representation} with \textit{mean} of the hidden features of the neighbours as aggregation function, called $\mathsf{gconv\_mean}$ here; 
    \item GAT  \cite{GAT}.
\end{itemize}
Here,  the used hyperparameters are hidden dimension is $h=8$, the number of layers \mbox{$L=4$}, and the time length \mbox{$L=T=5$}.
\end{itemize}

In both the experimental cases, the model is trained over $300$ epochs using the Adam optimizer with a learning rate of $\lambda = 10^{-3}$. Each configuration is 
is evaluated over $10$ runs. The overall training is then performed on an Intel(R) Core(TM) i7-9800X processor running at 3.80GHz using 31GB of RAM and a GeForce GTX 1080 Ti GPU unit.
The code used to run the experiments can be found at \url{https://github.com/AleDinve/dyn-gnn}.

\paragraph{Results.}
The results of the experiments confirm our theoretical statements. More precisely, the DGNNs performed as follows during training.
\begin{itemize}
    \item[E1] 
\begin{figure*}[ht!]
         \begin{subfigure}{0.49\linewidth}\label{pic:T_5_1}
        \includegraphics[width=\textwidth]
        {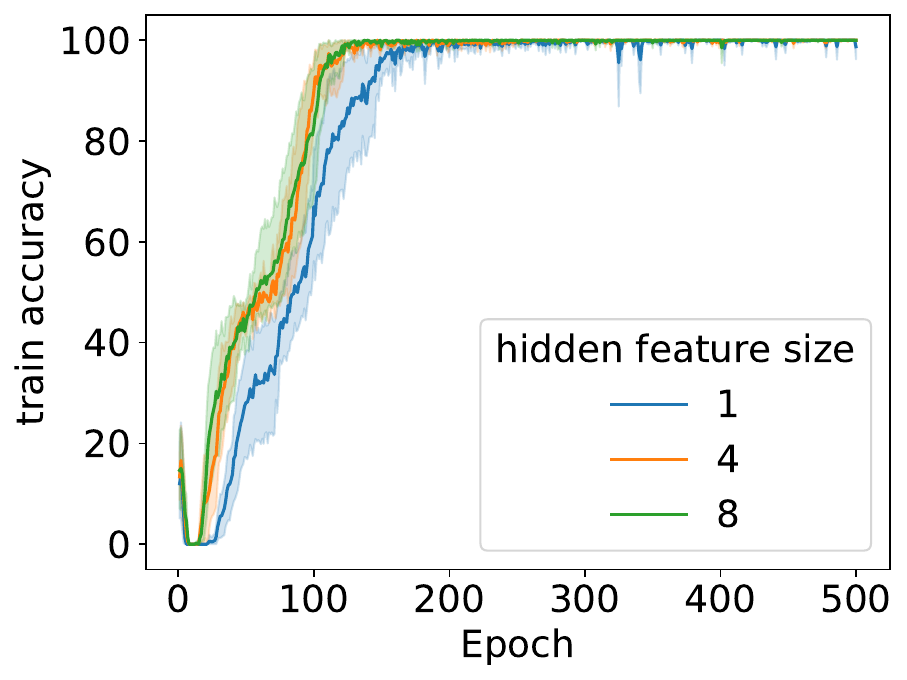} \subcaption{}\label{pic:T_5_1.a}     
    \end{subfigure}
    \begin{subfigure}{0.49\linewidth}\label{pic:T_5_2}
        \includegraphics[width=\textwidth]{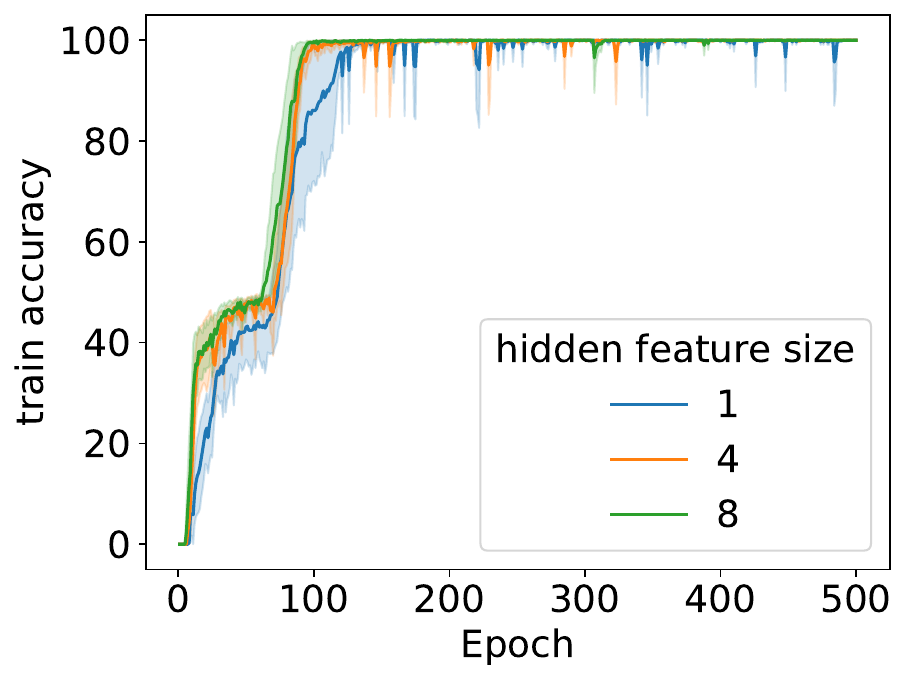}\subcaption{}\label{pic:T_5_1.b}
    \end{subfigure}
     \caption{Experimental Framework E1. Training accuracy over the epochs for a DGNN trained on the dataset containing dynamic graphs up to time length $T=4$ (a) and $T=5$ (b).}
     \label{pic:experiments_acc}   
\end{figure*}
\begin{figure*}[ht!]
         \begin{subfigure}{0.49\linewidth}
        \includegraphics[width=\textwidth]
        {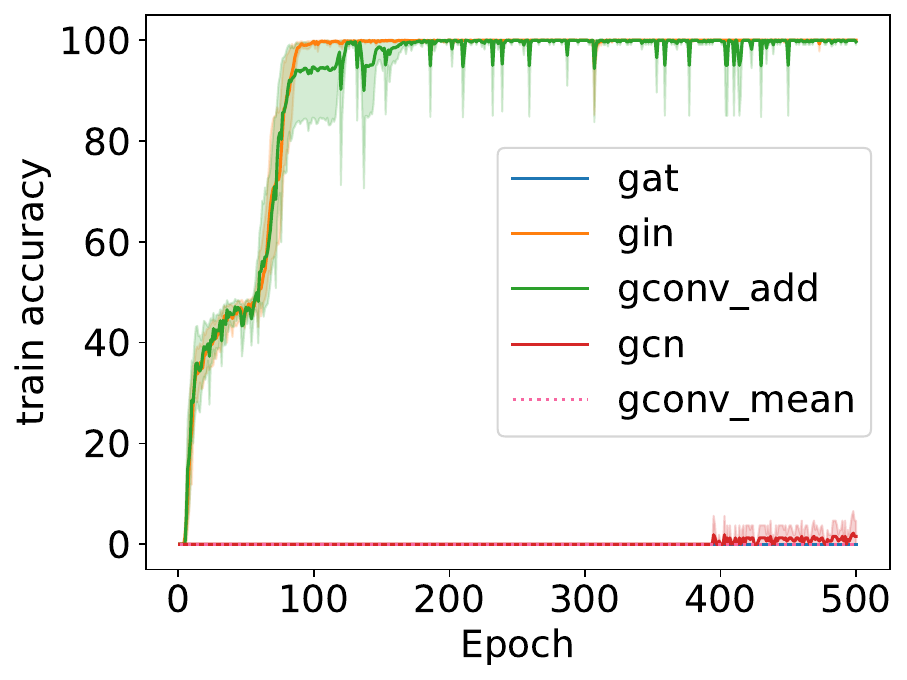} \subcaption{} \label{pic:dgnn_comp1} 
    \end{subfigure}
    \begin{subfigure}{0.49\linewidth}
        \includegraphics[width=\textwidth]{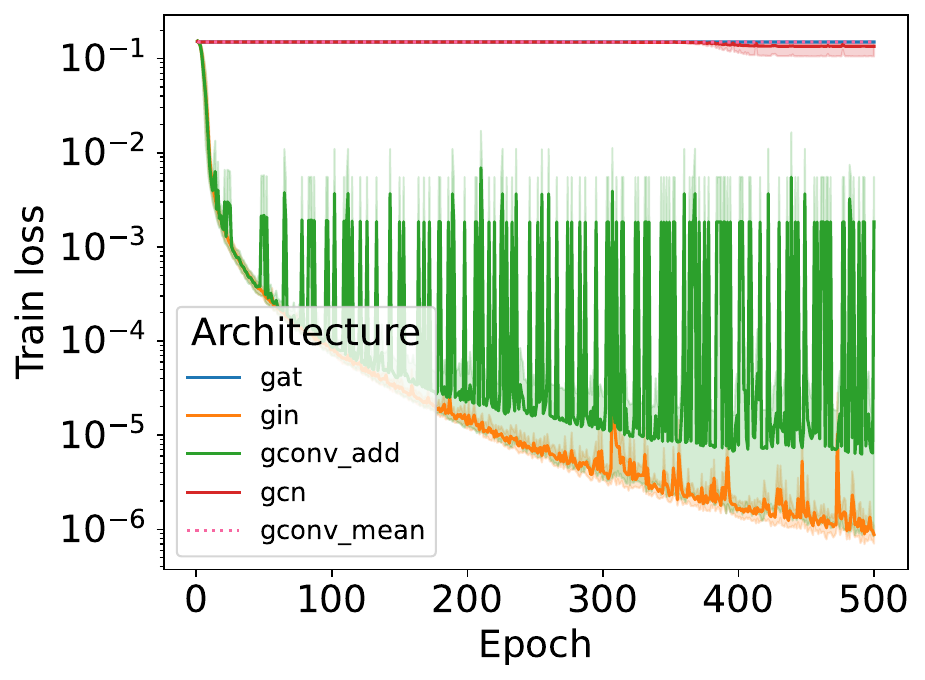}\subcaption{}\label{pic:dgnn_comp2} 
    \end{subfigure}
     \caption{Experimental framework E2. Training accuracy a) and training loss b) over the epochs for several DGNNs trained on the dataset containing dynamic graphs up to time length $T=5$. Figure b) is in logarithmic scale. }
     \label{pic:dgnn_comparison}      
\end{figure*}
In Fig.~\ref{pic:experiments_acc}, the evolution of the training accuracy over the epochs is presented for different GIN hidden layer sizes 
$h_{\mathsf{gin}}$ and for dynamic graphs up to time lengths $T=4$ (Fig.~\ref{pic:T_5_1.a}) and $T=5$ (Fig.~\ref{pic:T_5_1.b}). All the architectures  reach $100\%$ accuracy for experiments on both time lengths. Even setting $h_{\mathsf{gin}}=1$ leads to a perfect classification at a slower rate. It may appear surprising that, even with a hidden representation of size $1$, the DGNN can well approximate the function $F_{DWL}$. 

However, as we already pointed out in Sec.~\ref{subsec:discussion}, the possibility of reaching the universal approximation with a feature of dimension $1$ is confirmed by Thm.~\ref{dyn_thm_approximation}.
\item[E2] 
The DGNN with the GIN module achieve the best performance in terms of learning accuracy and speed of  decreasing, as illustrated in Fig.~\ref{pic:dgnn_comparison}. The DGNN with the $\mathsf{gconv\_add}$ module is able to learn the task, although  learning  is unstable (see Fig.~\ref{pic:dgnn_comparison} b)). This is not surprising since this module has been proven to match the expressive power of the 1-WL test \cite{morris_2019_WL_go_neural}. The other DGNNs are incapable to learn the objective function. This is a consequence of their weaker expressive power, widely investigated in literature \cite{xu2018powerful,d2021unifying}.
\end{itemize}
Thus, overall, our theoretical expectations were met by both experiments.

\section{Conclusion and Future Work}\label{section_conclusion}

This paper provides two extensions of the 1-WL isomorphism test and the definition of unfolding trees of nodes and graphs. First, we introduced WL test notions to attributed and dynamic graphs, and second, we introduced extended concepts for unfolding trees on attributed and dynamic graphs. 
Further, we proved the existence of a bijection between the dynamic 1-WL equivalence of dynamic graphs and the attributed 1-WL equivalence of their corresponding static versions that are bijective regarding their encoded information. The same result we proved w.r.t. the unfolding tree equivalence. 
Moreover, we extended the strong connection between unfolding trees and the (dynamic/attributed) 1–WL tests — proving that they give rise to the same equivalence between nodes for the attributed and the dynamic case, respectively. 
Note that the GNNs working on static graphs usually have another architecture than those working on dynamic graphs. Therefore, we have proved that both the different GNN types can approximate any function that preserves the (attributed/dynamic) unfolding equivalence (i.e., the (attributed/dynamic) 1-WL equivalence). 

\medskip
Note that the dynamic GNN considered here in this paper is given in discrete-time representation, i.e., as a sequence of static graph snapshots without actual timestamps. Thus, Thm.~\ref{dyn_thm_approximation} does not hold for any Dynamic GNN, as we take into account a discrete recurrent model working on graph snapshots (also known as Stacked DGNN). Nevertheless, several DGNNs of this kind are listed in \cite{kazemi_survey_dyn_gnn}, such as  GCRN-M1 \cite{seo2018structured}, RgCNN \cite{narayan2018learning}, PATCHY-SAN \cite{niepert2016learning}, DyGGNN \cite{taheri2019learning}, and others. Still, the approximation capability depends on the functions AGGREGATE and COMBINE designed for each GNN working on the single snapshot and the implemented Recurrent Neural Network. For example, the most general model, the original RNN, has been proven to be a universal approximator \cite{hammer2000approximation}.

As future work, extending all our results for graphs in continuous-time representation would be interesting. One difficulty in this context is deciding in which sense two continuous-time dynamic graphs are called WL equivalent since there are many possibilities for dealing with the given timestamps. 
The investigation of equivalences of dynamic graphs requires determining the handling of dynamic graphs that are equal in their structure but differ in their temporal occurrence, 
i.e., dependent on the commitment of the WL equivalence or the unfolding tree equivalence, it is required to decide whether the concepts need to be \textbf{time-invariant}. For time-invariant equivalence, the following concepts hold as they are. 
In case two graphs with the same structure should be distinguished when they appear at different times, the node and edge attributes can be extended by an additional dimension carrying the exact timestamp. 
Thereby, the unfolding trees of two (structural) equal nodes would be different, having different timestamps in their attributes. 
Then, all dynamic graphs $G^{(j)} \in \mathcal{G}$ are defined over the same time interval $I$. Without loss of generality, this assumption can be made since the set of timestamps of $G^{(j)}$ noted by $I_{G^{(j)}}$ can be padded by including missing timestamps $t_q$ and $G^{(j)}$ can be padded by empty graphs ${G^{(j)}_{q} = (\mathcal{V}^{(j)}_q = \emptyset, \mathcal{E}^{(j)}_q = \emptyset, \alpha_q(\emptyset) = \emptyset, \omega_q(\emptyset) = \emptyset}$. 

\medskip
Furthermore, this paper considers extensions of the usual 1-WL test and the commonly known unfolding trees. Further future work could be to investigate extensions, for example, the n-dim attributed/dynamic WL test or other versions of unfolding trees, covering GNN models not considered by the frameworks used in this paper. 
These extensions might result in a more exemplary classification of the expressive power of different GNN architectures. 

\medskip
Moreover, the proposed results mainly focus on the expressive power of GNNs. However, GNNs with the same expressive power may differ for other fundamental properties, e.g., the computational and memory complexity and the generalization capability. Understanding how the architecture of AGGREGATE$^{(i)}$, COMBINE$^{(i)}$, and READOUT impact those properties is of fundamental importance for practical applications of GNNs.

\section*{Acknowledgments}
This work was partially supported by the Ministry of Education and Research Germany (BMBF, grant number 01IS20047A) and partially by INdAM GNCS group.
We want to acknowledge Monica Bianchini and Maria Lucia Sampoli for their proofreading contribution and fruitful discussions.

\medskip

During the preparation of this work the author(s) used grammarly in order to improve the grammar of the work. After using this tool, the author(s) reviewed and edited the content as needed and take(s) full responsibility for the content of the publication.
\newpage

\appendix
\section{Appendix}\label{section_appendix}

\subsection{Proof of Prop.~\ref{f_unfold}}\label{apx:prop_functions_of_attr_unf_trees}
A function $f$ belongs to $\mathcal{F}(\mathcal{G}')$ if and only if there exists a function $\kappa$ defined on trees such that for any graph $G'\in\mathcal{G}'$ it holds $f(G',v)= \kappa(\mathbf{T}_v)$, for any node $v \in G'$.
\begin{proof} We prove by showing both equivalence directions:
\begin{compactenum}
    \item[$\Leftarrow$] If there exists a function $\kappa$ on attributed unfolding trees such that $f(G',v) =\kappa (\mathbf{T}_v)$ for all $v\in G'$, then $u\sim_{AUT}v$ for $u,v\in G$ implies
        $f(G',u) =\kappa (\mathbf{T}_u) = \kappa (\mathbf{T}_v) = f(G',v)$.
    \item[$\Rightarrow$] If $f$ preserves the attributed unfolding equivalence, then a function $\kappa$ on the attributed unfolding tree of an arbitrary node $v$ can be defined as $\kappa(\mathbf{T}_v):= f(G', v)$. Then, if $\mathbf{T}_u$ and $\mathbf{T}_v$ are two attributed unfolding trees, $\mathbf{T}_u=\mathbf{T}_v$ implies $f(G',u) = f(G',v)$ and $\kappa$ is uniquely defined. 
\end{compactenum}
\end{proof}

\subsection{Proof of Lem.~\ref{equal_trees=WL_coloring}}\label{proof_of_equal_trees=WL_coloring}

Consider $G' = (\mathcal{V}', \mathcal{E}' , \alpha', \omega')$ as the SAUHG resulting from a transformation of an arbitrary static graph $G = (\mathcal{V}, \mathcal{E} , \alpha, \omega)$ with nodes $u, v \in \mathcal{V}$ and corresponding attributes $\alpha_u, \alpha_v$. Then it holds 
\begin{align}\label{eq4}
    \forall \; d \in \mathbb{N}_0 : \mathbf{T}^{d}_u = \mathbf{T}^{d}_v \Longleftrightarrow c_u^{(d)} = c_v^{(d)}
\end{align}

\begin{proof}
The proof is carried out by induction on $d$, which represents both the
depth of the unfolding trees and the iteration step in the WL coloring.
\begin{compactenum}
    \item[\underline{$d = 0$:}] It holds 
    \begin{equation*}
         \quad \mathbf{T}^0_u = Tree(\alpha'_u) = Tree(\alpha'_v) = \mathbf{T}^0_v \;
    \end{equation*}
    \begin{equation*}
        \Longleftrightarrow \; \alpha'_u = \alpha'_v \text{ and } c^{(0)}_u = \text{HASH}(\alpha'_u) = \text{HASH}(\alpha'_v) = c^{(0)}_v.
    \end{equation*}
    \item[\underline{$d > 0$:}] Suppose that Eq.~\eqref{eq4} holds for $d-1$, and prove that it holds also for $d$.
\begin{compactenum}
    \item[-] By definition,  $\mathbf{T}^d_u = \mathbf{T}^d_v$ is equivalent to
        \begin{equation}
            \begin{cases}
            \mathbf{T}^{d-1}_u = \mathbf{T}^{d-1}_v \quad 
               \text{and} \\
           \text{\small Tree}\bigl(\alpha'_u, \Omega'_{ ne_u}, \mathbf{T}^{d-1}_{ne_u}\bigr)  = \text{\small Tree}\bigl(\alpha'_v, \Omega'_{ ne_v}, \mathbf{T}^{d-1}_{ne_v}\bigr).
           \end{cases}\label{eq6}
           \end{equation}
    \item[-] Applying the induction hypothesis, it holds that
        \begin{equation}\label{eq7}
            \mathbf{T}^{d-1}_u = \mathbf{T}^{d-1}_v \Longleftrightarrow c^{(d-1)}_u = c^{(d-1)}_v.
        \end{equation}
    \item[-] Eq.~\eqref{eq6} is equivalent to the following: 
        \begin{equation*}
            \alpha'_u = \alpha'_v, \quad \Omega'_{ ne_u} = \Omega'_{ ne_v}  \quad \text{and} \quad \mathbf{T}^{d-1}_{ne_u} = \mathbf{T}^{d-1}_{ne_v}.
        \end{equation*}
        Given the definition of the unfolding trees and their construction, this is equivalent to
        \begin{equation}
        \begin{cases}
             \omega'_{\{u, {u_{i}}\}} = \omega'_{\{v, {v_{i}}\}} & \forall \; u_i\in ne_{u},\ v_i\in ne_{v}\; \text{ and } \\ 
             \mathbf{T}^{d-1}_{{u_{i}}} = \mathbf{T}^{d-1}_{{v_{i}}} & \forall\; u_i\in ne_{u},\ v_i\in ne_{v}. \label{eq89}
             \end{cases}
        \end{equation}
    \item[-] 
    By the induction hypothesis, 
Eq.~\eqref{eq89} is equivalent to 
        \begin{equation*}
            c^{d-1}_{ne_{u}} = c^{d-1}_{ne_{v}} 
            \text{, \; i.e.,}
            \end{equation*}
            \begin{equation*} \quad
            \sms{c^{(d-1)}_{u_i} \mid u_i \in ne_u} \; = \; \sms{c^{(d-1)}_{v_i} \mid v_i \in ne_v}. 
        \end{equation*}
    \item[-] Putting together Eq.~\eqref{eq7}, \eqref{eq89}, and the fact that the HASH function is bijective, we obtain:
        \begin{equation*}
        \begin{split}
            &\text{HASH}\Bigl(\bigl(c^{(d-1)}_u, \Omega'_{ ne_u}, \sms{c^{(d-1)}_{u_i} \mid u_i \in ne_u}\bigr)\Bigr) \\
            &=\text{HASH}\Bigl(\bigl(c^{(d-1)}_v, \Omega'_{ ne_v}, \sms{c^{(d-1)}_{v_i} \mid v_i \in ne_v}\bigr)\Bigr)
        \end{split}
        \end{equation*}
        which, by definition, is equivalent to $c^{(d)}_u = c^{(d)}_v$.
\end{compactenum}
\end{compactenum}
\end{proof}

\subsection{Proof of Prop.~\ref{f_unfold_dyn}}\label{proof_of_f_unfold}
A dynamic system \straight{dyn} belongs to $\mathcal{F}(\mathcal{D})$ if and only if there exists a function $\kappa$ defined on attributed trees such that for all ${(t,G,v)\in \mathcal{D}}$ it holds 
\begin{align*}
\text{\straight{dyn}}(t,G,v)= \kappa\Bigl(\bigl(\mathbf{T}_{v}(i)\bigr)_{i \in [t]}\Bigr).
\end{align*}

\begin{proof} We show the proposition by proving both directions of the equivalence relation:
\begin{compactenum}
    \item[$\Rightarrow$:] If there exists $\kappa$ such that $\text{\straight{dyn}}(t,G,v) = \kappa\Bigl(\bigl(\mathbf{T}_{v}(i)\bigr)_{i\in [t]}\Bigr)$ for all triplets $(t,G,v)\in \mathcal{D}$, then for any pair of nodes $u \in G_1, v \in G_2$ with $u \sim_{DUT} v$ it holds
    {\small\begin{equation*}
        \text{\straight{dyn}}(t,G_1,u) = \kappa\Bigl(\bigl(\mathbf{T}_{u}(i)\bigr)_{i\in [t]}\Bigr) = \kappa\Bigl(\bigl(\mathbf{T}_{v}(i)\bigr)_{i\in [t]}\Bigr) = \text{\straight{dyn}}(t,G_2,v).
    \end{equation*}}
    \item[$\Leftarrow$:] On the other hand, if \text{\straight{dyn}} preserves the unfolding equivalence, then we can define $\kappa$ as
    \begin{equation*}
        \kappa\Bigl(\bigl(\mathbf{T}_{v}(i)\bigr)_{i\in [t]}\Bigr) = \text{\straight{dyn}}(t,G,v).
    \end{equation*}
    Note that the above equality is a correct specification for a function. In fact, if
    \begin{equation*}
        \kappa\Bigl(\bigl(\mathbf{T}_{v}(i)\bigr)_{i\in [t]}\Bigr) = \kappa\Bigl(\bigl(\mathbf{T}_{u}(i)\bigr)_{i\in [t]}\Bigr) 
    \end{equation*}
    implies $\text{\straight{dyn}}(t,G,u) = \text{\straight{dyn}}(t,G,v)$, then $\kappa$ is uniquely defined.
\end{compactenum}
\end{proof}

\subsection{Sketch of the proof of Thm.~\ref{theorem_universal_approx_sauhg}}\label{apx:proof_apx_thm_attributed}
Let $\mathcal{G}'$ be the domain of bounded SAUHGs  with the maximal number of nodes ${N= \max\limits_{G' \in \mathcal{G}'} |G'|}$.
For any measurable function $f \in \mathcal{F}(\mathcal{G}')$ preserving 
the attributed unfolding equivalence (cf.~Def.~\ref{def_preserve_attr_unf_eq}), any norm $\| \cdot \|$ on $\mathbb{R}$, any probability measure $P$ 
on $\mathcal{G}'$, for any reals $\epsilon, \lambda$ where $\epsilon, \lambda >0$, there exists a SGNN defined by the continuously differentiable functions $\text{COMBINE}^{(i)}$, $\text{AGGREGATE}^{(i)}$, at iteration $ i \leq 2N-1$, and by the function
$\text{READOUT}$, with hidden dimension $r=1$, i.e,  $h_v^i\in\mathbb{R}\ \forall i$, such that the function
$ \varphi$  (realized by the SGNN) computed after $2N-1$ steps for all $G'\in\mathcal{G}'$
satisfies the condition
\begin{equation*}
P( \| f(G',v)- \varphi( G',v) \| \leq \varepsilon) \geq 1- \lambda.
\end{equation*}

\begin{proof}
Since the proof proceeds analoguously to the one in \cite{d2021unifying}, we will only sketch the proof idea here and refer to the original paper for further details. First, we need a preliminary lemma which is an extension of \cite[Lem.~1]{scarselli2008computational} to the domain of SAUHGs $\mathcal{G}'$. Intuitively, this lemma suggests that a domain of SAUHG graphs with continuous attributes can be partitioned into 
small subsets so that the attributes of the graphs are almost constant in each partition. Moreover, in probability, a finite number of partitions is sufficient to cover a large part of the domain.
\begin{Lem}
\label{hypercubes_att}

For any probability measure $P$ on $\mathcal{D}'$, and any reals $\lambda$, $\delta$, where $\lambda > 0$, $\delta \geq 0$, there exists a real $\Bar{b} >0$, which is independent of $\delta$, a set $\Bar{\mathcal{D}'} \subseteq \mathcal{D}'$, and a finite number of partitions $\Bar{\mathcal{D}'_1}, \dots , \Bar{\mathcal{D}'_p}$ of $\Bar{\mathcal{D}'}$, where $\Bar{\mathcal{D}'_j} = \mathcal{G}'_j \times \{ v_j \}$, with $\mathcal{G}'_j \subseteq \mathcal{G}'$ and $v_j \in \mathcal{G}'_j$, such that: 
\begin{compactenum}
    \item $P(\Bar{\mathcal{D}'}) \geq 1- \lambda $ holds;
    \item for each $j$, all the graphs in $\mathcal{G}'_j$ have the same structure, i.e., they differ only in the values of their attributes;
    \item for each set $\Bar{\mathcal{D}'_j}$, there exists a hypercube $\mathcal{H}_j\subset\mathbb{R}^{NM2k}$ such that $\gamma_{G} \in \mathcal{H}_j$ holds for any graph $G' \in \mathcal{G}'_j$ with $N=max_{G'\in\mathcal{G}'}|\mathcal{V}'|$ and $M=max_{G'\in\mathcal{G}'}|\mathcal{E}'|$. Here, $\gamma_{G'}$ denotes the vector obtained by concatenating all the attribute vectors of both nodes and edges of $G'$, namely $\gamma_{G'} = [A_{G'} | \Omega_{G'}]$, where $A_{G'}$ is the concatenation of all the node attributes and $\Omega_{G'}$ is the concatenation of all edge attributes;
    \item for any two different sets $\mathcal{G}'_i$, $\mathcal{G}'_j$, $i \neq j$, their graphs have different structures, or their hypercubes $\mathcal{H}_i$, $\mathcal{H}_j$ are disjoint, i.e., $\mathcal{H}_i \bigcap \mathcal{H}_j = \emptyset$;
    \item for each $j$ and each pair of graphs $G_1$, $G_2 \in \mathcal{G}'_j$, the inequality $\| \gamma_{G_1} - \gamma_{G_2} \|_{\infty} \leq \delta$ holds;
    \item for each graph $G' \in \Bar{\mathcal{D}'}$, the inequality $\| \gamma_{G'}\|_{\infty} \leq \Bar{b}$ holds.
\end{compactenum}
\end{Lem}

\begin{proof}
The proof is similar to the one contained in \cite{scarselli2008computational}. The only remark needed here is that we can consider the whole concatenating of all attributes from both nodes and edges without loss of generality; indeed, if we were considering the node and the edge attributes separately, we would need conditions on the hypercubes, s.t.:
\begin{equation*}
\begin{split}
   & \quad  \| A_{G_1} - A_{G_2} \|_{\infty} \leq \delta^A, \; \delta^A>0, \\ \text{ and }  &\quad 
     \| \Omega_{G_1} - \Omega_{G_2} \|_{\infty} \leq \delta^\Omega, \; \delta^\Omega>0.
\end{split}
\end{equation*}
Then we can stack those attribute vectors, as in the statement, s.t. :
\begin{equation*}
\hspace{-.5cm}
\begin{split}
    \| \gamma_{G_1} - \gamma_{G_2} \|_{\infty} &= \| \bigl([A_{G_1} | \mathbb{0} ] + [ \mathbb{0} | \Omega_{G_1}]\bigr) - \bigl([A_{G_2} | \mathbb{0} ] + [ \mathbb{0} | \Omega_{G_2}] \bigr) \|_{\infty}
    \\
    & \leq \| A_{G_1} - A_{G_2} \|_{\infty} + \| \Omega_{G_1} -  \Omega_{G_2} \|_{\infty} \\ & \leq \delta^A + \delta^\Omega := \delta
\end{split}
\end{equation*}
which allows us to exploit the same proof contained in \cite{scarselli2008computational}.
\end{proof}

The following theorem, where the domain contains a finite number of graphs and the attributes are integers, is equivalent to Thm.~\ref{theorem_universal_approx_sauhg}.



\begin{Thm} \label{attributed reduct}
For any finite set of $p$ patterns\\ ${\{ ( G'_j , v) |\  G'_j \in \mathcal{G}', v \in \mathcal{V}'_j, j\in [p] \}}$, with the maximal number of nodes in the domain $N= \max_{G' \in \mathcal{G}'} |G'|$, for any function which preserves the attributed unfolding equivalence, and for any real $\varepsilon >0$, there exist continuously differentiable functions  $\text{AGGREGATE}^{(i)}$, $\text{COMBINE}^{(i)}$, $\forall \; i \leq 2N-1$, s.t.
{\small\begin{equation*}
\hspace*{-0.1cm}
    \mathbf{h}_v^i = \text{COMBINE}^{(i)}\left( \mathbf{h}_v^{(i-1)}, \text{AGGREGATE}^{(i)}\left(\{ \mathbf{h}_u^{i-1}\}_{ u \in ne_v}, \{\omega_{\{u,v\}}\}_{ u \in ne_v}  \right)\right)
\end{equation*}}

and a function
$\text{READOUT}$, with hidden dimension $r=1$, i.e, $\mathbf{h}_v^i\in\mathbb{R}$, so that the function
$ \varphi$  (realized by the SGNN), computed after $N$ steps,
satisfies the condition
\begin{equation}
|\tau (G'_j, v) - \varphi(G'_j, v)| \leq \varepsilon \quad \text{ for any } v\in \mathcal{V}'_j.
\end{equation}
\end{Thm}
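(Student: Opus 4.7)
The plan is to follow the constructive strategy used for the unattributed case in \cite{d2021unifying}, suitably adapted to absorb the edge attributes that appear inside $\text{AGGREGATE}^{(k)}$ in Def.~\ref{def:SGNN}. Since $\tau$ preserves the attributed unfolding equivalence, Prop.~\ref{f_unfold} yields a function $\kappa$ on attributed unfolding trees with $\tau(\mathbf{G}_i,v_i)=\kappa(\mathbf{T}_{v_i})$, and by Lem.~\ref{ue=equal_trees} applied to the bounded subdomain generated by the finite pattern set only the truncations $\mathbf{T}_{v_i}^{r+1}$ matter. Hence it suffices to exhibit an SGNN whose hidden feature $h_v^{(r+1)}\in\mathbb{R}$ is injective with respect to $\mathbf{T}_v^{r+1}$ on the given patterns, and then to fit a READOUT that reproduces $\kappa$ on the finite image of this encoding up to $\varepsilon$.

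First I would construct the encoding layer by layer. At depth $k=0$, set $h_v^{(0)}=\iota(\alpha_v)$ for a continuously differentiable injection $\iota\colon\mathbb{Z}^a\to\mathbb{R}$ (for instance an affine map composed with a smooth squashing), using that only finitely many integer node features appear. Inductively, assume $h_v^{(k-1)}$ already separates the distinct trees $\mathbf{T}_v^{k-1}$ that occur. To define $\text{AGGREGATE}^{(k)}$ with $m=1$, I would use a multiset-to-real injection of the form $\Phi(S)=\sum_{x\in S}\psi(x)$ in the spirit of \cite{d2021unifying}, but evaluated on the extended inputs $x=(h_u^{(k-1)},\omega_{\{u,v\}})$: here $\psi$ is chosen continuously differentiable and injective on the finite image of the pairs that can arise, e.g.\ a smooth base-$B$ digit expansion with $B$ large enough that the sums $\Phi(S)$ stay separable on the finite collection of multisets actually produced by the patterns. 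Then $\text{COMBINE}^{(k)}$ concatenates $h_v^{(k-1)}$ with $\Phi(\cdot)$ through another smooth injection into $\mathbb{R}$.

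Once $h_v^{(r+1)}$ distinguishes every occurring $\mathbf{T}_v^{r+1}$, I would define READOUT on the resulting finite set of real values by sending each representative to the common target $\tau(\mathbf{G}_i,v_i)$, which is well defined because $\tau$ preserves attributed unfolding equivalence (and consistently by Thm.~\ref{ue=wl}). A continuously differentiable interpolant between these finitely many prescribed values — a smooth partition-of-unity combination, or a standard universal-approximator neural network — then yields $|\tau(\mathbf{G}_i,v_i)-\varphi(\mathbf{G}_i,v_i)|\leq\varepsilon$ for every $i$, closing the proof.

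The principal difficulty lies in producing, within the single scalar feature dimension $m=1$, an AGGREGATE that simultaneously separates multisets and carries the edge attributes, all while remaining continuously differentiable. This is the only genuinely new obstacle compared to \cite{d2021unifying}: the injection $\psi$ must now encode pairs $(h_u^{(k-1)},\omega_{\{u,v\}})$ rather than scalars, and the induced $\Phi$ must still be decodable in sum form. I would handle this by first pairing $(h_u^{(k-1)},\omega_{\{u,v\}})$ through a smooth injection $\mathbb{R}\times\mathbb{Z}^a\to\mathbb{R}$ based on a sufficiently separated digit expansion — feasible because the relevant values lie in a finite set — and then invoking the standard exponential multiset trick with a base large enough to guarantee sum-injectivity on the finite family of multisets that actually appear across the patterns. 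With this injection in place the remaining steps, namely the induction on $k$ up to $r+1$ and the READOUT approximation, mirror the argument of the unattributed case.
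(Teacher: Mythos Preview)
Your proposal is correct and follows the same high--level strategy as the paper's proof: invoke Prop.~\ref{f_unfold} to pass to a function $\kappa$ on attributed unfolding trees, use Lem.~\ref{ue=equal_trees} to truncate at depth $r+1$, build an SGNN whose scalar hidden state injectively encodes $\mathbf{T}_v^{r+1}$, and then interpolate the targets with a smooth READOUT, relying throughout on the finiteness of the pattern set to upgrade the required injections to continuously differentiable maps.

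The one genuine difference is in how the injective encoding is realised. The paper works with a single global tree coding $\triangledown$ (built abstractly from a cardinality argument, as spelled out in the dynamic proof) and implements $\text{AGGREGATE}^{(k)}$/$\text{COMBINE}^{(k)}$ as decode--union--reencode operators on trees; smoothness is obtained only \emph{a posteriori} by redefining these functions off the finite image. You instead construct the encoding bottom--up and explicitly: a smooth pairing $(h_u^{(k-1)},\omega_{\{u,v\}})\mapsto\psi(\cdot)$ followed by a sum--form multiset hash $\Phi(S)=\sum\psi(x)$, with the base chosen large enough for separability on the finitely many multisets that actually occur. Your route is more concrete and makes the role of the edge attributes inside $\text{AGGREGATE}^{(k)}$ completely transparent, at the cost of having to argue sum--injectivity layer by layer; the paper's route is shorter to state but less explicit about how $\omega_{\{u,v\}}$ enters the recursion. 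Both arguments are valid and yield the same conclusion.
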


\begin{proof}[Sketch of the proof]
The idea of the proof is designing a GNN that can approximate any function $\tau$ that preserves the attributed unfolding equivalence. According to Thm.~\ref{f_unfold} there exists a function $\kappa$, s.t.
\begin{equation*}
    \tau(G'_j,v) = \kappa (T_v).
\end{equation*}
Therefore, the GNN has to encode the attributed unfolding tree into the node attributes, i.e., for each node $v$, we want to have $\mathbf{h}_v = \triangledown (\mathbf{T}_v)$, where $\triangledown$ is an encoding function that maps attributed unfolding trees into real numbers. The existence and injectiveness of $\triangledown$ are ensured by construction. More precisely, the encodings are constructed recursively by the $\text{AGGREGATE}^{(i)}$ and the $\text{COMBINE}^{(i)}$
functions using the neighborhood information, i.e., the node and edge attributes.

Consequently, the theorem can be proven given that there exist
appropriate functions $\triangledown$, $\text{AGGREGATE}^{(i)}$, $\text{COMBINE}^{(i)}$ and READOUT.
For this purpose, the functions  $\text{AGGREGATE}^{(i)}$ and $\text{COMBINE}^{(i)}$ must satisfy  $\forall \; i \leq 2N-1$:
\begin{equation*}
\hspace{-0.5cm}
\begin{array}{lc}
\triangledown(\mathbf{T}_v^i)= \mathbf{h}_v^i & \\ 
= \text{\scriptsize COMBINE}^{(i)}\left( \mathbf{h}_v^{(i-1)}, \text{\scriptsize AGGREGATE}^{(i)}\left(\{ \mathbf{h}_u^{i-1}\}_{ u \in ne_v}, \{\omega'_{\{u,v\}}\}_{ u \in ne_v}  \right)\right)  & \\
=  \text{\scriptsize COMBINE}^{(i)} \left(\triangledown(\mathbf{T}_v^{i-1}), \text{\scriptsize AGGREGATE}^{(i)}\left(\{\triangledown(\mathbf{T}^{i-1}_{u})\}_{u \in ne_v}, \Omega'_{ne_v}\right)\right). & \\
\end{array}
\end{equation*}

In a simple solution, $\text{AGGREGATE}^{(i)}$ decodes the attributed trees of the neighbors $u$ of $v$, $\mathbf{T}^{i-1}_{u}$, and stores them into a data structure
to be accessed by $\text{COMBINE}^{(i)}$. The detailed construction of the appropriate functions is given in \cite{d2021unifying}.
\end{proof}
Adopting an argument similar to that in \cite{scarselli2008computational}, it is proven that the previous theorem is equivalent to Thm.~\ref{theorem_universal_approx_sauhg} and this concludes the proof. 
\end{proof}

\subsection{Proof of Thm.~\ref{dyn_thm_approximation}}\label{proof_approx}
Let  $G = (G_t)_{t\in I}$ be a discrete dynamic graph in the graph domain $\mathcal{G}$ and ${N= \max_{G \in \mathcal{G}} |G|} $ be the maximal number of nodes in the domain.
Let $\text{\straight{dyn}}(t,G,v) \in \mathcal{F}(\mathcal{D})$ be any measurable dynamical system preserving 
the unfolding equivalence,  $\| \cdot \|$ be a norm  on $\mathbb{R}$, $P$ be any probability measure 
on $\mathcal{D}$ and $\epsilon, \lambda$ be any real numbers where $\epsilon,\lambda >0$. Then, there exists a DGNN composed by SGNNs with $2N-1$ layers and hidden dimension $r=1$, and Recurrent Neural Network state dimension
$s=1$ such that the function $ \varphi$ realized by this model satisfies 
\begin{equation*}
P( \| \text{\straight{dyn}}(t,G,v)- \varphi(t, G,v) \| \leq \varepsilon) \geq 1- \lambda \quad \quad \quad \forall t \in I.
\end{equation*} 

\begin{proof}
To prove the theorem above, we need some preliminary results.
Using the same argument used for SAUHGs in \ref{theorem_universal_approx_sauhg}, we  need, as a preliminary lemma, the extension of \cite[Lem.~1]{scarselli2008computational} to the domain of dynamic graphs $\mathcal{D}$, analogously to the extension to the domain of SAUHGs in \ref{hypercubes_att}.

\begin{Lem} \label{hypercubes}
Lemma \ref{hypercubes_att} holds for the domain of dynamic graphs $\mathcal{D}$.
\end{Lem}
\begin{proof}
Indeed, taking into account the argument in \cite{thomas2021graph}, one can establish a bijection between the domain of dynamic graphs and the domain of SAUHGs; on the latter, we can directly apply \ref{hypercubes_att}.
\end{proof}

Thm.~\ref{dyn_thm_approximation} is equivalent to the following, where the domain contains a finite number of elements in $\mathcal{D}$ and the attributes are integers.

\begin{Thm}\label{dyn_reduct}
For any finite set of $p$ patterns $$\{ (t^{(j)}, G^{(j)} , v^{(j)}) |\ (t^{(j)}, G^{(j)} , v^{(j)}) \in \mathcal{D}, j\in [p] \}$$ with  the maximal number of nodes ${N= \max_{G \in \mathcal{G}} |G|}$ 
and with graphs having integer features, for any measurable dynamical system preserving the unfolding equivalence,  $\| \cdot \|$ be a norm  on $\mathbb{R}$, $P$ be any probability measure 
on $\mathcal{D}$ and $\epsilon$ be any real number where $\epsilon >0$. 
Then, there exists a DGNN as defined in Def.~\ref{DGNN} such that the function
$ \varphi$  (realized by this model) computed after $N$ steps satisfies the condition

\begin{equation}\label{mainIntEq}
||dyn (t^{(j)}, G^{(j)}, v^{(j)}) - \varphi(t^{(j)}, G^{(j)}, v^{(j)})|| \leq \varepsilon \quad 
\end{equation}
$\forall \; j\in [p] \; \text{ where} \;  t^{(j)}\in I.$
\end{Thm}

The equivalence between Thm. ~\ref{dyn_thm_approximation} and Thm. ~\ref{dyn_reduct}  is formally proven by the following lemma.

\begin{Lem}\label{reducLemma}
Thm.~\ref{dyn_thm_approximation} holds if and only if Thm.~\ref{dyn_reduct} holds.
\end{Lem}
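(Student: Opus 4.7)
The plan is to prove the two implications separately, with Lemma~\ref{hypercubes} serving as the bridge between the probabilistic/measurable formulation of Thm.~\ref{dyn_thm_approximation} and the finite/integer-labeled formulation of Thm.~\ref{reduct}. This mirrors the reduction pattern already used for the static case in the proof sketch of Thm.~\ref{theorem_universal_approx_sauhg}, adapted to handle temporal evolution.

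For the forward direction (Thm.~\ref{reduct} implies Thm.~\ref{dyn_thm_approximation}), I would start from a measurable dynamical system $\text{dyn}$, a probability $P$ on $\mathcal{D}'$, and tolerances $\epsilon, \lambda > 0$. A first application of Lusin's theorem lets me restrict to a subset of probability at least $1 - \lambda/2$ on which $\text{dyn}$ is continuous and, by compactness of the bounded feature region, uniformly continuous. Then Lemma~\ref{hypercubes} with parameters $(\lambda/2, \delta)$, where $\delta$ is chosen below the uniform modulus of continuity of $\text{dyn}$, produces a set $\bar{\mathcal{D}}' \subseteq \mathcal{D}'$ of probability at least $1 - \lambda$ partitioned into finitely many cells of common graph structure whose stacked feature vectors lie within $\delta$-hypercubes. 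I would select one representative per cell, rescale and round its feature vector to a sufficiently fine integer grid so that distinct cells map to distinct integer-labeled patterns, and apply Thm.~\ref{reduct} to this finite family with target accuracy $\epsilon/2$. The resulting Message-passing-DGNN $\varphi$ has continuously differentiable components, so by possibly refining $\delta$ once more, $\varphi$ varies by at most $\epsilon/2$ across each cell. A triangle inequality then gives $\|\text{dyn} - \varphi\| \leq \epsilon$ throughout $\bar{\mathcal{D}}'$, hence the required probability bound.

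For the backward direction (Thm.~\ref{dyn_thm_approximation} implies Thm.~\ref{reduct}), given $n$ integer-labeled patterns $(t, G^{(i)}, v)$, I would define the discrete probability measure $P$ placing uniform mass $1/n$ on each pattern and extend $\text{dyn}$ to a measurable function on $\mathcal{D}'$ in any way that agrees with the desired values at the patterns. Applying Thm.~\ref{dyn_thm_approximation} with $\lambda < 1/n$ yields a Message-passing-DGNN $\varphi$ with failure probability $P(\|\text{dyn} - \varphi\| > \epsilon) \leq \lambda < 1/n$. Since this failure probability equals the fraction of patterns violating the bound, it must be exactly zero, so the $\epsilon$-bound holds at every pattern, establishing Eq.~\eqref{mainIntEq}. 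The main obstacle lies in the forward direction, namely the coordination between the choice of $\delta$ and the continuity of the Message-passing-DGNN $\varphi$ itself constructed through Thm.~\ref{reduct}: the apparent circularity (since $\varphi$ depends on the representatives, which depend on the partition, which depends on $\delta$) is resolved by observing that $\varphi$'s components can be chosen with a Lipschitz constant bounded independently of the specific integer representatives, so one may fix $\delta$ in advance from $\text{dyn}$ alone and only refine it by a controlled factor after $\varphi$ has been built. The backward direction, by contrast, is a one-line pigeonhole argument.
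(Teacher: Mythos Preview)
Your overall architecture matches the paper's: the nontrivial direction is Thm.~\ref{reduct} $\Rightarrow$ Thm.~\ref{dyn_thm_approximation}, and both you and the paper reduce to a continuous target on a bounded region, invoke Lemma~\ref{hypercubes} to partition into finitely many cells of common structure, pick one representative per cell, integer-encode, and apply Thm.~\ref{reduct}. Your treatment of the easy direction via the uniform discrete measure with $\lambda<1/n$ is in fact more explicit than the paper's one-word ``straightforward''.

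The gap is precisely the point you flag as the main obstacle, and your proposed resolution does not close it. Thm.~\ref{reduct} only asserts the \emph{existence} of continuously differentiable $\text{COMBINE}_t^{(k)}$, $\text{AGGREGATE}_t^{(k)}$, $f$; it gives no control on their Lipschitz constants, and certainly not a bound independent of the chosen integer representatives. Your suggested fix---build $\varphi$, read off its Lipschitz constant, then refine $\delta$---reopens the circularity, since refining $\delta$ changes the partition and the representatives, hence $\varphi$ itself. The paper sidesteps this entirely by a different device: it defines the integer encoding $\theta$ (mapping each feature interval to its index) and \emph{absorbs $\theta$ into the first layer of the DGNN}, so that the new network $\varphi$ satisfies $\varphi(\mathbf{G},v)=\bar\varphi(\Theta(\mathbf{G}),v)$. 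Because $\Theta$ is constant on each cell $\mathcal{G}_i$, $\varphi$ is \emph{exactly constant} on each cell, and the triangle inequality needs only the uniform continuity of $\text{dyn}$ (your $\epsilon/2$ from $\bar\delta$) together with the $\epsilon/2$ accuracy of $\bar\varphi$ at the encoded representatives. No Lipschitz estimate on $\varphi$ is ever required. Replace your Lipschitz argument with this ``encode-inside-the-network'' step and the proof goes through.
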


\begin{proof}
    The proof is similar to the one contained in ~\cite{scarselli2008computational}. Nevertheless, we want to highlight that in this case, patterns are taken from $\mathcal{D}:= I \times \mathcal{G} \times  \mathcal{V} $, as we are proving it in the context of the dynamic graphs.
\end{proof}

\noindent
Now, we can  proceed to prove Thm.~\ref{dyn_reduct}.

\begin{proof}[Proof of Thm. \ref{dyn_reduct}]
The proof of this theorem involves assuming that the output dimension is $m=1$, i.e., $\text{dyn}(t,G,v) \in \mathbb{R}$, but the result can be extended to the general case with $m\in\mathbb{N}$ by concatenating the corresponding results. As a result of Thm.~\ref{f_unfold_dyn}, there exists a function $\kappa$, s.t. $\text{dyn}(t,G,v) = g(x_v(t)) = \kappa ((\mathbf{T}_v(i))_{i\in [t]})$ where $\mathbf{T}_v(i)$ is an attributed unfolding tree. Given $N_t$ as the number of nodes of the graph at timestep $t$, in order to store the graph information, an attributed unfolding tree of depth $2N_t-1$ is required for each node, in such a way that $\kappa$ can satisfy
\begin{equation*}
    dyn(t,G,v) = \kappa ((\mathbf{T}_v(i))_{i \in [t]})= \kappa ((\mathbf{T}_v^{N_t}(i))_{i\in [t]}).
\end{equation*}

The required depth is a straight consequence of Theorem 4.1.3 in \cite{d2021unifying}, which we briefly report here.
\begin{Thm}{\cite[Theorem 4.1.3]{d2021unifying}}
    The following statements hold for graphs with at most $N$ nodes.
\begin{enumerate}
\item
  Let $\mathbf{G}$ and $\mathbf{H}$ be connected graphs and $x,y$ be nodes of  $\mathbf{G}$ and $\mathbf{H}$, respectively. The infinite unfolding trees  $\mathbf{T}_x, \mathbf{T}_y$ are equal if and only if
    they are equal up to depth $2N-1$, i.e., $\mathbf{T}_x = \mathbf{T}_y$ iff $\;\mathbf{T}_x^{2N-1} = \mathbf{T}_y^{2N-1}$.
\item For any $N$, there exist two graphs $\mathbf{G}$ and $\mathbf{H}$ with nodes $x,y$, respectively,
    such that the infinite unfolding trees  $\mathbf{T}_x, \mathbf{T}_y$ are different, but they are equal up to
    depth $2N-16 \sqrt{N}$, i.e.,   $\mathbf{T}_x \neq \mathbf{T}_y$ and $\mathbf{T}_x^t = \mathbf{T}_y^t$ for $i \leq 2N-16 \sqrt{N}$. \hfill \qed
\end{enumerate}
\label{th:treeDepth}
\end{Thm}
The Theorem is tightly connected to the results previously displayed in \cite{krebs2015universal}.
\\

The main idea behind the proof of Theorem \ref{dyn_reduct} is to design a DGNN that can encode the sequence of attributed unfolding trees $(\mathbf{T}_v(i))_{i\in [t]}$ into the node attributes at each timestep t, i.e, $\mathbf{q}_v(t) = \#_t ((\mathbf{T}_v(i))_{i\in [t]}) $. This is achieved by using a coding function that maps sequences of $t+1$ attributed trees into real numbers. To implement the encoding that could fit the definition of the DGNN, two coding functions are needed: the $\nabla$ function, which encodes the attributed unfolding trees, and the family of coding functions $\#_t$. 
The composition of these functions is used to define the node's attributes, and the DGNN can produce the desired output by using this encoded information as follows:

\begin{equation}
    \begin{split}
        \mathbf{q}_v(0) & = \mathbf{h}_v(0) =\#_0\left(\nabla ^{-1} (\mathbf{h}_v(0))\right) \\
        \mathbf{q}_v(t) & = \#_t \left(\text{APPEND}_t\left(\#_{t-1}^{-1}(\mathbf{q}_v(t-1)), \nabla ^{-1}(\mathbf{h}_v(t))\right)\right)
    \end{split}
\end{equation}

where the ausiliar function $\text{APPEND}_t$ and the $\nabla$, $\#_t$ coding functions are defined in the following.  \\

\noindent \textbf{APPEND$_t$} \\Let $\mathcal{T}^{d}(v)$ be the domain of the attributed unfolding trees with root $v$, up to a certain depth $d$.
The function\\ {$\text{APPEND}_t : \{(\mathbf{T}_{v}^d(i))_{i\in[t-1]}\}\cup \emptyset \times \mathcal{T}^{d}(v)   \rightarrow \{(\mathbf{T}_{v}^d(i))_{i\in[t]}\}$} is defined as follows: 
\begin{equation*}
\begin{split}
        &\text{APPEND}_0\bigl(\emptyset, \mathbf{T}_{v}^d(0)\bigr) := \mathbf{T}_{v}^d(0)\\
       &\text{APPEND}_t\bigl( \bigl({\mathbf{T}_{v}^d(0), \ldots, \mathbf{T}_{v}^d(t-1)}\bigr),\mathbf{T}_{v}^d(t)\bigr) \\
        &:=  \bigl(\mathbf{T}_{v}^d(0), \ldots,\mathbf{T}_{v}^d(t-1), \mathbf{T}_{v}^d(t)\bigr)
\end{split}
\end{equation*}
\noindent 
Intuitively, this function appends the unfolding tree snapshot of the node $v$ at time $t$ to the sequence of the unfolding trees of that node at the previous $t-1$ timesteps. 


\noindent In the following, the coding functions are defined; their existence and injectiveness are provided by construction. \\

\noindent\textbf{The $\nabla$ Coding Function}\\
Let  $\nabla:= \mu_{\nabla} \circ \nu_{\nabla}$ be a composition of any two  injective functions $\mu_{\nabla}$ and $\nu_{\nabla}$ with the following properties: 
\begin{compactenum}
\item[-] $\mu_{\nabla}$ is an injective function from the domain of static unfolding trees, calculated 
on the nodes in the graph $G_t$,  to the Cartesian product $\mathbb{N} \times \mathbb{N}^P \times \mathbb{Z}^{A}= \mathbb{N}^{P+1} \times \mathbb{Z}^{A} $,  where
 $P$ is the maximum number of nodes a tree could have. 
 
Intuitively, in the Cartesian product, $\mathbb{N}$ represents the tree structure, $\mathbb{N}^P$ denotes the node numbering, while, for each node, an integer vector in $\mathbb{Z}^{A}$ is used to encode the node attributes. Notice that  $\mu_{\nabla}$ exists and is injective since the maximal information contained in an unfolding tree
is given by the union of all its node attributes and all its structural information, which just equals the dimension of the codomain of $\mu_{\nabla}$.
\item[-] $\nu_{\nabla}$ is an injective function from $\mathbb{N}^{P+1} \times \mathbb{Z}^{A}$ to $\mathbb{R}$, whose existence is guaranteed by  the cardinality theory, since the two sets have the same cardinality. 
\end{compactenum}
Since $\mu_{\nabla_t}$ and $\nu_{\nabla_t}$ are injective, also the existence and the injectiveness of
 $\nabla_t$ is ensured.\\

\noindent\textbf{The $\#_t$ Coding Family } \label{hash_def}

\noindent Similarly to $\nabla$, the functions $\#_t :=\mu_{\#_t} \circ \nu_{\#_t}$  are composed by two functions $\mu_{\#_t}$ and $\nu_{\#_t}$ with the following properties: 
\begin{compactenum}
\item[-] $\mu_{\#_t}$ is an injective function from the domain of the dynamic unfolding trees $ \mathcal{T}^d_t(v) := \{(\mathbf{T}^{d}_{v}(i))_{i\in [t]}\}$
 to the Cartesian product $\mathbb{N}^{t} \times \mathbb{N}^{tP_t} \times \mathbb{Z}^{tA}= \mathbb{N}^{t(P_t+1)} \times \mathbb{Z}^{t A} $,  where
$P_t$ is the maximum number of nodes a tree could have at time t.
\item[-] $\nu_{\#_t}$ is an injective function from $\mathbb{N}^{t(P+1)} \times \mathbb{Z}^{t A}$ to $\mathbb{R}$, whose existence is guaranteed by  the cardinality theory, since the two sets have the same cardinality. 
\end{compactenum}
Since $\mu_{\#_t}$ and $\nu_{\#_t}$ are injective, also the existence and the injectiveness of
 $\#_t$ are ensured. \\

 \noindent \textbf{The recursive function $\text{f}$, $\text{AGGREGATE}_t^{(i)},$ $\text{COMBINE}_t^{(i)}$}
 
\noindent The recursive function $f$ has to satisfy
\begin{equation*}
f\bigl(\mathbf{q}_v(t-1), \mathbf{h}_v(t)\bigr) = \#_t\bigl((\mathbf{T}_v(i))_{i\in[t]}\bigr)= \mathbf{q}_v(t) ,
\end{equation*}
where the $\mathbf{h}_v(t)$ is the hidden representation of node $v$ at time $t$ extracted from the $t$-th SGNN, i.e., $\mathbf{h}_v(t) =$ $\text{SGNN}$ $(G_t,v)$. In particular, at each iteration $i$, we have
{\footnotesize
\begin{equation*}
\hspace*{-.4cm} 
    \mathbf{h}^i_v(t)=\text{COMBINE}_t^{(i)} \left(\mathbf{h}_v^{i-1}(t), \text{AGGREGATE}^{(i)}\left(\{ \mathbf{h}_u^{i-1}(t)\}_{ u \in ne_v(t)}, \{\omega_{\{u,v\}}(t)\}_{ u \in ne_v(t)}  \right)\right)
\end{equation*}}
Further, the functions $\text{AGGREGATE}_t^{(i)}$ and $\text{COMBINE}_t^{(i)}$ -- following the proof in \cite{d2021unifying} -- must satisfy 

\begin{equation*}
\hspace*{-1.35cm} 
\begin{array}{lc}
\triangledown(\mathbf{T}_v^i(t))= \mathbf{h}_v^i(t)= & \\\text{\scriptsize COMBINE}_t^{(i)} \left(\mathbf{h}_v^{i-1}(t), \text{\scriptsize AGGREGATE}_t^{(i)}\left(\{\mathbf{h}^{i-1}_{u}(t) \; \}_{u \in ne_v(t)}, \{\omega_{\{u,v\}}(t)\}_{ u \in ne_v(t)}  \right)\right)&  \\
= \text{\scriptsize COMBINE}_t^{(i)} \left(\triangledown(\mathbf{T}_v^{i-1}(t)), \text{\scriptsize AGGREGATE}_t^{(i)}(\{\triangledown(\mathbf{T}^{i-1}_{u}(t)) \; \}_{u \in ne_v(t)})\right) &
\end{array}
\end{equation*}
$\forall \; i \leq 2N-1$ 
and $\forall \; t\in I$.

For example, the trees can be collected into the coding of a new  tree, i.e.,
{\small\begin{equation*}\hspace*{-0.4cm}
    \text{AGGREGATE}_t^{(i)}({\triangledown}(\mathbf{T}^{i-1}_{u}(t) ), {u \in ne_v(t)})= {\triangledown}(\cup_{u \in ne_v(t)} {\triangledown}^{-1}(\triangledown(\mathbf{T}^{i-1}_{u}(t)))),
\end{equation*}}
where  $\cup_{u \in ne_v(t)} $ denotes an operator that  constructs a tree with a  root having void attributes from a set of subtrees (see Fig.~\ref{fig:union}). Then, $\text{COMBINE}_t^{(i)}$ assigns the correct attributes to the root by extracting them from  $\mathbf{T}^{i-1}_{v}(t) $, i.e.,
{\small\begin{equation*}
\hspace*{-0.2cm}
    \text{COMBINE}_t^{(i)}({\triangledown}(\mathbf{T}^{i-1}_{v}(t)),b)= {\triangledown}( \text{ATTACH}({\triangledown}^{-1}({\triangledown}(\mathbf{T}^{i-1}_{v}(t))) ,{\triangledown}^{-1}(b))),
\end{equation*}}
where ATTACH is an operator that  returns a tree constructed by replacing the attributes of the root in the latter tree with those of the former tree and $b$ is the result of the $\text{AGGREGATE}_t^{(i)}$ function. 
\begin{figure}[ht]
\centering
 \includegraphics[width=\linewidth]{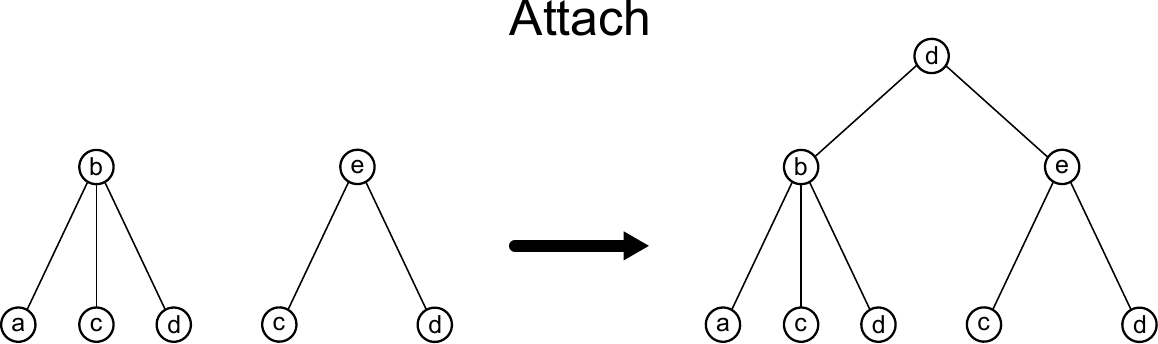}
\caption{The ATTACH operator on trees.}
\label{fig:union}
\end{figure}
\\Now, notice that, with this definition, $\text{AGGREGATE}_t^{(i)}$, $\text{COMBINE}_t^{(i)}$, and $\text{READOUT}_{dyn}$ may not be differentiable. Nevertheless, Eq.~(\ref{mainIntEq}) has to be satisfied only for a finite number of graphs,
namely $G_j$. 
Thus, we can specify other functions $\overline{\text{AGGREGATE}_t}^{(i)}$, $\overline{\text{COMBINE}_t}^{(i)}$, and $\overline{\text{READOUT}}$, which  produce exactly the same computations when they are applied on the graphs 
$G_j$, but that can be extended to the rest of their domain so that they are continuously differentiable. Obviously, such an extension exists since those  functions are only constrained to interpolate a finite number of points \footnote{Notice
that a similar extension can  also be applied to the coding function $\triangledown$ and to the
decoding function $\triangledown^{-1}$. In this case, the coding function is not injective on the whole domain, but only
on the graphs mentioned in the theorem.}. \\
\\
 \noindent \textbf{The $\text{READOUT}_{\text{dyn}}$ function}

\noindent Eventually,  $\text{READOUT}_{\text{dyn}}$  must satisfy:

\begin{equation*}
    \kappa (\cdot) := \text{READOUT}_{\text{dyn}}(\#_t (\cdot ))
\end{equation*}

so that, ultimately, 

\hspace{-2.5cm}
\begin{align*}
     &\text{dyn}(t,G,v) =\\
    & \text{\scriptsize READOUT}_{\text{\scriptsize  dyn}}\left(\#_t \left(\text{\scriptsize  APPEND}_t\left(\#_{t-1}^{-1}(\mathbf{q}_v(t-1)), \nabla ^{-1}(\mathbf{h}_v(t))\right)\right)\right)
\end{align*}

\end{proof}
This concludes the proof via Lem.~\ref{reducLemma}. 
\end{proof}

\subsection{Proof of Thm.~\ref{dyn_mainNN}}\label{proof_dyn-mainNN}

Assume that the hypotheses of Thm.~\ref{dyn_thm_approximation} are fulfilled and 
	$\cal{Q}_D$ is a class of discrete DGNNs with universal components.
	Then, there exists a parameter set $\theta$, and the functions 
	$\text{SGNN}(t)_\theta$, $f_{0,\theta}$, $f_\theta$,  implemented by Neural Networks in $\cal{Q}_D$, such that the thesis of	Thm.~\ref{dyn_thm_approximation} holds. 

\begin{proof}
The idea of the proof follows from the same reasoning adopted in \cite{d2021unifying}. 
Intuitively,  since the discrete DGNN of Thm.~\ref{dyn_thm_approximation} is implemented by continuously differentiable functions, its output depends continuously on the possible changes in the DGNN implementation:
small changes in the function implementation cause small changes in the DGNN outputs.
Therefore, the functions of the DGNN of Thm.~\ref{dyn_thm_approximation}  can be replaced by Neural Networks, provided that those networks are suitable approximators.

As in the proof of the dynamic version of the approximation theorem, cf.~Thm.~\ref{dyn_thm_approximation}, without loss of generality, we will assume that the attribute dimension is $n=1$\footnote{A GNN can theoretically be modeled with multiple components by stacking Neural Networks for each dimension, respectively.}. 

First of all, note that Thm.~\ref{dyn_thm_approximation} ensures that we can find continuously differentiable functions $\bar{f}$ , $\overline{\text{READOUT}_{dyn}}$ such that, for the corresponding function $\bar{\varphi}$ implemented by the DGNN it holds:
\begin{equation}\label{prob_partial}
P( \| \text{\straight{dyn}}(t,G,v)- \bar{\varphi}(t, G,v) \| \leq \frac{\varepsilon}{2}) \geq 1- \lambda \quad \forall \; t \in I,\, \epsilon,\lambda > 0.
\end{equation} 
Considering that the theorem has to hold only in probability, we can also assume that the domain is bounded to a finite set of patterns $\{ (t^{(i)},(G_t)_{t \in I}^{(i)}, v^{(i)}) \; | \; i=1, \dots, p  \}$ (as in Theorem \ref{dyn_reduct}). 
As a result, the functions , $\bar{f}$ and $\overline{\text{READOUT}}_{\text{dyn}}$ are bounded and have a bounded Jacobian. We can take the maximum of these Jacobians, which we will denote as $B$.

Moreover, let  $f_\theta$, $\text{READOUT}_{\text{dyn},\theta}$ be  universal components for DGNN, as in Def.~\ref{def:universal_dyn}, that approximate $\bar{f}$, $\overline{\text{READOUT}}_{\text{dyn}}$, respectively. Further, let  $\epsilon_1, \epsilon_2, > 0$ be the corresponding approximation errors, i.e.,  
\begin{equation}\label{def:universal_comp}
\begin{split}
\left\| \overline{f}( \mathbf{q}, \mathbf{h})-f_{\theta}( \mathbf{q},\mathbf{h})\right\|_\infty\leq \epsilon_1, \text{ and }\\  \left\| \overline{\text{READOUT}}_{\text{dyn}}(Q(t)) -\text{READOUT}_{\text{dyn},\theta}(Q(t)) \right\|_\infty\leq \epsilon_2
\end{split}
\end{equation}
hold $\forall \; t \in I$.

Now, from the proof of Theorem ~\ref{mainNN}  we know that 
\begin{equation*}
    P(\| \overline{\text{SGNN}}_i(G,v) - \text{SGNN}_{\theta,i}(G,v) \| \leq \epsilon_s) \geq 1 - \lambda_i
\end{equation*}

for $i\in [t]$, $\epsilon_s>0$ and for any norm. Then we can take every $\lambda_i$ small enough, s.t. 

\begin{equation*}
    \| \overline{\text{SGNN}}_i(G,v) - \text{SGNN}_{\theta,i}(G,v) \|_{\infty} \leq \epsilon_s 
\end{equation*}

holds on a finite set of patterns large enough to include those ones of the $i$-th timestep of each patterns of dynamic graphs on which Eq.~\eqref{prob_partial} holds.

Therefore, if we define  $\bar{\mathbf{h}}(t) : = \overline{\text{SGNN}}_i(G_t) $ and $\mathbf{h}_\theta(t) : = \text{SGNN}_{\theta,i}(G_t) $ we have

\begin{equation*}
    | \bar{\mathbf{h}}(t) - \mathbf{h}_\theta(t) \|_{\infty} =
\left\|\overline{\text{SGNN}}_i(G_t) - \text{SGNN}_{\theta,i}(G_t) \right\|_\infty\leq \epsilon_s.
\end{equation*}

In addition, let $\bar{H}(t)$ and $H_\theta(t)$ be the internal representations produced by $\overline{\text{SGNN}}$ and$\text{SGNN}_\theta$, stacked over all the nodes of the input graph. Then it holds
\begin{equation}\label{eq:bounded_diff}
\| \bar{H}(t) - H_\theta(t) \|_{\infty} \leq N\epsilon_s \; \forall \; t \in I, 
\end{equation}
where $ N = \text{max}_{G\in\mathcal{G}}|G|$ is the maximum number of nodes of the static graphs input in the bounded domain.
Let again  $\bar{Q}(0) := \bar{H}(0)$ and ${ \bar{Q}(t) := \bar{F}(\bar{Q}(t-1),\bar{H}(t))}$  be the stacking of the internal states produced by DGNN's internal recursive function $\bar{f}$ . Analoguously, let $Q_\theta(0) := H_\theta(0)$ and $ Q_\theta(t) := F_\theta(Q_\theta(t-1),H_\theta(t))$ be the output produced by the corresponding function of the parameterized DGNN.

Then it holds:

\begin{equation}\label{bound_zero}
  \| \bar{Q}(0) - Q_\theta(0) \|_{\infty}   =  \| \bar{H}(0) - H_\theta(0) \|_{\infty} \leq N\epsilon_s
\end{equation}

and

\begin{equation*}
\begin{split}
{\| \bar{f}(\bar{Q}(0), \cdot ) - \bar{f}( Q_{\theta}(0), \cdot) \|_{\infty} \leq B \|\bar{Q}(0) -Q_\theta(0) \|_{\infty}}\\
{\| \bar{f}(\cdot, \bar{H}(1) ) - \bar{f}( \cdot, H_\theta(1)) \|_{\infty} \leq B \|\bar{H}(1) -H_\theta(1) \|_{\infty}}
\end{split}
\end{equation*}
for a bound $B$ on the Jacobian of $\bar{f}(\mathbf{q},\mathbf{h})$ $\forall \; t \in I$ and $\forall \; \mathbf{q}$, which, along with Eq.~\eqref{eq:bounded_diff} and Eq.~\eqref{bound_zero} gives 

\begin{equation}\label{eq_bound_init_fct}
\begin{split}
\| \bar{f}(\bar{Q}(0), \cdot ) - \bar{f}( Q_{\theta}(0), \cdot) \|_{\infty} \leq N\epsilon_s B \\ 
\| \bar{f}(\cdot, \bar{H}(1) ) - \bar{f}( \cdot, H_\theta(1)) \|_{\infty} \leq N \epsilon_s B
\end{split}
\end{equation}

Therefore, we have that: \\
\underline{$t=1:$}
\begin{align*}
    &\| \bar{Q}(1) - Q_\theta(1) \|_{\infty} \\
    = & \| \bar{f}(\bar{Q}(0),\bar{H}(1)) - f_{\theta}(Q_{\theta} (0), H_\theta(1)) \|_{\infty} \\
    \overset{\text{add } 0}{=} &\|  \bar{f}(\bar{Q}(0),\bar{H}(1)) - \bar{f}(Q_\theta(0),\bar{H}(1))   \\
     &\quad +\bar{f}(Q_\theta(0),\bar{H}(1)) - \bar{f}(Q_\theta(0),H_\theta(1)) \\
    &\quad + \bar{f}(Q_\theta(0),H_\theta(1)) - f_\theta(Q_\theta(0),H_\theta(1)) \|_{\infty}\\
     \overset{\triangle\text{-ineq.}}{\leq}&  \| \bar{f}(\bar{Q}(0),\bar{H}(1)) - \bar{f}(Q_\theta(0),\bar{H}(1)) \|_{\infty} \\
    &\quad + \| \bar{f}(Q_\theta(0),\bar{H}(1)) - \bar{f}(Q_\theta(0),H_\theta(1)) \|_{\infty} \\
    &\quad + \| \bar{f}(Q_\theta(0),H_\theta(1)) - f_\theta(Q_\theta(0),H_\theta(1)) \|_{\infty} \\
     \overset{\eqref{eq_bound_init_fct}}{\leq}& 2 N \epsilon_s B  + N\epsilon_1 \\
     := &\lambda_1(\epsilon_s, \epsilon_1).
\end{align*}
\underline{$t> 0$:}
Analoguously, it follows for t>1 that
\begin{align*}
    &\| \bar{Q}(t) - Q_\theta(t) \|_{\infty} \\
    =&  \|  \bar{f}(\bar{Q}(t-1),\bar{H}(t)) - f_\theta(Q_\theta(t-1),H_\theta(t)) \|_{\infty}  \\
    =& \|  \bar{f}(\bar{Q}(t-1),\bar{H}(t)) - \bar{f}(Q_\theta(t-1),\bar{H}(t))   \\
     &\quad +\bar{f}(Q_\theta(t-1),\bar{H}(t)) - \bar{f}(Q_\theta(t-1),H_\theta(t)) \\
    &\quad + \bar{f}(Q_\theta(t-1),H_\theta(t)) - f_\theta(Q_\theta(t-1),H_\theta(t)) \|_{\infty} \\
    \leq & \|  \bar{f}(\bar{Q}(t-1),\bar{H}(t)) - \bar{f}(Q_\theta(t-1),\bar{H}(t)) \|_{\infty} \\
    &\quad + \| \bar{f}(Q_\theta(t-1),\bar{H}(t)) - \bar{f}(Q_\theta(t-1),H_\theta(t)) \|_{\infty} \\
    &\quad + \| \bar{f}(Q_\theta(t-1),H_\theta(t)) - f_\theta(Q_\theta(t-1),H_\theta(t)) \|_{\infty} \\
    \leq &N \lambda_0 B  + N \epsilon_s B + N \epsilon_1 \\
    :=& \lambda_1(\epsilon_s, \epsilon_1).
\end{align*}


The above reasoning can then be applied recursively to prove that 
\begin{equation*}
 \|   \bar{Q}(t) - Q_\theta(t) \|_{\infty} \leq \lambda_t(\epsilon_s, \epsilon_1),
\end{equation*}
where $\lambda_t(\epsilon_s, \epsilon_1)$ could be found as little as possible, according to $\epsilon_s, \epsilon_1$.
Finally, let $\epsilon_2 > 0$, so that 
\begin{align*}
    & \| \bar{\varphi}(t,G,v) - \varphi_\theta(t,G,v) \|_{\infty} \\ & 
    =\| \overline{\text{READOUT}}_{\text{dyn}}( \bar{\mathbf{Q}}(t))-\text{READOUT}_{\text{dyn},\theta}( \mathbf{Q}_{\theta}(t))\|_\infty \\
    &\leq \| \overline{\text{READOUT}}_{\text{dyn}}( \bar{\mathbf{Q}}(t))- \overline{\text{READOUT}}_{\text{dyn}}( \mathbf{Q}_{\theta}(t)) \|_{\infty} \\ 
    &\, + \| \overline{\text{READOUT}}_{\text{dyn}}( \mathbf{Q}_{\theta}(t)) - \text{READOUT}_{\text{dyn},\theta}( \mathbf{Q}_{\theta}(t))\|_\infty \\
    &\leq \lambda_t B + \epsilon_2 = \lambda(\epsilon_s, \epsilon_1, \epsilon_2). \; 
\end{align*}
Thus, we choose $\epsilon_s, \epsilon_1, \epsilon_2$, s.t. $\lambda \leq \frac{\varepsilon}{2}$; going back in probability, we obtain
\begin{equation*}
P( \| \bar{\varphi}(t, G,v) - \varphi_\theta(t, G,v) \| \leq \frac{\varepsilon}{2}) \geq 1- \lambda \quad \quad \quad \forall \; t \in I,
\end{equation*} 
which, along with Eq.~\eqref{prob_partial}, proves the result.
\end{proof}
\bibliographystyle{model1-num-names}

\bibliography{bibliography}

\begin{thebibliography}{53}
\expandafter\ifx\csname natexlab\endcsname\relax\def\natexlab#1{#1}\fi
\providecommand{\url}[1]{\texttt{#1}}
\providecommand{\href}[2]{#2}
\providecommand{\path}[1]{#1}
\providecommand{\DOIprefix}{doi:}
\providecommand{\ArXivprefix}{arXiv:}
\providecommand{\URLprefix}{URL: }
\providecommand{\Pubmedprefix}{pmid:}
\providecommand{\doi}[1]{\href{http://dx.doi.org/#1}{\path{#1}}}
\providecommand{\Pubmed}[1]{\href{pmid:#1}{\path{#1}}}
\providecommand{\bibinfo}[2]{#2}
\ifx\xfnm\relax \def\xfnm[#1]{\unskip,\space#1}\fi
\bibitem[{Skardinga et~al.(2021)Skardinga, Gabrys, and Musial}]{skardinga2021foundations}
\bibinfo{author}{J.~Skardinga}, \bibinfo{author}{B.~Gabrys}, \bibinfo{author}{K.~Musial},
\newblock \bibinfo{title}{{Foundations and Modelling of Dynamic Networks Using Dynamic Graph Neural Networks: A Survey}},
\newblock \bibinfo{journal}{IEEE Access}  (\bibinfo{year}{2021}).
\bibitem[{Kazemi et~al.(2020)Kazemi, Goel, Jain, Kobyzev, Sethi, Forsyth, and Poupart}]{kazemi_survey_dyn_gnn}
\bibinfo{author}{S.~M. Kazemi}, \bibinfo{author}{R.~Goel}, \bibinfo{author}{K.~Jain}, \bibinfo{author}{I.~Kobyzev}, \bibinfo{author}{A.~Sethi}, \bibinfo{author}{P.~Forsyth}, \bibinfo{author}{P.~Poupart},
\newblock \bibinfo{title}{{Representation Learning for Dynamic Graphs: A Survey}},
\newblock \bibinfo{journal}{J. Mach. Learn. Res.} \bibinfo{volume}{21} (\bibinfo{year}{2020}) \bibinfo{pages}{70:1--70:73}.
\bibitem[{Thomas et~al.(2021)Thomas, Beddar-Wiesing, and Moallemy-Oureh}]{thomas2021graph}
\bibinfo{author}{J.~M. Thomas}, \bibinfo{author}{S.~Beddar-Wiesing}, \bibinfo{author}{A.~Moallemy-Oureh},
\newblock \bibinfo{title}{{Graph Type Expressivity and Transformations}},
\newblock \bibinfo{journal}{arXiv:2109.10708}  (\bibinfo{year}{2021}).
\bibitem[{Scarselli et~al.(2009)Scarselli, Gori, Tsoi, Hagenbuchner, and Monfardini}]{jour_scarselli_2009}
\bibinfo{author}{F.~Scarselli}, \bibinfo{author}{M.~Gori}, \bibinfo{author}{A.~C. Tsoi}, \bibinfo{author}{M.~Hagenbuchner}, \bibinfo{author}{G.~Monfardini},
\newblock \bibinfo{title}{{The Graph Neural Network Model}},
\newblock \bibinfo{journal}{IEEE Transactions on Neural Networks} \bibinfo{volume}{20} (\bibinfo{year}{2009}) \bibinfo{pages}{61--80}.
\bibitem[{Micheli(2009)}]{micheli2009neural}
\bibinfo{author}{A.~Micheli},
\newblock \bibinfo{title}{Neural network for graphs: A contextual constructive approach},
\newblock \bibinfo{journal}{IEEE Transactions on Neural Networks} \bibinfo{volume}{20} (\bibinfo{year}{2009}) \bibinfo{pages}{498--511}.
\bibitem[{Li et~al.(2016)Li, Tarlow, Brockschmidt, and Zemel}]{li2015gated}
\bibinfo{author}{Y.~Li}, \bibinfo{author}{D.~Tarlow}, \bibinfo{author}{M.~Brockschmidt}, \bibinfo{author}{R.~Zemel},
\newblock \bibinfo{title}{Gated graph sequence neural networks},
\newblock \bibinfo{journal}{4th International Conference on Learning Representations, {ICLR} 2016, San Juan, Puerto Rico, May 2-4, 2016, Conference Track Proceedings}  (\bibinfo{year}{2016}).
\bibitem[{Bruna et~al.(2014)Bruna, Zaremba, Szlam, and LeCun}]{bruna2013}
\bibinfo{author}{J.~Bruna}, \bibinfo{author}{W.~Zaremba}, \bibinfo{author}{A.~Szlam}, \bibinfo{author}{Y.~LeCun},
\newblock \bibinfo{title}{Spectral networks and locally connected networks on graphs},
\newblock \bibinfo{journal}{ICLR 2014}  (\bibinfo{year}{2014}).
\bibitem[{Kipf and Welling(2017)}]{kipf2016}
\bibinfo{author}{T.~N. Kipf}, \bibinfo{author}{M.~Welling},
\newblock \bibinfo{title}{Semi-supervised classification with graph convolutional networks},
\newblock \bibinfo{journal}{ICLR 2017}  (\bibinfo{year}{2017}).
\bibitem[{Hamilton et~al.(2017)Hamilton, Ying, and Leskovec}]{hamilton2017inductive}
\bibinfo{author}{W.~Hamilton}, \bibinfo{author}{Z.~Ying}, \bibinfo{author}{J.~Leskovec},
\newblock \bibinfo{title}{Inductive representation learning on large graphs},
\newblock in: \bibinfo{booktitle}{Advances in Neural Information Processing Systems}, \bibinfo{year}{2017}, pp. \bibinfo{pages}{1024--1034}.
\bibitem[{Veli{\v{c}}kovi{\'c} et~al.(2018)Veli{\v{c}}kovi{\'c}, Cucurull, Casanova, Romero, Li{\`{o}}, and Bengio}]{GAT}
\bibinfo{author}{P.~Veli{\v{c}}kovi{\'c}}, \bibinfo{author}{G.~Cucurull}, \bibinfo{author}{A.~Casanova}, \bibinfo{author}{A.~Romero}, \bibinfo{author}{P.~Li{\`{o}}}, \bibinfo{author}{Y.~Bengio},
\newblock \bibinfo{title}{Graph attention networks},
\newblock \bibinfo{journal}{ICLR 2018}  (\bibinfo{year}{2018}).
\bibitem[{Battaglia et~al.(2018)Battaglia, Hamrick, Bapst, Sanchez{-}Gonzalez, Zambaldi, Malinowski, Tacchetti, Raposo, Santoro, Faulkner, G{\"{u}}l{\c{c}}ehre, Song, Ballard, Gilmer, Dahl, Vaswani, Allen, Nash, Langston, Dyer, Heess, Wierstra, Kohli, Botvinick, Vinyals, Li, and Pascanu}]{Battaglia2018}
\bibinfo{author}{P.~W. Battaglia}, \bibinfo{author}{J.~B. Hamrick}, \bibinfo{author}{V.~Bapst}, \bibinfo{author}{A.~Sanchez{-}Gonzalez}, \bibinfo{author}{V.~F. Zambaldi}, \bibinfo{author}{M.~Malinowski}, \bibinfo{author}{A.~Tacchetti}, \bibinfo{author}{D.~Raposo}, \bibinfo{author}{A.~Santoro}, \bibinfo{author}{R.~Faulkner}, \bibinfo{author}{{\c{C}}.~G{\"{u}}l{\c{c}}ehre}, \bibinfo{author}{H.~F. Song}, \bibinfo{author}{A.~J. Ballard}, \bibinfo{author}{J.~Gilmer}, \bibinfo{author}{G.~E. Dahl}, \bibinfo{author}{A.~Vaswani}, \bibinfo{author}{K.~R. Allen}, \bibinfo{author}{C.~Nash}, \bibinfo{author}{V.~Langston}, \bibinfo{author}{C.~Dyer}, \bibinfo{author}{N.~Heess}, \bibinfo{author}{D.~Wierstra}, \bibinfo{author}{P.~Kohli}, \bibinfo{author}{M.~M. Botvinick}, \bibinfo{author}{O.~Vinyals}, \bibinfo{author}{Y.~Li}, \bibinfo{author}{R.~Pascanu},
\newblock \bibinfo{title}{Relational inductive biases, deep learning, and graph networks},
\newblock \bibinfo{journal}{arXiv preprint arXiv:1806.01261}  (\bibinfo{year}{2018}).
\bibitem[{Jegelka(2022)}]{jegelka2022theory}
\bibinfo{author}{S.~Jegelka},
\newblock \bibinfo{title}{Theory of graph neural networks: Representation and learning},
\newblock \bibinfo{journal}{arXiv preprint arXiv:2204.07697}  (\bibinfo{year}{2022}).
\bibitem[{Zhou et~al.(2020)Zhou, Cui, Hu, Zhang, Yang, Liu, Wang, Li, and Sun}]{zhou2020graph}
\bibinfo{author}{J.~Zhou}, \bibinfo{author}{G.~Cui}, \bibinfo{author}{S.~Hu}, \bibinfo{author}{Z.~Zhang}, \bibinfo{author}{C.~Yang}, \bibinfo{author}{Z.~Liu}, \bibinfo{author}{L.~Wang}, \bibinfo{author}{C.~Li}, \bibinfo{author}{M.~Sun},
\newblock \bibinfo{title}{Graph neural networks: A review of methods and applications},
\newblock \bibinfo{journal}{AI open} \bibinfo{volume}{1} (\bibinfo{year}{2020}) \bibinfo{pages}{57--81}.
\bibitem[{Luo et~al.(2021)Luo, Shi, Xu, and Tang}]{luo2021predicting}
\bibinfo{author}{S.~Luo}, \bibinfo{author}{C.~Shi}, \bibinfo{author}{M.~Xu}, \bibinfo{author}{J.~Tang},
\newblock \bibinfo{title}{Predicting molecular conformation via dynamic graph score matching},
\newblock \bibinfo{journal}{Advances in Neural Information Processing Systems} \bibinfo{volume}{34} (\bibinfo{year}{2021}) \bibinfo{pages}{19784--19795}.
\bibitem[{Deng et~al.(2019)Deng, Rangwala, and Ning}]{deng2019learning}
\bibinfo{author}{S.~Deng}, \bibinfo{author}{H.~Rangwala}, \bibinfo{author}{Y.~Ning},
\newblock \bibinfo{title}{Learning dynamic context graphs for predicting social events},
\newblock in: \bibinfo{booktitle}{Proceedings of the 25th ACM SIGKDD International Conference on Knowledge Discovery \& Data Mining}, \bibinfo{year}{2019}, pp. \bibinfo{pages}{1007--1016}.
\bibitem[{Leman and Weisfeiler(1968)}]{leman1968}
\bibinfo{author}{A.~Leman}, \bibinfo{author}{B.~Weisfeiler},
\newblock \bibinfo{title}{A reduction of a graph to a canonical form and an algebra arising during this reduction},
\newblock \bibinfo{journal}{Nauchno-Technicheskaya Informatsiya} \bibinfo{volume}{2} (\bibinfo{year}{1968}) \bibinfo{pages}{12--16}.
\bibitem[{Shervashidze et~al.(2011)Shervashidze, Schweitzer, Van~Leeuwen, Mehlhorn, and Borgwardt}]{wltest2011}
\bibinfo{author}{N.~Shervashidze}, \bibinfo{author}{P.~Schweitzer}, \bibinfo{author}{E.~J. Van~Leeuwen}, \bibinfo{author}{K.~Mehlhorn}, \bibinfo{author}{K.~M. Borgwardt},
\newblock \bibinfo{title}{{Weisfeiler-Lehman Graph Kernels}},
\newblock \bibinfo{journal}{Journal of Machine Learning Research}  (\bibinfo{year}{2011}).
\bibitem[{Xu et~al.(2019)Xu, Hu, Leskovec, and Jegelka}]{xu2018powerful}
\bibinfo{author}{K.~Xu}, \bibinfo{author}{W.~Hu}, \bibinfo{author}{J.~Leskovec}, \bibinfo{author}{S.~Jegelka},
\newblock \bibinfo{title}{How powerful are graph neural networks?},
\newblock \bibinfo{journal}{7th International Conference on Learning Representations, {ICLR} 2019, New Orleans, LA, USA, May 6-9, 2019}  (\bibinfo{year}{2019}).
\bibitem[{Krebs and Verbitsky(2015)}]{krebs2015universal}
\bibinfo{author}{A.~Krebs}, \bibinfo{author}{O.~Verbitsky},
\newblock \bibinfo{title}{Universal covers, color refinement, and two-variable counting logic: Lower bounds for the depth},
\newblock in: \bibinfo{booktitle}{2015 30th Annual ACM/IEEE Symposium on Logic in Computer Science}, \bibinfo{organization}{IEEE}, \bibinfo{year}{2015}, pp. \bibinfo{pages}{689--700}.
\bibitem[{{D'Inverno, Giuseppe A. and Bianchini, Monica and Sampoli, Maria L. and Scarselli, Franco}(2021)}]{d2021unifying}
\bibinfo{author}{{D'Inverno, Giuseppe A. and Bianchini, Monica and Sampoli, Maria L. and Scarselli, Franco}},
\newblock \bibinfo{title}{{On the approximation capability of GNNs in node classification/regression tasks}},
\newblock \bibinfo{journal}{arXiv preprint arXiv:2106.08992}  (\bibinfo{year}{2021}).
\bibitem[{Garg et~al.(2020)Garg, Jegelka, and Jaakkola}]{garg2020generalization}
\bibinfo{author}{V.~Garg}, \bibinfo{author}{S.~Jegelka}, \bibinfo{author}{T.~Jaakkola},
\newblock \bibinfo{title}{Generalization and representational limits of graph neural networks},
\newblock in: \bibinfo{booktitle}{International Conference on Machine Learning}, \bibinfo{organization}{PMLR}, \bibinfo{year}{2020}, pp. \bibinfo{pages}{3419--3430}.
\bibitem[{Scarselli et~al.(2008)Scarselli, Gori, Tsoi, Hagenbuchner, and Monfardini}]{scarselli2008computational}
\bibinfo{author}{F.~Scarselli}, \bibinfo{author}{M.~Gori}, \bibinfo{author}{A.~C. Tsoi}, \bibinfo{author}{M.~Hagenbuchner}, \bibinfo{author}{G.~Monfardini},
\newblock \bibinfo{title}{{Computational Capabilities of Graph Neural Networks}},
\newblock \bibinfo{journal}{IEEE Transactions on Neural Networks} \bibinfo{volume}{20} (\bibinfo{year}{2008}) \bibinfo{pages}{81--102}.
\bibitem[{Xu et~al.(2020)Xu, Ruan, Korpeoglu, Kumar, and Achan}]{xu2020inductive}
\bibinfo{author}{D.~Xu}, \bibinfo{author}{C.~Ruan}, \bibinfo{author}{E.~Korpeoglu}, \bibinfo{author}{S.~Kumar}, \bibinfo{author}{K.~Achan},
\newblock \bibinfo{title}{Inductive representation learning on temporal graphs},
\newblock \bibinfo{journal}{arXiv preprint arXiv:2002.07962}  (\bibinfo{year}{2020}).
\bibitem[{Rossi et~al.(2020)Rossi, Chamberlain, Frasca, Eynard, Monti, and Bronstein}]{rossi2020temporal}
\bibinfo{author}{E.~Rossi}, \bibinfo{author}{B.~Chamberlain}, \bibinfo{author}{F.~Frasca}, \bibinfo{author}{D.~Eynard}, \bibinfo{author}{F.~Monti}, \bibinfo{author}{M.~Bronstein},
\newblock \bibinfo{title}{Temporal graph networks for deep learning on dynamic graphs},
\newblock \bibinfo{journal}{arXiv preprint arXiv:2006.10637}  (\bibinfo{year}{2020}).
\bibitem[{Yu et~al.(2017)Yu, Yin, and Zhu}]{yu2017spatio}
\bibinfo{author}{B.~Yu}, \bibinfo{author}{H.~Yin}, \bibinfo{author}{Z.~Zhu},
\newblock \bibinfo{title}{Spatio-temporal graph convolutional networks: A deep learning framework for traffic forecasting},
\newblock \bibinfo{journal}{arXiv preprint arXiv:1709.04875}  (\bibinfo{year}{2017}).
\bibitem[{Trivedi et~al.(2017)Trivedi, Dai, Wang, and Song}]{trivedi2017know}
\bibinfo{author}{R.~Trivedi}, \bibinfo{author}{H.~Dai}, \bibinfo{author}{Y.~Wang}, \bibinfo{author}{L.~Song},
\newblock \bibinfo{title}{Know-evolve: Deep temporal reasoning for dynamic knowledge graphs},
\newblock in: \bibinfo{booktitle}{international conference on machine learning}, \bibinfo{organization}{PMLR}, \bibinfo{year}{2017}, pp. \bibinfo{pages}{3462--3471}.
\bibitem[{Wu et~al.(2020)Wu, Cao, Cheung, and Hamilton}]{wu2020temp}
\bibinfo{author}{J.~Wu}, \bibinfo{author}{M.~Cao}, \bibinfo{author}{J.~C.~K. Cheung}, \bibinfo{author}{W.~L. Hamilton},
\newblock \bibinfo{title}{Temp: Temporal message passing for temporal knowledge graph completion},
\newblock \bibinfo{journal}{arXiv preprint arXiv:2010.03526}  (\bibinfo{year}{2020}).
\bibitem[{Thomas et~al.(2022)Thomas, Moallemy-Oureh, Beddar-Wiesing, and Holzh{\"u}ter}]{thomas2022graph}
\bibinfo{author}{J.~Thomas}, \bibinfo{author}{A.~Moallemy-Oureh}, \bibinfo{author}{S.~Beddar-Wiesing}, \bibinfo{author}{C.~Holzh{\"u}ter},
\newblock \bibinfo{title}{Graph neural networks designed for different graph types: A survey},
\newblock \bibinfo{journal}{Transactions on Machine Learning Research}  (\bibinfo{year}{2022}).
\bibitem[{Barros et~al.(2021)Barros, Mendon{\c{c}}a, Vieira, and Ziviani}]{barros2021survey}
\bibinfo{author}{C.~D. Barros}, \bibinfo{author}{M.~R. Mendon{\c{c}}a}, \bibinfo{author}{A.~B. Vieira}, \bibinfo{author}{A.~Ziviani},
\newblock \bibinfo{title}{A survey on embedding dynamic graphs},
\newblock \bibinfo{journal}{ACM Computing Surveys (CSUR)} \bibinfo{volume}{55} (\bibinfo{year}{2021}) \bibinfo{pages}{1--37}.
\bibitem[{Longa et~al.(2023)Longa, Lachi, Santin, Bianchini, Lepri, Lio, franco scarselli, and Passerini}]{longa2023graph}
\bibinfo{author}{A.~Longa}, \bibinfo{author}{V.~Lachi}, \bibinfo{author}{G.~Santin}, \bibinfo{author}{M.~Bianchini}, \bibinfo{author}{B.~Lepri}, \bibinfo{author}{P.~Lio}, \bibinfo{author}{franco scarselli}, \bibinfo{author}{A.~Passerini},
\newblock \bibinfo{title}{Graph neural networks for temporal graphs: State of the art, open challenges, and opportunities},
\newblock \bibinfo{journal}{Transactions on Machine Learning Research}  (\bibinfo{year}{2023}).
\bibitem[{Xue et~al.(2022)Xue, Zhong, Li, Chen, Zhai, and Kong}]{xue2022dynamic}
\bibinfo{author}{G.~Xue}, \bibinfo{author}{M.~Zhong}, \bibinfo{author}{J.~Li}, \bibinfo{author}{J.~Chen}, \bibinfo{author}{C.~Zhai}, \bibinfo{author}{R.~Kong},
\newblock \bibinfo{title}{Dynamic network embedding survey},
\newblock \bibinfo{journal}{Neurocomputing} \bibinfo{volume}{472} (\bibinfo{year}{2022}) \bibinfo{pages}{212--223}.
\bibitem[{You et~al.(2021)You, Gomes-Selman, Ying, and Leskovec}]{you2021identity}
\bibinfo{author}{J.~You}, \bibinfo{author}{J.~Gomes-Selman}, \bibinfo{author}{R.~Ying}, \bibinfo{author}{J.~Leskovec},
\newblock \bibinfo{title}{Identity-aware graph neural networks},
\newblock \bibinfo{journal}{Thirty-Fifth {AAAI} Conference on Artificial Intelligence, {AAAI} 2021, Thirty-Third Conference on Innovative Applications of Artificial Intelligence, {IAAI} 2021, The Eleventh Symposium on Educational Advances in Artificial Intelligence, {EAAI} 2021, Virtual Event, February 2-9, 2021}  (\bibinfo{year}{2021}) \bibinfo{pages}{10737--10745}.
\bibitem[{Morris et~al.(2019)Morris, Ritzert, Fey, Hamilton, Lenssen, Rattan, and Grohe}]{morris_2019_WL_go_neural}
\bibinfo{author}{C.~Morris}, \bibinfo{author}{M.~Ritzert}, \bibinfo{author}{M.~Fey}, \bibinfo{author}{W.~L. Hamilton}, \bibinfo{author}{J.~E. Lenssen}, \bibinfo{author}{G.~Rattan}, \bibinfo{author}{M.~Grohe},
\newblock \bibinfo{title}{{Weisfeiler and Leman Go Neural: Higher-Order Graph Neural Networks}},
\newblock in: \bibinfo{booktitle}{The Thirty-Third {AAAI} Conference on Artificial Intelligence, {AAAI} 2019, The Thirty-First Innovative Applications of Artificial Intelligence Conference, {IAAI} 2019, The Ninth {AAAI} Symposium on Educational Advances in Artificial Intelligence, {EAAI} 2019, Honolulu, Hawaii, USA, January 27 - February 1, 2019}, \bibinfo{publisher}{{AAAI} Press}, \bibinfo{year}{2019}, pp. \bibinfo{pages}{4602--4609}. \URLprefix \url{https://doi.org/10.1609/aaai.v33i01.33014602}. \DOIprefix\doi{10.1609/aaai.v33i01.33014602}.
\bibitem[{Bodnar et~al.(2021{\natexlab{a}})Bodnar, Frasca, Wang, Otter, Montufar, Lio, and Bronstein}]{bodnar2021weisfeiler}
\bibinfo{author}{C.~Bodnar}, \bibinfo{author}{F.~Frasca}, \bibinfo{author}{Y.~Wang}, \bibinfo{author}{N.~Otter}, \bibinfo{author}{G.~F. Montufar}, \bibinfo{author}{P.~Lio}, \bibinfo{author}{M.~Bronstein},
\newblock \bibinfo{title}{{Weisfeiler and Lehman Go Topological: Message Passing Simplicial Networks}},
\newblock in: \bibinfo{booktitle}{International Conference on Machine Learning}, \bibinfo{organization}{PMLR}, \bibinfo{year}{2021}{\natexlab{a}}, pp. \bibinfo{pages}{1026--1037}.
\bibitem[{Bodnar et~al.(2021{\natexlab{b}})Bodnar, Frasca, Otter, Wang, Li{\`o}, Montufar, and Bronstein}]{bodnar2021weisfeilerb}
\bibinfo{author}{C.~Bodnar}, \bibinfo{author}{F.~Frasca}, \bibinfo{author}{N.~Otter}, \bibinfo{author}{Y.~G. Wang}, \bibinfo{author}{P.~Li{\`o}}, \bibinfo{author}{G.~F. Montufar}, \bibinfo{author}{M.~Bronstein},
\newblock \bibinfo{title}{{Weisfeiler and Lehman Go Cellular: CW Networks}},
\newblock \bibinfo{journal}{Advances in Neural Information Processing Systems} \bibinfo{volume}{34} (\bibinfo{year}{2021}{\natexlab{b}}).
\bibitem[{Barcelo et~al.(2022)Barcelo, Galkin, Morris, and Orth}]{barcelo2022weisfeiler}
\bibinfo{author}{P.~Barcelo}, \bibinfo{author}{M.~Galkin}, \bibinfo{author}{C.~Morris}, \bibinfo{author}{M.~R. Orth},
\newblock \bibinfo{title}{Weisfeiler and leman go relational},
\newblock in: \bibinfo{booktitle}{Learning on Graphs Conference}, \bibinfo{organization}{PMLR}, \bibinfo{year}{2022}, pp. \bibinfo{pages}{46--1}.
\bibitem[{Sato(2020)}]{sato2020survey}
\bibinfo{author}{R.~Sato},
\newblock \bibinfo{title}{{A Survey on The Expressive Power of Graph Neural Networks}},
\newblock \bibinfo{journal}{ArXiv} \bibinfo{volume}{abs/2003.04078} (\bibinfo{year}{2020}).
\bibitem[{Zhang and Li(2021)}]{zhang2021nested}
\bibinfo{author}{M.~Zhang}, \bibinfo{author}{P.~Li},
\newblock \bibinfo{title}{Nested graph neural networks},
\newblock \bibinfo{journal}{Advances in Neural Information Processing Systems} \bibinfo{volume}{34} (\bibinfo{year}{2021}) \bibinfo{pages}{15734--15747}.
\bibitem[{Abboud et~al.(2021)Abboud, Ceylan, Grohe, and Lukasiewicz}]{abboud2020}
\bibinfo{author}{R.~Abboud}, \bibinfo{author}{{\.I}.~{\.I}. Ceylan}, \bibinfo{author}{M.~Grohe}, \bibinfo{author}{T.~Lukasiewicz},
\newblock \bibinfo{title}{The surprising power of graph neural networks with random node initialization},
\newblock \bibinfo{journal}{Proceedings of the Thirtieth International Joint Conference on Artificial Intelligence, {IJCAI} 2021, Virtual Event / Montreal, Canada, 19-27 August 2021}  (\bibinfo{year}{2021}) \bibinfo{pages}{2112--2118}.
\bibitem[{Dasoulas et~al.(2020)Dasoulas, Santos, Scaman, and Virmaux}]{dasoulas2019coloring}
\bibinfo{author}{G.~Dasoulas}, \bibinfo{author}{L.~D. Santos}, \bibinfo{author}{K.~Scaman}, \bibinfo{author}{A.~Virmaux},
\newblock \bibinfo{title}{Coloring graph neural networks for node disambiguation},
\newblock in: \bibinfo{editor}{C.~Bessiere} (Ed.), \bibinfo{booktitle}{Proceedings of the Twenty-Ninth International Joint Conference on Artificial Intelligence, {IJCAI} 2020}, \bibinfo{publisher}{ijcai.org}, \bibinfo{year}{2020}, pp. \bibinfo{pages}{2126--2132}. \URLprefix \url{https://doi.org/10.24963/ijcai.2020/294}. \DOIprefix\doi{10.24963/ijcai.2020/294}.
\bibitem[{Maron et~al.(2019)Maron, Ben-Hamu, Serviansky, and Lipman}]{maron2019provably}
\bibinfo{author}{H.~Maron}, \bibinfo{author}{H.~Ben-Hamu}, \bibinfo{author}{H.~Serviansky}, \bibinfo{author}{Y.~Lipman},
\newblock \bibinfo{title}{Provably powerful graph networks},
\newblock \bibinfo{journal}{Advances in neural information processing systems} \bibinfo{volume}{32} (\bibinfo{year}{2019}).
\bibitem[{Azizian and Lelarge(2021)}]{azizian2020expressive}
\bibinfo{author}{W.~Azizian}, \bibinfo{author}{M.~Lelarge},
\newblock \bibinfo{title}{Expressive power of invariant and equivariant graph neural networks},
\newblock \bibinfo{journal}{9th International Conference on Learning Representations, {ICLR} 2021, Virtual Event, Austria, May 3-7, 2021}  (\bibinfo{year}{2021}).
\bibitem[{Maron et~al.(2019)Maron, Ben-Hamu, Shamir, and Lipman}]{maron2018invariant}
\bibinfo{author}{H.~Maron}, \bibinfo{author}{H.~Ben-Hamu}, \bibinfo{author}{N.~Shamir}, \bibinfo{author}{Y.~Lipman},
\newblock \bibinfo{title}{Invariant and equivariant graph networks},
\newblock \bibinfo{journal}{7th International Conference on Learning Representations, {ICLR} 2019, New Orleans, LA, USA, May 6-9, 2019}  (\bibinfo{year}{2019}).
\bibitem[{Loukas(2020)}]{loukas2019graph}
\bibinfo{author}{A.~Loukas},
\newblock \bibinfo{title}{What graph neural networks cannot learn: depth vs width},
\newblock \bibinfo{journal}{8th International Conference on Learning Representations, {ICLR} 2020, Addis Ababa, Ethiopia, April 26-30, 2020}  (\bibinfo{year}{2020}).
\bibitem[{Grohe and Schweitzer(2020)}]{grohe2020graph}
\bibinfo{author}{M.~Grohe}, \bibinfo{author}{P.~Schweitzer},
\newblock \bibinfo{title}{{The Graph Isomorphism Problem}},
\newblock \bibinfo{journal}{Communications of the ACM} \bibinfo{volume}{63} (\bibinfo{year}{2020}) \bibinfo{pages}{128--134}.
\bibitem[{Bronstein et~al.(2021)Bronstein, Bruna, Cohen, and Velickovic}]{bronstein2021geometric}
\bibinfo{author}{M.~M. Bronstein}, \bibinfo{author}{J.~Bruna}, \bibinfo{author}{T.~Cohen}, \bibinfo{author}{P.~Velickovic},
\newblock \bibinfo{title}{{Geometric Deep Learning: Grids, Groups, Graphs, Geodesics, and Gauges}},
\newblock \bibinfo{journal}{CoRR} \bibinfo{volume}{abs/2104.13478} (\bibinfo{year}{2021}).
\bibitem[{Kiefer and McKay(2020)}]{kiefer2020iteration}
\bibinfo{author}{S.~Kiefer}, \bibinfo{author}{B.~D. McKay},
\newblock \bibinfo{title}{The iteration number of colour refinement},
\newblock \bibinfo{journal}{47th International Colloquium on Automata, Languages, and Programming (ICALP 2020)}  (\bibinfo{year}{2020}).
\bibitem[{Seo et~al.(2018)Seo, Defferrard, Vandergheynst, and Bresson}]{seo2018structured}
\bibinfo{author}{Y.~Seo}, \bibinfo{author}{M.~Defferrard}, \bibinfo{author}{P.~Vandergheynst}, \bibinfo{author}{X.~Bresson},
\newblock \bibinfo{title}{Structured sequence modeling with graph convolutional recurrent networks},
\newblock in: \bibinfo{booktitle}{International conference on neural information processing}, \bibinfo{organization}{Springer}, \bibinfo{year}{2018}, pp. \bibinfo{pages}{362--373}.
\bibitem[{Narayan and Roe(2018)}]{narayan2018learning}
\bibinfo{author}{A.~Narayan}, \bibinfo{author}{P.~H. Roe},
\newblock \bibinfo{title}{Learning graph dynamics using deep neural networks},
\newblock \bibinfo{journal}{IFAC-PapersOnLine} \bibinfo{volume}{51} (\bibinfo{year}{2018}) \bibinfo{pages}{433--438}.
\bibitem[{Niepert et~al.(2016)Niepert, Ahmed, and Kutzkov}]{niepert2016learning}
\bibinfo{author}{M.~Niepert}, \bibinfo{author}{M.~Ahmed}, \bibinfo{author}{K.~Kutzkov},
\newblock \bibinfo{title}{Learning convolutional neural networks for graphs},
\newblock in: \bibinfo{booktitle}{International conference on machine learning}, \bibinfo{organization}{PMLR}, \bibinfo{year}{2016}, pp. \bibinfo{pages}{2014--2023}.
\bibitem[{Taheri et~al.(2019)Taheri, Gimpel, and Berger-Wolf}]{taheri2019learning}
\bibinfo{author}{A.~Taheri}, \bibinfo{author}{K.~Gimpel}, \bibinfo{author}{T.~Berger-Wolf},
\newblock \bibinfo{title}{Learning to represent the evolution of dynamic graphs with recurrent models},
\newblock in: \bibinfo{booktitle}{Companion proceedings of the 2019 world wide web conference}, \bibinfo{year}{2019}, pp. \bibinfo{pages}{301--307}.
\bibitem[{Hammer(2000)}]{hammer2000approximation}
\bibinfo{author}{B.~Hammer},
\newblock \bibinfo{title}{On the approximation capability of recurrent neural networks},
\newblock \bibinfo{journal}{Neurocomputing} \bibinfo{volume}{31} (\bibinfo{year}{2000}) \bibinfo{pages}{107--123}.
\bibitem[{Hamilton et~al.(2017)Hamilton, Ying, and Leskovec}]{hamilton2017representation}
\bibinfo{author}{W.~L. Hamilton}, \bibinfo{author}{R.~Ying}, \bibinfo{author}{J.~Leskovec},
\newblock \bibinfo{title}{Representation learning on graphs: Methods and applications},
\newblock \bibinfo{journal}{arXiv preprint arXiv:1709.05584}  (\bibinfo{year}{2017}).

\end{thebibliography}

\printcredits

\bio{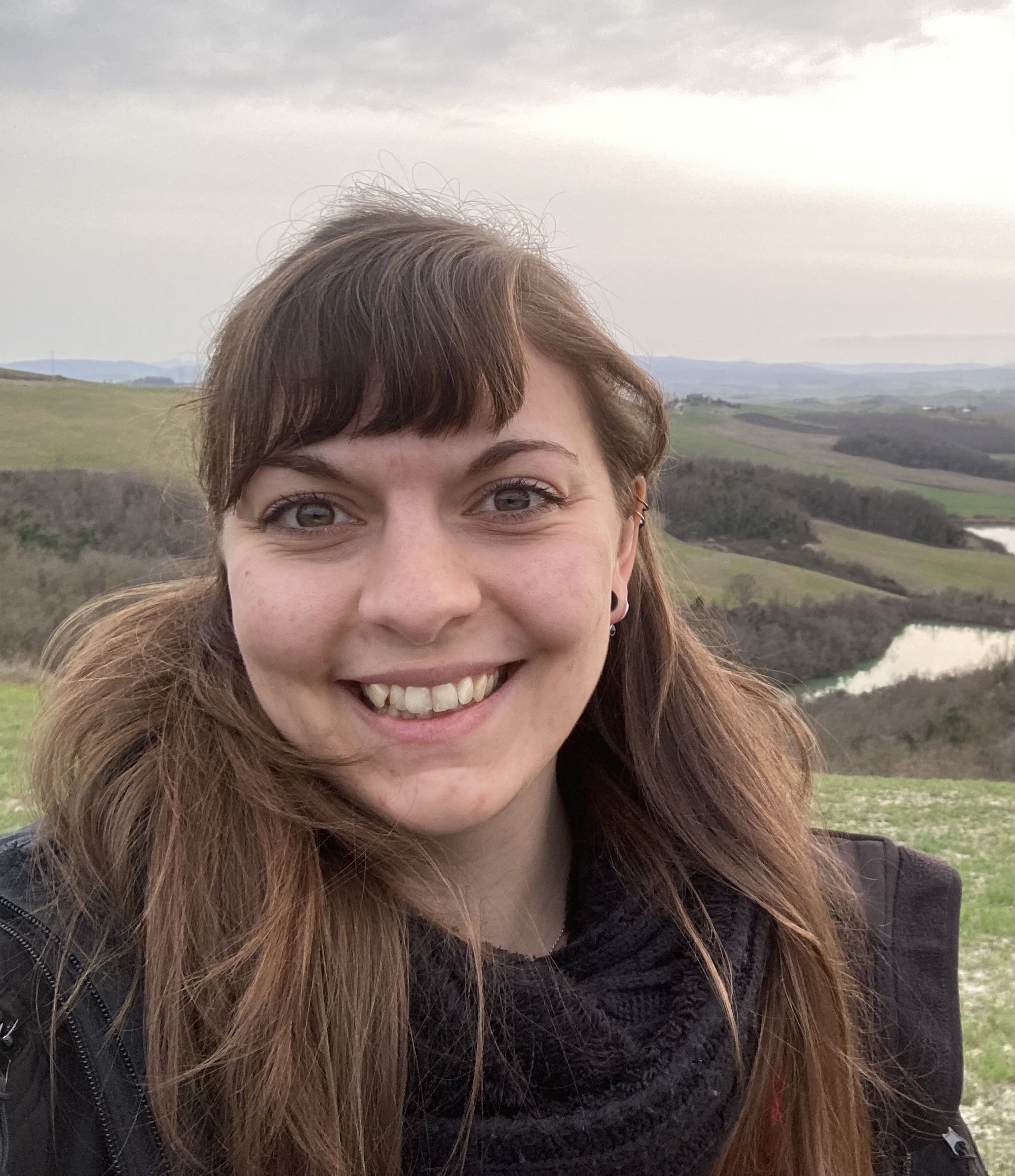}
Silvia Beddar-Wiesing is a Ph.D.~student in Computer Science at the University of Kassel. After her Bachelor's in Mathematics with a specialization in Mathematical Optimization in 2017 at the University of Kassel, she changed to Computer Science and absolved her Master's Degree with a specialization in Computational Intelligence and Data Analytics in 2020. In her research, she focuses on Machine Learning on structural-dynamic graphs, considering graph preprocessing, representation, and embedding techniques. Her further research interests cover topics from Timeseries Analysis, Graph Theory, Machine Learning, and Deep Neural Networks.
\endbio

\bio{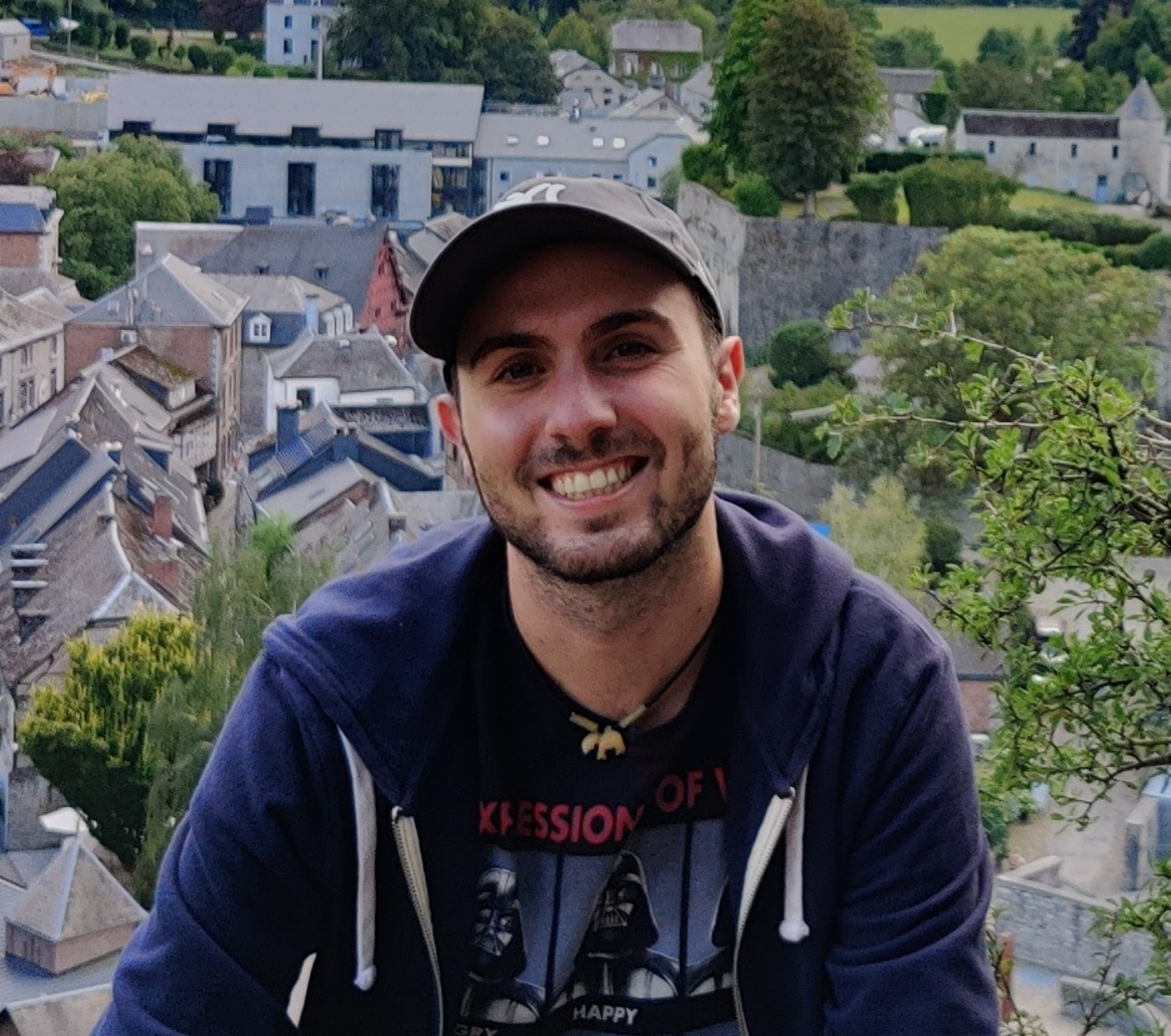}
Giuseppe Alessio D’Inverno is currently a Ph.D.~student in Information and Engineering Science at the University of Siena. He
took a Bachelor's Degree in Mathematics in 2018 and a Master's Degree cum
Laude in Applied Mathematics in 2020. His research interests are mainly
focused on the mathematical properties of Deep Neural Networks, the application of
Deep Learning to PDEs, graph theory, and geometric modeling.
\endbio
\bio{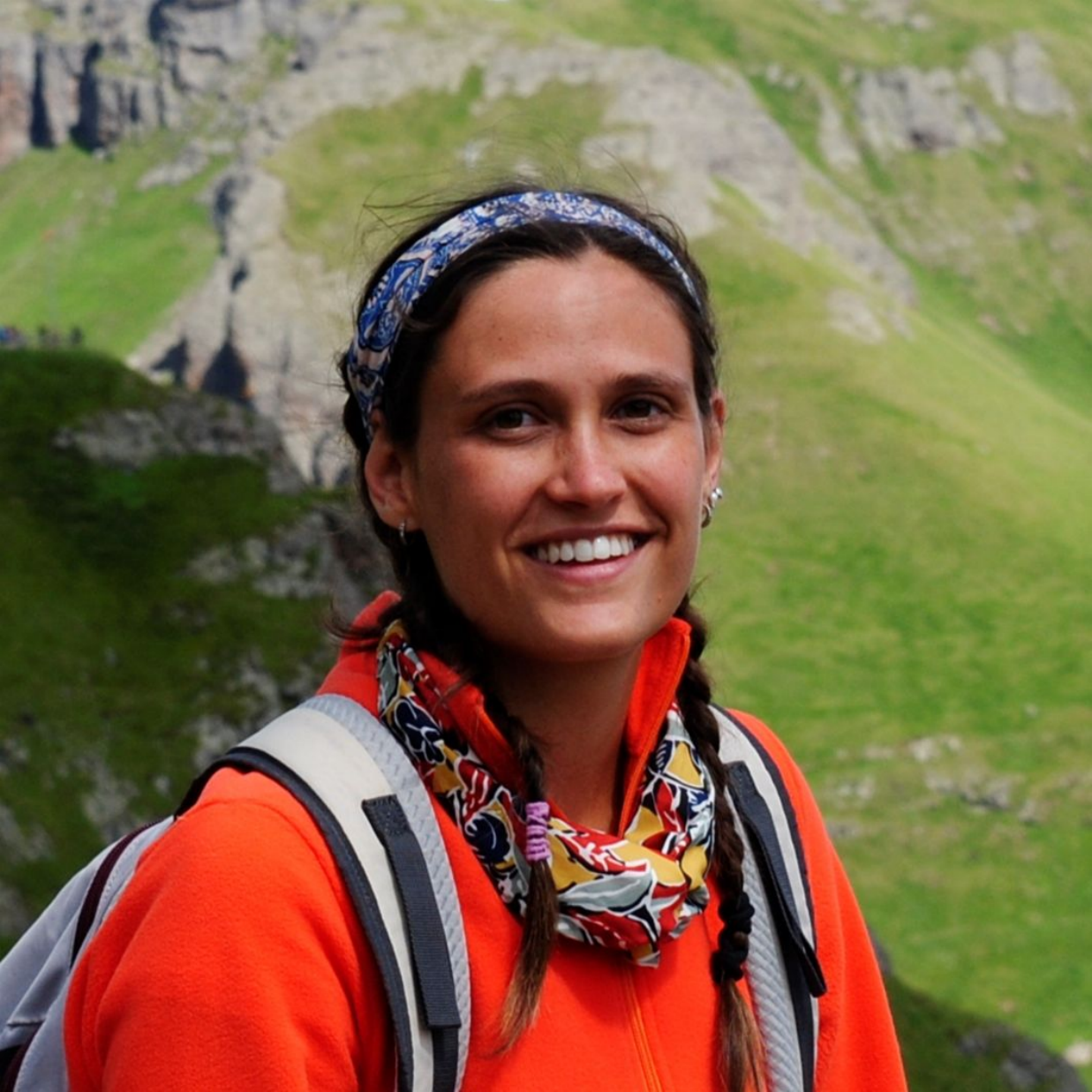}
Caterina Graziani is currently a Ph.D.~student in Information Engineering and Science at the University of Siena. In 2018 she received the Bachelor’s Degree Cum Laude in Mathematics from the University of Siena, and two years later, she obtained the Master’s Degree Cum Laude in Applied Mathematics at the University of Siena, defending a thesis called “LSTM for the prediction of translation speed based on Ribosome Profiling”. Her research interests are in Graph Theory, Graph Neural Networks, and Bioinformatics, with a particular interest in the mathematical foundations of Deep Neural Networks.
\endbio
\vspace{2cm}
\bio{figs/v2.pdf}
Veronica Lachi is a Ph.D.~student in Information and Engineering at the University of Siena. She graduated in Applied Mathematics with Honors in 2020; in 2018, she took a bachelor's degree in Economics with Honors. Her research interests mainly focus on Graph Theory, Graph Neural Networks, Community Detection in Graphs and Networks, and  mathematical properties of Deep Neural Networks.
\endbio
\bio{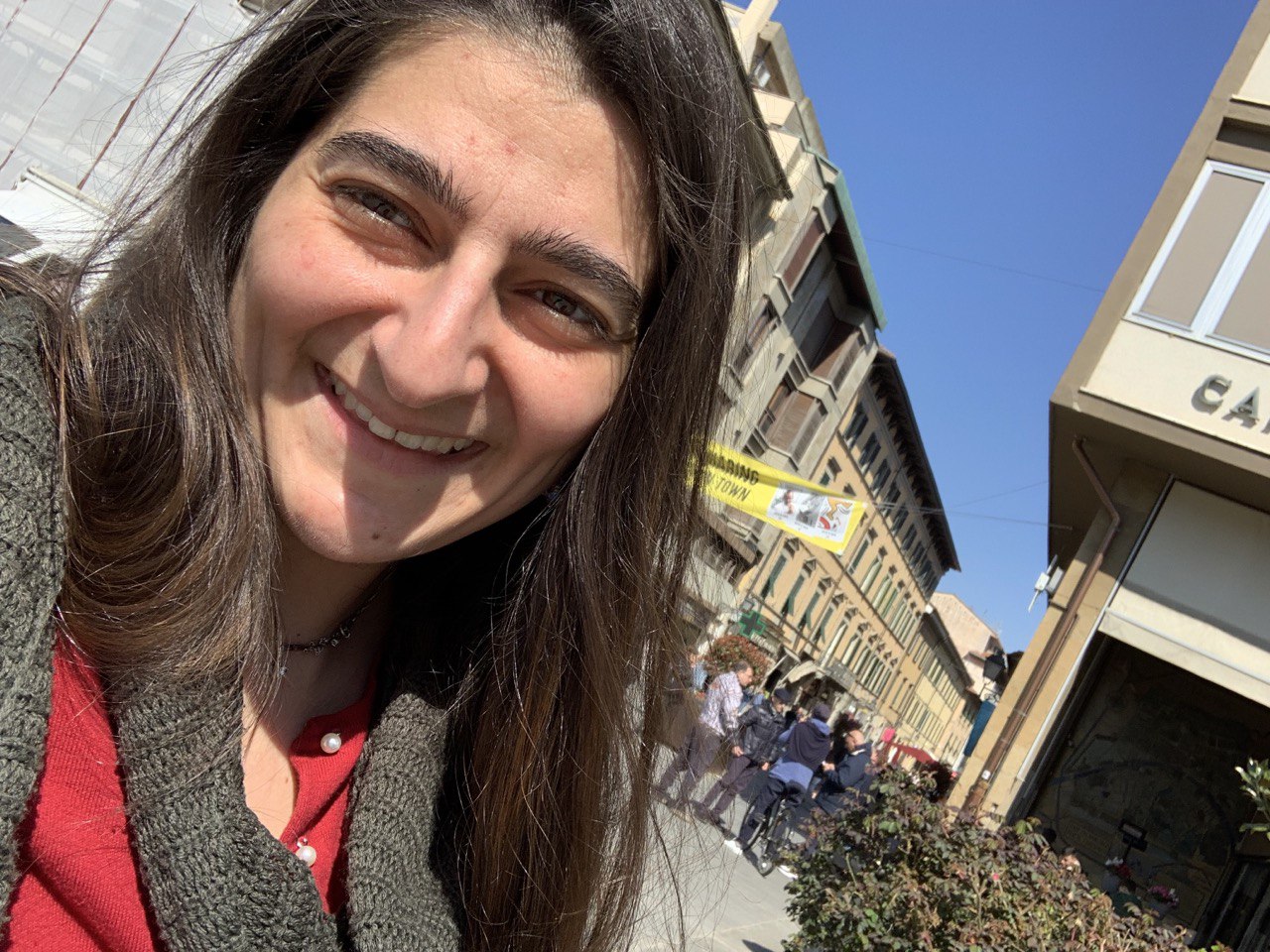}
Alice Moallemy-Oureh is currently a Ph.D.~candidate at the University of Kassel, Department of Electrical Engineering and Computer Science. 
Her topic addresses developing Graph Neural Networks designed to master the handling of attribute-dynamic graphs in continuous time representation.
Prior, she graduated with an M.Sc.~in Mathematics and its applications to Computer Science from the University of Kassel. Her specialization in Mathematics lies in Algorithmic Commutative Computer Algebra and Geometry. In Computer Science, she focused on Logic and Data Analytics.
Further interests cover topics from Graph Theory, Computer Algebra, IT-Security, Geometry, Pattern Recognition, and mathematical properties of Geometric Deep Learning.
\endbio
\bio{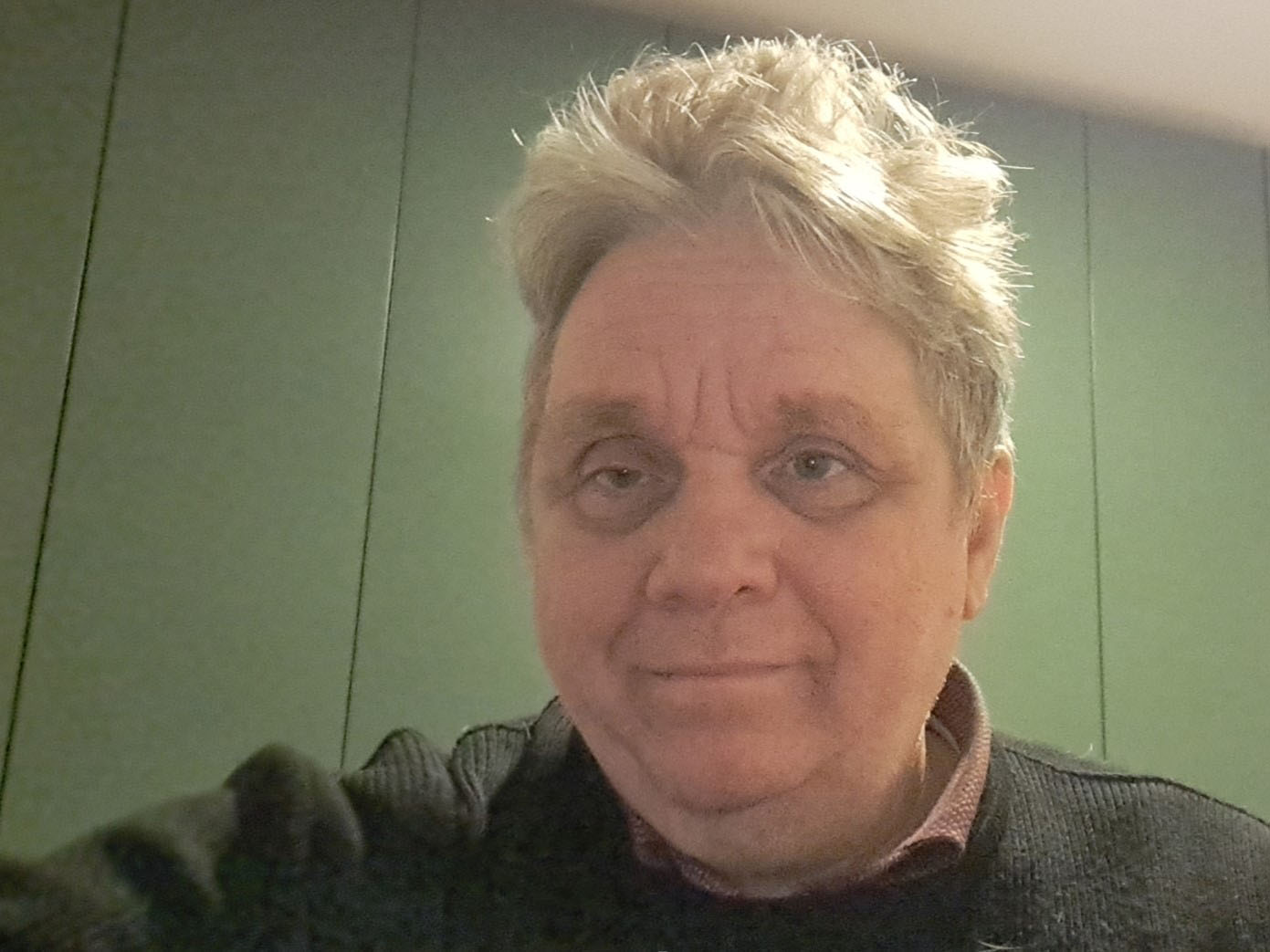}
Franco Scarselli received the Laurea degree with honors in Computer Science from the University of Pisa in 1989 and the Ph.D.~degree in Computer Science and Automation Engineering from the University of Florence in 1994. After the Ph.D.~he has been supported by foundations of private and public companies and by a postdoc of the University of Florence. In 1999, he moved to the University of Siena, where he was initially a research associate and, currently, an associate professor at the department of Information Engineering and Mathematics. 
Current theoretical research is mainly in machine learning, focusing on deep learning, graph neural networks,  and approximation theory. Research interests also include image understanding, information retrieval, and web applications.
\endbio
\bio{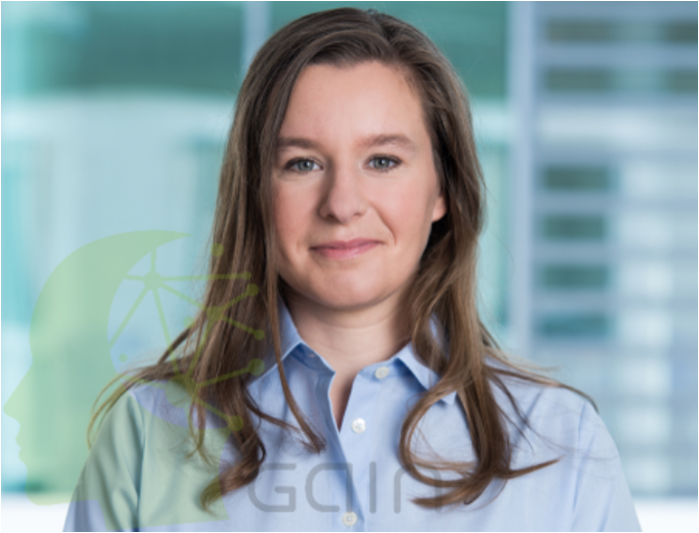}
Josephine Thomas received her degree in Theoretical Physics from TU Berlin. She gained her doctorate with a thesis on non-linear dimension reduction applied to a topic of complex network theory from TU Dresden.
Currently, she is the leader of the junior research group GAIN 'Graphs in Artificial Intelligence and Neural Networks at the University of Kassel. Her focus is on Graph Neural Networks for dynamic graphs. 
\endbio

\end{document}